\newcommand{\dtlinkcolor}{{0.8 0.8 1}} 
\definecolor{bleudefrance}{rgb}{0.19, 0.55, 0.91}
\definecolor{ao(english)}{rgb}{0.0, 0.5, 0.0}
\newcommand{\addcite}[0]{\ifthenelse{\boolean{showcomments}}
{\textcolor{purple}{(add cite(s)) }}{}}%
\newcommand{\addref}[0]{\ifthenelse{\boolean{showcomments}}
{\textcolor{purple}{(add ref) }}{}}%
\newcommand{\enrique}[1]{  \ifthenelse{\boolean{showcomments}}
{\todo[inline,color=bleudefrance]{Enrique: #1}}{}}
\newcommand{\rene}[1]{  \ifthenelse{\boolean{showcomments}}
{\todo[inline,color=cyan]{Ren\'e: #1}}{}}
\newcommand{\emmargin}[1]{\ifthenelse{\boolean{showcomments}}{\marginpar{\color{bleudefrance}\tiny EM: #1}}{}}
\newcommand{\hmmargin}[1]{\ifthenelse{\boolean{showcomments}}{\marginpar{\color{orange}\tiny HM: #1}}{}}
\newcommand{\hancheng}[1]{  \ifthenelse{\boolean{showcomments}}
{\todo[inline,color=orange]{Hancheng: #1}}{}}
\newcommand{\hl}[1]{\ifthenelse{\boolean{showcomments}}
{\textcolor{red}{#1}}{#1}}
\newcommand{\aem}[1]{
\ifthenelse{\boolean{showedits}}
{\added[id=EM]{#1}}
{\!#1\hspace{-4.75pt}}
}
\newcommand{\repem}[2]{
\ifthenelse{\boolean{showedits}}
{\replaced[id=EM]{#1}{#2}}
{\!#1\hspace{-4.75pt}}
}
\newcommand{\dem}[1]{
\ifthenelse{\boolean{showedits}}
{\deleted[id=EM]{#1}}
{}
}
\newcommand{\tr}{\mathrm{tr}}
\newcommand{\sign}{\mathrm{sign}}
\newtheorem{theorem}{Theorem}
\newenvironment{customlem}[1]
  {\innercustomlem}
  {\endinnercustomlem}
\newtheorem{lemma}{Lemma}
\newtheorem{assumption}{Assumption}
\newtheorem{definition}{Definition}
\newtheorem{remark}{Remark}
\title{Early Neuron Alignment in Two-layer ReLU Networks with Small Initialization}
\author{%
  Hancheng Min\footnotemark[1], Enrique Mallada\footnotemark[2], and Ren\'e Vidal\footnotemark[1] \\
  \\
  \footnotemark[1] Center for Innovation in Data Engineering and Science, University of Pennsylvania \\
  \footnotemark[2] Electrical and Computer Engineering, Johns Hopkins University 
  \\
}
\begin{document}

\maketitle
\begin{abstract}
    This paper studies the problem of training a two-layer ReLU network for binary classification using gradient flow with small initialization. We consider a training dataset with well-separated input vectors: Any pair of input data with the same label are positively correlated, and any pair with different labels are negatively correlated. Our analysis shows that, during the early phase of training, neurons in the first layer try to align with either the positive data or the negative data, depending on its corresponding weight on the second layer. A careful analysis of the neurons' directional dynamics allows us to provide an $\mathcal{O}(\frac{\log n}{\sqrt{\mu}})$ upper bound on the time it takes for all neurons to achieve good alignment with the input data, where $n$ is the number of data points and $\mu$ measures how well the data are separated. After the early alignment phase, the loss converges to zero at a $\mathcal{O}(\frac{1}{t})$ rate, and the weight matrix on the first layer is approximately low-rank. Numerical experiments on the MNIST dataset illustrate our theoretical findings.
\end{abstract}

\section{Introduction}
\vspace{-1mm}
Neural networks have shown excellent empirical performance in many application domains such as vision~\citep{krizhevsky2012imagenet}, speech~\citep{hinton2012deep} and video games~\citep{silver2016mastering}. Despite being highly overparametrized, networks trained by gradient descent with random initialization and without explicit regularization enjoy good generalization performance. One possible explanation for this phenomenon is the implicit bias or regularization induced by first-order algorithms under certain initialization assumptions. For example, first-order methods applied to (deep) matrix factorization models may produce solutions that have low nuclear norm~\citep{gunasekar2017implicit} and/or low rank~\citep{arora2019implicit}, and similar phenomena have been observed for deep tensor factorization~\citep{razin2022implicit}. Moreover, prior work such as \citep{saxe2014exact,stoger2021small} has found that deep linear networks sequentially learn the dominant singular values of the input-output correlation matrix. 

It is widely known that these sparsity-inducing biases can often be achieved by small initialization. This has motivated a series of works that theoretically analyze the training dynamics of first-order methods for neural networks with small initialization. For linear networks, the implicit bias of small initialization has been studied in the context of linear regression~\citep{saxe2014exact,Gidel2019,mtvm21icml,varre2023on} and matrix factorization~\citep{gunasekar2017implicit,arora2019implicit,li2018small,li2021towards,stoger2021small,yaras2023law,soltanolkotabi2023implicit}. Recently, the effect of small initialization has been studied for two-layer ReLU networks~\citep{maennel2018gradient,lyu2021gradient,phuong2021inductive,boursier2022gradient}. For example, \citet{maennel2018gradient} observes that during the early stage of training, neurons in the first layer converge to one out of finitely many directions determined by the dataset. Based on this observation, \citet{phuong2021inductive} shows that in the case of well-separated data, where any pair of input data with the same label are positively correlated and any pair with different labels are negatively correlated, there are only two directions the neurons tend to converge to: the positive data center and the negative one. Moreover, \citet{phuong2021inductive} shows that if such directional convergence holds, then the loss converges, and the resulting first-layer weight matrix is low-rank. However, directional convergence is assumed in their analysis; there is no explicit characterization of how long it takes to achieve directional convergence and how the time to convergence depends on the initialization scale. 
\paragraph{Paper contributions:} In this paper, we provide a complete analysis of the dynamics of gradient flow for training a two-layer ReLU network on well-separated data with small initialization. Specifically, we show that if the initialization is sufficiently small, during the early phase of training the neurons in the first layer try to align with either the positive data or the negative data, depending on its corresponding weight on the second layer. Moreover, through a careful analysis of the neuron's directional dynamics we show that the time it takes for all neurons to achieve good alignment with the input data is upper bounded by $\mathcal{O}(\frac{\log n}{\sqrt{\mu}})$, where $n$ is the number of data points and $\mu$ measures how well the data are separated. We also show that after the early alignment phase the loss converges to zero at a $\mathcal{O}(\frac{1}{t})$ rate and that the weight matrix on the first layer is approximately low-rank. 


\paragraph{Notations:} We denote the Euclidean norm of a vector $x$ by $\|x\|$, the inner product between the vectors $x$ and $y$ by $\lan x,y\ran=x^\top y$, and the cosine of the angle between them as $\cos(x,y)=\langle \frac{x}{\|x\|},\frac{y}{\|y\|}\rangle$. For an $n\by m$ matrix $A$, we let $A^\top$ denote its transpose. We also let $\|A\|_2$ and $\|A\|_F$ denote the spectral norm and Frobenius norm of $A$, respectively. For a scalar-valued or matrix-valued function of time, $F(t)$, we let $\dot{F}=\dot{F}(t)=\frac{d}{dt}F(t)$ denote its time derivative. Furthermore, we define $\one_A$ to be the indicator for a statement $A$: $\one_A=1$ if $A$ is true and $\one_A=0$ otherwise. We also let $I$ denote the identity matrix, and $\mathcal{N}(\mu,\sigma^2)$ denote the normal distribution with mean $\mu$ and variance $\sigma^2$
.

\section{Preliminaries}
In this section, we first discuss problem setting. We then present some key ingredients for analyzing the training dynamics of ReLU networks under small initialization, and discuss some of the weaknesses/issues from prior work.

\subsection{Problem setting}\label{ssec_setting}
We are interested in a binary classification problem with dataset $[x_1,\cdots,x_n]\in\mathbb{R}^{D\times n}$ (input data) and $[y_1,\cdots,y_n]^\top \in \{-1,+1\}^n$ (labels). For the classifier, $f:\mathbb{R}^D\ra \mathbb{R}$, we consider a two-layer ReLU network:
\be
        f(x;W,v)=v^\top \sigma (W^\top x)=\sum\nolimits_{j=1}^hv_j\sigma(w_j^\top x)\,,
\ee
parametrized by network weights $W:=[w_1,\cdots,w_h]\in\mathbb{R}^{D\times h},v:=[v_1,\cdots,v_h]^\top \in\mathbb{R}^{h\times 1}$,
    where $\sigma(\cdot)=\max\{\cdot,0\}$ is the ReLU activation function. We aim to find the network weights that minimize the training loss $
    \mathcal{L}(W,v)=\sum\nolimits_{i=1}^n \ell(y_i,f(x_i;W,v))$, where $\ell: \mathbb{R}\times \mathbb{R}\ra \mathbb{R}_{\geq 0}$ is either the exponential loss $\ell(y,\hat{y})=\exp(-y\hat{y})$ or the logistic loss $\ell(y,\hat{y})=\log(1+\exp(-y\hat{y}))$. The network is trained via the gradient flow (GF) dynamics
    \be\label{eq_gf}
        \dot{W}\in \partial_W\mathcal{L}(W,v),\ \dot{v}\in \partial_v\mathcal{L}(W,v),
    \ee
    where $\partial_W\mathcal{L},\partial_v\mathcal{L}$ are Clark sub-differentials of $\mathcal{L}$. Therefore, \eqref{eq_gf} is a differential inclusion~\citep{bolte2010characterizations}. For simplicity of presentation, instead of directly working on this differential inclusion, our theoretical results will be stated for the Caratheodory solution~\citep{reid1971ode} of \eqref{eq_gf} when the ReLU subgradient is fixed as $\sigma'(x)=\one_{x>0}$\footnote{In Appendix \ref{app_diff}, we discuss how our results can be extended to the solution to differential inclusion.}. In Appendix \ref{app_cara}, we show that under the data assumption of our interest (to be introduced later), the Caratheodory solution (s) $\{W(t),v(t)\}$ exists globally for all $t\in [0,\infty)$, which we call the solution (s) of \eqref{eq_gf} throughout this paper.

    To initialize the weights, we consider the following initialization scheme. First, we start from a weight matrix $W_0\in\mathbb{R}^{D\times h}$
    , and then and then initialize the weights as
    \be
        W(0)=\epsilon W_0,\quad\ v_j(0)\in\{\|w_j(0)\|,-\|w_j(0)\|\}, \forall j\in[h]\,.\label{eq_init}
    \ee
    That is, the weight matrix $W_0$ determines the initial shape of the first-layer weights $W(0)$ and we use $\epsilon$ to control the initialization scale and we are interested in the regime where $\epsilon$ is sufficiently small. For the second layer weights $v(0)$, 
    each $v_j(0)$ has magnitude $\|w_j(0)\|$ and we only need to decide its sign. Our results in later sections are stated for a deterministic choice of $\epsilon, W_0$, and $v(0)$, then we comment on the case where $W_0$ is chosen randomly via some distribution.
    
    The resulting weights in \eqref{eq_init} are always "balanced", i.e., $v_j^2(0)-\|w_j(0)\|^2=0,\forall j\in[h]$, because $v_j(0)$ can only take two values: either $\|w_j(0)\|$ or $-\|w_j(0)\|$. More importantly, under GF~\eqref{eq_gf}, this balancedness is preserved~\citep{Du&Lee}: $v_j^2(t)-\|w_j(t)\|^2=0,\forall t\geq 0, \forall j\in[h]$. In addition, it is shown in~\citet{boursier2022gradient} that $\sign(v_j(t))=\sign(v_j(0)), \forall t\geq 0,\forall j\in[h]$, and the dynamical behaviors of neurons will be divided into two types, depending on $\sign(v_j(0))$.

    
    \begin{remark}
        For our theoretical results, the balancedness condition is assumed for technical purposes: it simplifies the dynamics of GF and thus the analysis. It is a common assumption for many existing works on both linear~\citep{arora2018optimization} and nonlinear~\citep{phuong2021inductive,boursier2022gradient} neural networks. For the experiments in Section \ref{sec_num}, we use a standard Gaussian initialization (not balanced) with a small variance to validate our theoretical findings.
    \end{remark}
    \begin{remark}
        Without loss of generality, we consider the case where all columns of $W_0$ are nonzero, i.e., $\|w_j(0)\|>0,\forall j\in[h]$. We make this assumption because whenever $w_j(0)=0$, we also have $v_j(0)=0$ from the balancedness, which together would imply $\dot{v}_j\equiv 0,\dot{w}_j\equiv 0$ under gradient flow. As a result, $w_j$ and $v_j$ would remain zero and thus they could be ignored in the convergence analysis.
    \end{remark}
    \begin{remark}
    Our main results will depend on both $\max_j\|w_j(0)\|$ and $\min_j\|w_j(0)\|$, as shown in our proofs in Appendices \ref{app_pf_thm_align} and \ref{app_pf_thm_conv}. Therefore, whenever we speak of small initialization, we will say that $\epsilon$ is small without worrying about the scale of $W_0$, which is already considered in our results.
    \end{remark}

    \subsection{Neural alignment with small initialization: an overview}\label{ssec_conv_pre}

    Prior work argues that the gradient flow dynamics~\eqref{eq_gf} under small initialization~\eqref{eq_init}, i.e., when $\epsilon$ is sufficiently small, can be roughly described as "align then fit"~\citep{maennel2018gradient,boursier2022gradient}
    : During the early phase of training, every neuron $w_j,j\in[h]$ keeps a small norm $\|w_j\|^2\ll 1$ while changing their directions $\frac{w_j}{\|w_j\|}$ significantly in order to locally maximize a "signed coverage"~\citep{maennel2018gradient} of itself w.r.t. the training data.
    After the alignment phase, part of (potentially all) the neurons grow their norms in order to fit the training data, and the loss decreases significantly. The analysis for the fitting phase generally depends on the resulting neuron directions at the end of the alignment phase~\citep{phuong2021inductive,boursier2022gradient}. 
    However, prior analysis of the alignment phase either is based on a vanishing initialization argument that can not be directly translated into the case finite but small initialization~\citep{maennel2018gradient} or assumes some stringent assumption on the data~\citep{boursier2022gradient}. 
    In this section, we provide a brief overview of the existing analysis for neural alignment and then point out several weaknesses in prior work.
    
    \begin{wrapfigure}{r}{0.4\textwidth}
    \vspace{-3mm}
    \centering
    \includegraphics[width=0.4\textwidth]{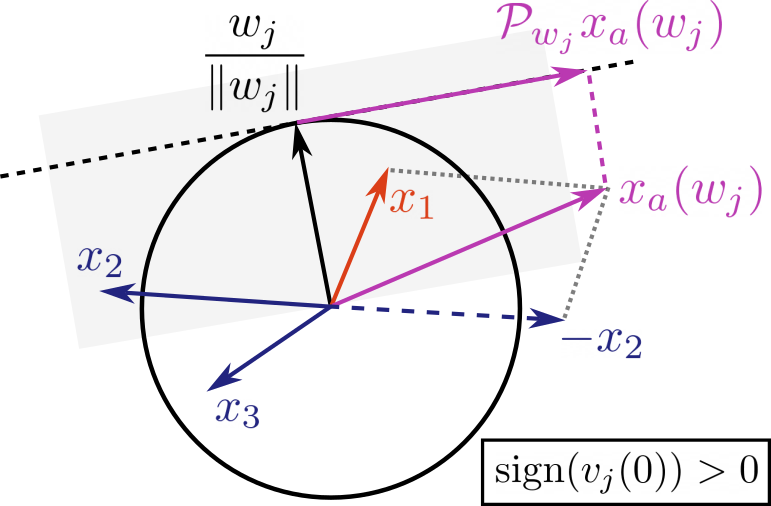}
      \caption{Illustration of $\frac{d}{dt}\frac{w_j}{\|w_j\|}$ during the early alignment phase. $x_1$ has $+1$ label, and $x_2,x_3$ have $-1$ labels, $x_1,x_2$ lie inside the halfspace $\lan x,w_j\ran>0$ (gray shaded), thus $x_a(w_j)=x_1-x_2$. Since $\sign(v_j(0))>0$, GF pushes $w_j$ towards $x_a(w_j)$.}
      \label{fig_dir_deriv}
        \vspace{-0.8cm}
    \end{wrapfigure}
    \paragraph{Prior analysis of the alignment phase:}
    Since during the alignment phase all neurons have small norm, prior work mainly focuses on the directional dynamics, i.e.,  $\frac{d}{dt}\frac{w_j}{\|w_j\|}$, of the neurons. The analysis relies on the following approximation of the dynamics of every neuron $w_j,j\in[h]$:
    \be
        \frac{d}{dt} \frac{w_j}{\|w_j\|}\simeq \sign(v_j(0)) \mathcal{P}_{w_j(t)} x_a(w_j)\,,\label{eq_dir_flow}
    \ee
    where $\mathcal{P}_{w}=I-\frac{ww^\top }{\|w\|^2}$ is the projection onto the subspace orthogonal to $w$ and 
    \be
    x_a(w):= \sum\nolimits_{i: \lan x_i,w\ran> 0}y_ix_i
    \ee
    denotes the signed combination of the data points activated by $w$. 
First of all, \eqref{eq_dir_flow} implies that the dynamics $\frac{w_j}{\|w_j\|}$ are approximately decoupled, and thus one can study each $\frac{w_j}{\|w_j\|}$ separately. Moreover, as illustrated in Figure \ref{fig_dir_deriv}, if $\sign(v_j(0))>0$, the flow \eqref{eq_dir_flow} pushes $w_j$ towards $x_a(w_j)$, since $w_j$ is attracted by its currently activated positive data and repelled by its currently activated negative data. Intuitively, during the alignment phase, a neuron $w_j$ with $\sign(v_j(0))>0$ would try to find a direction where it can activate as much positive data and as less negative data as possible. If $\sign(v_j(0))<0$, the opposite holds.

Indeed,~\citet{maennel2018gradient} claims that the neuron $w_j$ would be aligned with some "extreme vectors", defined as vector $w\in\mathbb{S}^{D-1}$ that locally maximizes $\sum_{i\in[n]} y_i\sigma(\lan x_i,w\ran)$ (similarly, $w_j$ with $\sign(v_j(0))<0$ would be aligned with the local minimizer), and there are only finitely many such vectors.
The analysis is done under the limit $\epsilon\ra 0$, where the approximation in~\eqref{eq_dir_flow} is exact. 

\paragraph{Weakness in prior analyses:} Although~\citet{maennel2018gradient} provides great insights into the dynamical behavior of the neurons in the alignment phase, the validity of the aforementioned approximation for finite but small $\epsilon$ remains in question. First, one needs to make sure that the error $\lV \frac{d}{dt} \frac{w_j}{\|w_j\|}- \sign(v_j(0)) \mathcal{P}_{w_j}x_a(w_j)\rV$ is sufficiently small when $\epsilon$ is finite in order to justify \eqref{eq_dir_flow} as a good approximation. Second, the error bound needs to hold for the entire alignment phase.~\citet{maennel2018gradient} assumes $\epsilon\ra 0$; hence there is no formal error bound.
In addition, prior analyses on small initialization~\citep{stoger2021small,boursier2022gradient} suggest the alignment phase only holds for $\Theta(\log\frac{1}{\epsilon})$ time. Thus, the claim in~\citet{maennel2018gradient} would only hold if good alignment is achieved before the alignment phase ends. However, \citet{maennel2018gradient} provides no upper bound on the time it takes to achieve good alignment. Therefore, without a finite $\epsilon$ analysis, ~\citet{maennel2018gradient} fails to fully explain the training dynamics under small initialization. Understanding the alignment phase with finite $\epsilon$ requires additional quantitative analysis. To the best of our knowledge, this has only been studied under a stringent assumption that all data points are orthogonal to each other~\citep{boursier2022gradient}, or that there are effectively two data points~\cite{wang2023understanding}.
    
\paragraph{Goal of this paper:}
In this paper, we want to address some of the aforementioned issues by developing a formal analysis for the early alignment phase with a finite but small initialization scale $\epsilon$. We first discuss our main theorem that shows that a directional convergence can be achieved within bounded time under data assumptions that are less restrictive and have more practical relevance. Then, we discuss the error bound for justifying \eqref{eq_dir_flow} in the proof sketch of the main theorem.

\section{Convergence of Two-layer ReLU Networks with Small Initialization}

\begin{wrapfigure}{r}{0.4\textwidth}
\centering
  \vspace{-5mm}
\includegraphics[width=0.4\textwidth]{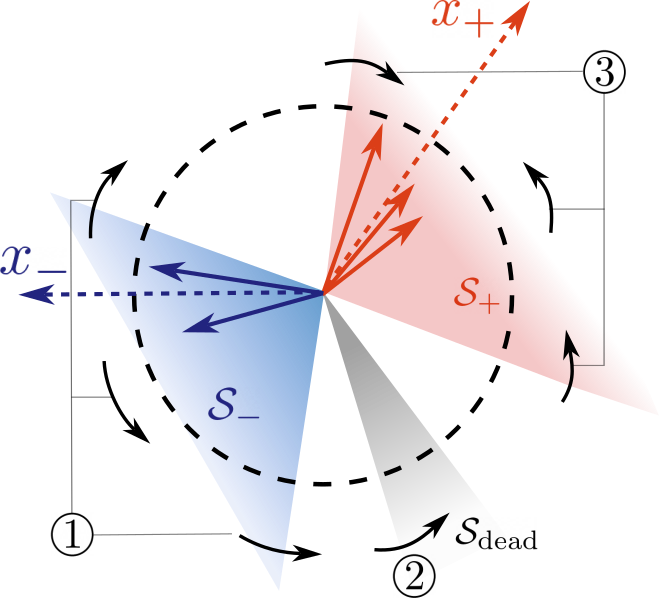}
  \vspace{-2mm}
  \caption{Neuron alignment under Assumption \ref{assump_data}. For neurons in $\mathcal{V}_+$, \raisebox{.5pt}{\textcircled{\raisebox{-.9pt} {1}}} if it lies inside $\mathcal{S}_-$, then it gets repelled by $x_-$ and escapes $\mathcal{S}_-$; Once outside $\mathcal{S}_-$, it may \raisebox{.5pt}{\textcircled{\raisebox{-.9pt} {2}}} get repelled by some negative data and eventually enters $\mathcal{S}_\text{dead}$, or may \raisebox{.5pt}{\textcircled{\raisebox{-.9pt} {3}}} gain some activation on positive data and eventually enter $\mathcal{S}_+$, then get constantly attracted by $x_+$.}
  \label{fig_dir_flow}
\end{wrapfigure}
Our main results require the following data assumption:
\begin{assumption}\label{assump_data}
Any pair of data with the same (different) label is positively (negatively) correlated, i.e.,
        $\min_{i,j}\frac{\lan x_iy_i,x_jy_j \ran}{\|x_i\|\|x_j\|}\!:=\!\mu\!>0.$
\end{assumption}
Given a training dataset, we define $\mathcal{S}_+:=\{z\in \mathbb{R}^D:\one_{\lan x_i,z\ran> 0}=\one_{y_i>0},\forall i\}$ to be the cone in $\mathbb{R}^n$ such that whenever neuron $w\in\mathcal{S}_+$, $w$ is activated exclusively by every $x_i$ with a positive label (see Figure \ref{fig_dir_flow}). Similarly, for $x_i$ with negative labels, we define $\mathcal{S}_-:=\{z\in \mathbb{R}^D:\one_{\lan x_i,z\ran> 0}=\one_{y_i<0},\forall i\}$. Finally, we define $\mathcal{S}_{\text{dead}}:=\{z\in\mathbb{R}^D:\lan z,x_i\ran\leq 0,\forall i\}$ to be the cone such that whenever $w\in \mathcal{S}_{\text{dead}}$, no data activates $w$. Given Assumption \ref{assump_data}, it can be shown (see Appendix \ref{app_pf_thm_align}) that $\mathcal{S}_+$ ($\mathcal{S}_-$) is a non-empty, convex cone that contains all positive data $x_i,i\in\mathcal{I}_+$ (negative data $x_i,i\in\mathcal{I}_-$). $\mathcal{S}_\text{dead}$ is a convex cone as well, but not necessarily non-empty. We illustrate these cones in Figure \ref{fig_dir_flow} given some training data (red solid arrow denotes positive data and blue denotes negative ones). 


Moreover, given some initialization from \eqref{eq_init}, we define $\mathcal{I}_+:=\{i\in[n]:y_i>0\}$ to be the set of indices of positive data, and $\mathcal{I}_-:=\{i\in[n]:y_i<0\}$ for negative data. We also define $\mathcal{V}_+:=\{j\in[h]:\sign(v_j(t))>0\}$ to be the set of indices of neurons with positive second-layer entry and $\mathcal{V}_-:=\{j\in[h]:\sign(v_j(t))<0\}$ for neurons with negative second-layer entry. Note that, as discussed in Section \ref{ssec_setting}, $\sign(v_j(t))$ does not change under balanced initialization, thus $\mathcal{V}_+, \mathcal{V}_-$ are time invariant.
Further, as we discussed in Section \ref{ssec_conv_pre} about the early alignment phase, we expect that every neuron in $\mathcal{V}_+$ will drift toward the region where positive data concentrate and thus eventually reach $\mathcal{S}_+$ or $\mathcal{S}_\text{dead}$, as visualized in Figure \ref{fig_dir_flow} ($x_+,x_-$ shown in the figure are defined in Assumption \ref{assump_non_degen}).   Similarly, all neurons in $\mathcal{V}_-$ would chase after negative data and thus  reach $\mathcal{S}_-$ or $\mathcal{S}_\text{dead}$. Our theorem precisely characterizes this behavior.  

\subsection{Main results}\label{ssec_main}
Our main results are stated for solutions to the GF dynamics \eqref{eq_gf}. However, in rare cases, solutions to \eqref{eq_gf} could be non-unique and there are potentially ``irregular solutions" (please refer to Appendix \ref{app_sec_non_unique_cara} for details) that allow some neurons to regain activation even after becoming completely deactivated in $\mathcal{S}_{\mathrm{dead}}$. We deem such irregular solutions of little practical relevance since when implementing gradient descent algorithm in practice, neurons in $\mathcal{S}_{\mathrm{dead}}$ 
would receive zero update and thus stay in $\mathcal{S}_{\mathrm{dead}}$. Therefore, our main theorem concerns some \emph{regular} solutions to \eqref{eq_gf} (the existence of such solutions is shown in Appendix \ref{app_sec_exist_cara}), as defined below.

\begin{definition}\label{def_reg}
    A solution $\{W(t),v(t)\}$ to \eqref{eq_gf} is \emph{regular} if it satisfy that $w_j(t_0)\in\mathcal{S}_{\mathrm{dead}}$ for some $j\in[h]$ and some $t_0\geq 0$ implies $w_j(t)\in\mathcal{S}_{\mathrm{dead}},\forall t\geq t_0$.
\end{definition}

Before we present our main theorem, we also need the following assumption on the initialization, for technical reasons, essentially asking the neuron $w_j(0), j\in\mathcal{V}_+$ (or $w_j(0), j\in\mathcal{V}_-$, resp.) to not be completely aligned with $x_+$ (or $x_-$, resp.).
\begin{assumption}\label{assump_non_degen}
    The initialization from \eqref{eq_init} satisfies that $\max_{j\in \mathcal{V}_+}\langle \frac{w_j(0)}{\|w_j(0)\|},\frac{x_-}{\|x_-\|}\rangle<1$, and
    
    $\max_{j\in \mathcal{V}_-}\langle \frac{w_j(0)}{\|w_j(0)\|},\frac{x_+}{\|x_+\|}\rangle<1$, where $x_+=\sum_{i\in\mathcal{I}_+}x_i$ and $x_-=\sum_{i\in\mathcal{I}_-}x_i$. 
\end{assumption}

We are now ready to present our main result (given Assumption \ref{assump_data} and Assumption \ref{assump_non_degen}):
\begin{theorem}\label{thm_conv_main}
    Given some initialization from \eqref{eq_init}, if $\epsilon=\mathcal{O}( \frac{1}{\sqrt{h}}\exp( -\frac{n}{\sqrt{\mu}}\log n))$, then for any regular solution to the gradient flow dynamics \eqref{eq_gf}, we have
    \begin{enumerate}[leftmargin=0.5cm]
        \item (Directional convergence in early alignment phase) $\exists t_1=\mathcal{O}(\frac{\log n}{\sqrt{\mu}})$, such that 
        \begin{itemize}[leftmargin=0.2cm]
            \item $\forall j\in\mathcal{V}_+$, either $w_j(t_1)\in \mathcal{S}_+$ or $w_j(t_1)\in \mathcal{S}_{\text{dead}}$. Moreover, if $\max_{i\in\mathcal{I}_+}\lan w_j(0),x_i\ran>0$, then $w_j(t_1)\in \mathcal{S}_+$.
            \item $\forall j\in\mathcal{V}_-$, either $w_j(t_1)\in \mathcal{S}_-$ or $w_j(t_1)\in \mathcal{S}_{\text{dead}}$. Moreover, if $\max_{i\in\mathcal{I}_-}\lan w_j(0),x_i\ran>0$, then $w_j(t_1)\in \mathcal{S}_-$.
        \end{itemize}
        \item (Final convergence and low-rank bias) $\forall t\geq t_1$ and $\forall j\in[h]$, neuron $w_j(t)$ stays within $\mathcal{S}_+$ ($\mathcal{S}_-$, or $\mathcal{S}_{\text{dead}}$) if $w_j(t_1)\in\mathcal{S}_+$ ($\mathcal{S}_-$, or $\mathcal{S}_{\text{dead}}$ resp.). Moreover, if both $\mathcal{S}_+$ and $\mathcal{S}_-$ contains at least one neuron at time $t_1$, then
        \begin{itemize}[leftmargin=0.2cm]
            \item $\exists\alpha>0$ and $\exists t_2$ with $t_1\leq t_2=\Theta (\frac{1}{n}\log\frac{1}{\sqrt{h}\epsilon})$, such that $\mathcal{L}(t)\leq \frac{\mathcal{L}(t_2)}{\mathcal{L}(t_2)\alpha (t-t_2)+1},\ \forall t\geq t_2$.
            \item As $t\ra \infty$, $\|W(t)\|\ra \infty$ and $
                \|W(t)\|_F^2\leq 2\|W(t)\|^2_2+\mathcal{O}(\epsilon)$. Thus, the stable rank of $W(t)$ satisfies $\lim\sup_{t\ra \infty}\|W(t)\|_F^2/\|W(t)\|^2_2\leq 2$.
        \end{itemize}
    \end{enumerate}
\end{theorem}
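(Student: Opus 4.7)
The plan is to split the analysis into the early alignment phase ($t\le t_1$) and the subsequent fitting phase ($t\ge t_1$), each handled through tight control of neuron norms and directions. For Part 1, I would first establish a uniform norm bound on $\|w_j(t)\|$ over $[0,t_1]$: using balancedness and the crude estimate $\tfrac{d}{dt}\|w_j\|\le C\|w_j\|$ with $C$ depending on $\max_i\|x_i\|$ and $\mathcal{L}(0)$, conclude $\|w_j(t)\|\le \epsilon\|w_j(0)\|\, e^{C t_1}$, which stays $o(1)$ provided $\epsilon$ satisfies the smallness condition in the theorem. This smallness lets me decompose the true velocity as $\dot w_j = v_j\sum_{i:\langle x_i,w_j\rangle>0} y_i x_i + E_j(t)$, where the error $E_j$, coming from the nonlinear coupling through the network output $f$, is $o(\|w_j\|)$. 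This is precisely the regime in which the approximation \eqref{eq_dir_flow} of the directional dynamics is rigorously valid up to a small multiplicative perturbation.

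With the approximation justified, I would analyze the directional dynamics for a neuron $j\in\mathcal{V}_+$ (the $\mathcal{V}_-$ case is symmetric). The key quantity to track is $\cos(w_j,x_+)$, together with $\cos(w_j,x_-)$ to rule out a transient approach toward $-x_-$. Using Assumption \ref{assump_data}, whenever $w_j$ activates at least one positive data point and has not yet entered $\mathcal{S}_+$, a careful splitting of $\langle \mathcal{P}_{w_j} x_a(w_j), x_+/\|x_+\|\rangle$ shows it is bounded below by a positive constant multiple of $\sqrt{\mu}$: same-label terms contribute positively by separation, while any activated opposite-label terms contribute a controlled negative amount dominated by the $\sqrt{\mu}$ signal. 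This yields a differential inequality of the form $\tfrac{d}{dt}\cos(w_j,x_+) \ge c\sqrt{\mu}\,(1-\cos^2(w_j,x_+))$ on intervals of constant activation pattern, which integrates to show that $w_j$ reaches the boundary of $\mathcal{S}_+$ within time $\mathcal{O}(\log n/\sqrt{\mu})$. The only alternative is that $w_j$ loses all positive activation and enters $\mathcal{S}_{\mathrm{dead}}$, where regularity of the solution keeps it; Assumption \ref{assump_non_degen} excludes the degenerate initialization that would stall the neuron along $x_-$ from the start.

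For Part 2, I would first verify forward-invariance of $\mathcal{S}_+$, $\mathcal{S}_-$, and $\mathcal{S}_{\mathrm{dead}}$. For $w_j$ with $j\in\mathcal{V}_+$ inside $\mathcal{S}_+$, the activation pattern is frozen and $\dot w_j = v_j\sum_{i\in\mathcal{I}_+}(-\ell'_i)\,x_i$ is a positive combination of data in $\mathcal{S}_+$ (Assumption \ref{assump_data} makes $\mathcal{S}_+$ a convex cone containing the positive data), so invariance follows. In this frozen-activation regime the network acts as a balanced two-layer linear model on the correct-sign subset of data, and the strict separation gives a Polyak--\L ojasiewicz-type inequality $\|\nabla\mathcal{L}\|^2\ge \alpha\mathcal{L}^2$; integrating $\dot{\mathcal{L}} = -\|\nabla\mathcal{L}\|^2 \le -\alpha\mathcal{L}^2$ from $t_2$ produces the stated Bernoulli-type bound. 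The time $t_2=\Theta(\tfrac{1}{n}\log\tfrac{1}{\sqrt h\,\epsilon})$ is the time active neurons need to grow from $\epsilon$-scale to $\Theta(1)$ in norm. The low-rank claim follows because after alignment every active neuron is close in direction either to $x_+/\|x_+\|$ (for $j\in\mathcal{V}_+$) or to a single direction inside $\mathcal{S}_-$ (for $j\in\mathcal{V}_-$); stacking the at-most-two dominant rank-one blocks, and bounding the dead-neuron contribution by $\mathcal{O}(\epsilon)$ in Frobenius norm, yields $\|W(t)\|_F^2 \le 2\|W(t)\|_2^2 + \mathcal{O}(\epsilon)$.

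The main obstacle is the quantitative directional analysis in Part 1. Justifying \eqref{eq_dir_flow} uniformly over $[0,t_1]$ requires that neuron norms stay $o(1)$ for the entire $\mathcal{O}(\log n/\sqrt{\mu})$ duration, which is precisely what forces the exponential-in-$n/\sqrt{\mu}$ smallness on $\epsilon$ in the statement. Moreover, a neuron can cross up to $n$ activation hyperplanes $\{\langle x_i,\cdot\rangle=0\}$ during this phase and the piecewise-constant vector field changes at each crossing; ruling out oscillation or stalling requires a global monotone quantity, and the $\sqrt{\mu}$ in Assumption \ref{assump_data} is exactly the ingredient that supplies the strictly positive drift needed. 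Carefully accumulating the $\log n$ factor across these $\mathcal{O}(n)$ transitions, and handling neurons whose trajectory grazes $\mathcal{S}_{\mathrm{dead}}$ without entering it, appears to be the most delicate piece of the argument.
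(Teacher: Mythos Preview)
Your outline is broadly on target for Part 2 but has a genuine gap in Part 1, and your low-rank argument diverges from the paper's in a way that would require extra work.

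For Part 1, tracking $\cos(w_j,x_+)$ as a global Lyapunov function does not work as you describe. The derivative is (up to the approximation error) $(\cos(x_+,x_a)-\cos(x_+,w_j)\cos(x_a,w_j))\|x_a(w_j)\|$, and while $\cos(x_+,x_a)\ge\mu$ by Assumption~\ref{assump_data}, this difference can be arbitrarily small when $w_j$ is aligned with $x_a(w_j)$ but $x_a(w_j)\neq x_+$; so no uniform inequality of the form you state holds across activation patterns. More importantly, neither version of your differential inequality produces the $\log n$ factor: integrating gives a bound independent of $n$, and you yourself flag ``accumulating the $\log n$ factor'' as the delicate step without supplying the mechanism.

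The paper's mechanism is different and rests on an \emph{activation-monotonicity} step you do not identify. At any hyperplane $\langle x_i,w_j\rangle=0$ one shows $\frac{d}{dt}\langle w_j/\|w_j\|,x_iy_i/\|x_i\|\rangle>0$ for $j\in\mathcal{V}_+$, because $x_a(w_j)$ lies in the conic hull $K$ of $\{x_iy_i\}$ and hence $\langle x_iy_i,x_a(w_j)\rangle\ge\mu\|x_i\|\|x_a\|$. Consequently a neuron in $\mathcal{V}_+$ can only \emph{gain} activations on positive data and \emph{lose} activations on negative data, so it makes at most $n$ transitions and cannot oscillate. Within each fixed pattern one tracks $\cos(w_j,x_a(w_j))$ (not $\cos(w_j,x_+)$), using a uniform gap $1-\cos^2(w_j,x_a)\ge\xi>0$ valid while $w_j\notin\mathcal{S}_+\cup\mathcal{S}_-$, together with the norm bound $\|x_a(w_j)\|\ge\sqrt{\mu}\,n_a\,X_{\min}$ where $n_a$ is the number of currently active data. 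This yields a per-transition time of $\mathcal{O}\big(1/(\sqrt{\mu}\,n_a)\big)$, and summing along a worst-case path gives the harmonic sum $\sum_{k}1/k=\mathcal{O}(\log n)$---that is the origin of the $\log n$.

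For the low-rank bias, your route (arguing all active neurons in $\mathcal{S}_+$ are close to a single direction) requires a refined directional convergence beyond what Part 1 actually delivers (neurons merely enter $\mathcal{S}_+$, not necessarily align with $x_+$). The paper instead uses the conserved quantity $W_+^\top W_+ - v_+v_+^\top$ along gradient flow of the linear subnetwork after $t_1$; since this matrix has operator norm $\mathcal{O}(\epsilon)$ at $t_1$ by Lemma~\ref{lem_small_norm}, one reads off $\|W_+\|_F^2\le\|W_+\|_2^2+\mathcal{O}(\epsilon)$ directly, with no directional information needed.
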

We provide a proof sketch that highlights the technical novelty of our results in Section \ref{ssec_pf_sketch}. Our $\mathcal{O}(\cdot)$ notations hide additional constants that depend on the data and initialization, for which we refer readers to the complete proof of Theorem \ref{thm_conv_main} in Appendix \ref{app_pf_thm_align} and \ref{app_pf_thm_conv}. We make the following remarks:

\paragraph{Early neuron alignment:} The first part of the Theorem \ref{thm_conv_main} describes the configuration of \emph{all} neurons at the end of the alignment phase. Every neuron in $\mathcal{V}_+$ reaches either $\mathcal{S}_+$ or $\mathcal{S}_\text{dead}$ by $t_1$, and stays there for the remainder of training. Obviously, we care about those neurons reaching $\mathcal{S}_+$ as any neuron in $\mathcal{S}_\text{dead}$ does not contribute to the final convergence at all. Luckily, Theorem \ref{thm_conv_main} suggests that any neuron in $\mathcal{V}_+$ that starts with some activation on the positive data, i.e., it is initialized in the union of halfspaces $\cup_{i\in\mathcal{I}_+}\{w: \lan w,x_i\ran>0\}$, will eventually reach $\mathcal{S}_+$. 
A similar discussion holds for neurons in $\mathcal{V}_-$. We argue that randomly initializing $W_0$ ensures that with high probability, there will be at least a pair of neurons reaching $\mathcal{S}_+$ and $\mathcal{S}_-$ by time $t_1$ (please see the next remark).  Lastly, we note that it is possible that $\mathcal{S}_\text{dead}=\emptyset$, in which case every neuron reaches either $\mathcal{S}_+$ or $\mathcal{S}_-$. 

\paragraph{Merits of random initialization:} Our theorem is stated for a deterministic initialization \eqref{eq_init} given an initial shape $W_0$. In practice, one would use random initialization to find a $W_0$, for example, $[W_0]_{ij}\overset{i.i.d.}{\sim} \mathcal{N}\lp 0,1/D\rp$. First, our Theorem \ref{thm_conv_main} applies to this Gaussian initialization: Assumption \ref{assump_non_degen} is satisfied with probability one because the events $\lb\langle \frac{w_j(0)}{\|w_j(0)\|},\frac{x_-}{\|x_-\|}\rangle=1\rb$ and $\lb\langle \frac{w_j(0)}{\|w_j(0)\|},\frac{x_+}{\|x_+\|}\rangle=1\rb$ have probability zero. Moreover, any neuron in $\mathcal{V}_+$ has at least probability $1/2$ of being initialized within the union of halfspaces $\cup_{i\in\mathcal{I}_+}\{w: \lan w,x_i\ran>0\}$, which ensures that this neuron reaches $\mathcal{S}_+$. Thus when there are $m$ neurons in $\mathcal{V}_+$, the probability that $\mathcal{S}_+$ has at least one neuron at time $t_1$ is lower bounded by $1-2^{-m}$ (same argument holds for $\mathcal{S}_-$), Therefore, with only very mild overparametrization on the network width $h$, one can make sure that with high probability there is at least one neuron in both $\mathcal{S}_+$ and $\mathcal{S}_-$, leading to final convergence.

\paragraph{Importance of a quantitative bound on $t_1$:}
The analysis for neural alignment relies on the approximation in \eqref{eq_dir_flow}, which, through our analysis (see Lemma \ref{lem_small_norm_informal}), is shown to only hold before $T=\Theta(\frac{1}{n}\log\frac{1}{\sqrt{h}\epsilon})$, thus if one proves, through the approximation in \eqref{eq_dir_flow}, that good alignment is achieved within $t_1$ time, then the initialization scale $\epsilon$ must be chosen to be $\mathcal{O}( \frac{1}{\sqrt{h}}\exp( -n t_1))$ so that $t_1\leq T$, i.e. the proved alignment should finish before the approximation \eqref{eq_dir_flow} fails. Therefore, without an explicit bound on $t_1$, one does not know a prior how small $\epsilon$ should be. Our quantitative analysis shows that under \eqref{eq_dir_flow}, directional convergence
is achieved within $t_1=\mathcal{O}(\frac{\log n}{\sqrt{\mu}})$ time. This bound, in return, determines the bound for initialization scale $\epsilon$. Moreover, our bound quantitatively reveals the non-trivial dependency on the "data separation" $\mu$ for such directional convergence to occur. Indeed, through a numerical illustration in Appendix \ref{app_ssec_data_sep}, we show that the dependence on the data separability $\mu>0$ is crucial in determining the scale of the initialization: As $\mu$ approaches zero, the time needed for the desired alignment increases, necessitate the use of a smaller $\epsilon$.  


\paragraph{Refined alignment within $\mathcal{S}_+,\mathcal{S}_-$:} Once a neuron in $\mathcal{V}_+$ reaches $\mathcal{S}_+$, it never leaves $\mathcal{S}_+$. Moreover, it always gets attracted by $x_+$. Therefore, every neuron gets well aligned with $x_+$, i.e., $\cos(w_j,x_+)\simeq 1,\forall w_j\in\mathcal{S}_+$. A similar argument shows neurons in $\mathcal{S}_-$ get attracted by $x_-$. We opt not to formally state it in Theorem \ref{thm_conv_main} as the result would be similar to that in~\citep{boursier2022gradient}, and alignment with $x_+,x_-$ is not necessary to guarantee convergence. Instead, we show this refined alignment through our numerical experiment in Section \ref{sec_num}.

\paragraph{Final convergence and low-rank bias:} We present the final convergence results mostly for the completeness of the analysis. GF after $t_1$ can be viewed as fitting positive data $x_i,i\in\mathcal{I}_+$, with a subnetwork consisting of neurons in $\mathcal{S}_+$, and fitting negative data with neurons in $\mathcal{S}_-$. By the fact that all neurons in $\mathcal{S}_+$ activate all $x_i,i\in\mathcal{I}_+$, the resulting subnetwork is linear, and so is the subnetwork for fitting $x_i,i\in\mathcal{I}_-$. The convergence analysis reduces to establishing $\mathcal{O}(1/t)$ convergence for two linear networks~\citep{arora2018convergence,mtvm21icml,yun2020unifying}. The non-trivial and novel part is to show that right after the alignment phase ends, one can expect a substantial decrease of the loss (starting from time $t_2=\Theta (\frac{1}{n}\log\frac{1}{\sqrt{h}\epsilon})$).
An alternative way of proving convergence is by observing that at $t_1$, all data has been correctly classified (w.r.t. sign of $f$), which is sufficient for showing $\mathcal{O}(\frac{1}{t\log t})$ convergence~\citep{lyu2019gradient, ji2020dir} of the loss, but this asymptotic rate does not suggest a time after which the loss start to decrease significantly.
As for the stable rank, our result follows the analysis in~\citet{le2022training}, but in a simpler form since ours is for linear networks. Although convergence is established partially by existing results, we note that these analyses are all possible because we have quantitatively bound $t_1$ in the alignment phase.

\subsection{Comparison with prior work}
Our results provide a complete (from alignment to convergence), non-asymptotic (finite $\epsilon$), quantitative (bounds on $t_1,t_2$) analysis for the GF in \eqref{eq_gf} under small initialization. Similar neural alignment has been studied in prior work for \emph{orthogonally separable} data (same as ours) and for \emph{orthogonal} data, and we shall discuss them separately.

\paragraph{Alignment under orthogonally separable data:} ~\citet{phuong2021inductive} assumes that there exists a time $t_1$ such that at $t_1$, the neurons are in either $\mathcal{S}_+,\mathcal{S}_-$ or $\mathcal{S}_\text{dead}$ and their main contribution is the analysis of the implicit bias for the later stage of the training. they justify their assumption by the analysis in~\citet{maennel2018gradient}, which does not necessarily apply to the case of finite $\epsilon$, as we discussed in Section \ref{ssec_conv_pre}. Later ~\citet{wang2022the} shows $t_1$ exists, provided that the initialization scale $\epsilon$ is sufficiently small, but still with no explicit analysis showing how $t_1$ depends on the data separability $\mu$ and the size of the training data $n$. Moreover, there is no quantification on how small $\epsilon$ should be. In our work, all the results are non-asymptotic and quantitative: we show that good alignment is achieved within $t_1=\mathcal{O}( \frac{\log n}{\sqrt{\mu}})$ time and provide an explicit upper bound on $\epsilon$. Moreover, our results highlight the dependence on the separability $\mu>0$, (Further illustrated in Appendix \ref{app_ssec_data_sep}) which is not studied in~\citet{phuong2021inductive,wang2022the}.

\paragraph{Alignment under orthogonal data:} In~\citet{boursier2022gradient}, the neuron alignment is carefully analyzed for the case all data points are orthogonal to each other, i.e., $\lan x_i,x_j\ran=0, \forall i\neq j\in[n]$. We point out that neuron behavior is different under orthogonal data (illustrated in Appendix \ref{app_ssec_orth}): only the positive (negative) neurons initially activate all the positive (negative) data will end up in $\mathcal{S}_+$ ($\mathcal{S}_-$). In our case, all positive (negative) neurons will arrive at $\mathcal{S}_+$ ($\mathcal{S}_-$), unless they become a dead neuron. Moreover, due to such distinction, the analysis is different: ~\citet{boursier2022gradient} restrict their results to positive (negative) neurons $w_j$ that initially activate all the positive (negative) data, and there is no need for analyzing neuron activation. However, since our analysis is on all positive neurons, regardless of their initial activation pattern, it utilizes novel techniques to track the evolution of the activation pattern (see Section \ref{ssec_pf_sketch}). 

\paragraph{Other related work:} Convergence of two-layer (leaky-)ReLU networks are also studied under non-small initialization settings, mainly for gradient descent~\cite{wang2022early} and for training only the first-layer weights~\cite{frei2022implicit,kou2023implicit}. There is no direct comparison to them as they study the convergence in other regimes. Nonetheless, the analyses of neural alignment remain essential in these works but are done through different tools (one no longer has an approximation in~\eqref{eq_dir_flow}). We note that such analyses also require certain restrictive data assumptions. For example,~\cite{wang2022early} assumes orthogonal separability, together with some geometric constraint on the data;~\cite{frei2022implicit,kou2023implicit} assumes high-dimensional near-orthogonal data.

\subsection{Proof sketch for the alignment phase}\label{ssec_pf_sketch}
In this section, we sketch the proof for our Theorem \ref{thm_conv_main}. First of all, it can be shown that $\mathcal{S}_+,\mathcal{S}_{\text{dead}}$ are trapping regions for all $w_j(t),j\in\mathcal{V}_+$, that is, whenever $w_j(t)$ gets inside $\mathcal{S}_+$ (or $\mathcal{S}_{\text{dead}}$), it never leaves $\mathcal{S}_+$ (or $\mathcal{S}_{\text{dead}}$). Similarly, $\mathcal{S}_-,\mathcal{S}_{\text{dead}}$ are trapping regions for all $w_j(t),j\in\mathcal{V}_-$. The alignment phase analysis concerns how long it takes for all neurons to reach one of the trapping regions, followed by the final convergence analysis on fitting data with $+1$ label by neurons in $\mathcal{S}_+$ and fitting data with $-1$ label by those in $\mathcal{S}_-$. We have discussed the final convergence analysis in the remark "Final convergence and low-rank bias", thus we focus on the proof sketch for the early alignment phase here, which is considered as our main technical contribution.

\paragraph{Approximating $\frac{d}{dt}\frac{w_j}{\|w_j\|}$:} Our analysis for the neural alignment is rooted in the following Lemma:
\begin{lemma}\label{lem_small_norm_informal}
Given some initialization from \eqref{eq_init}, if $\epsilon=\mathcal{O}( \frac{1}{\sqrt{h}})$, then there exists $T=\Theta(\frac{1}{n}\log\frac{1}{\sqrt{h}\epsilon})$ such that any solution to the gradient flow dynamics \eqref{eq_gf} satisfies that $\forall t\leq T$,
\be
    \max_j\lV \frac{d}{dt} \frac{w_j(t)}{\|w_j(t)\|}-\sign(v_j(0)) \mathcal{P}_{w_j(t)} x_a(w_j(t))\rV=\mathcal{O}\lp\epsilon n\sqrt{h}\rp\,.\label{eq_err}
\ee

\end{lemma}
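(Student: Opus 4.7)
The plan is to unfold the gradient flow equation, exploit the balancedness of the initialization to simplify the directional dynamics, and close a Gr\"onwall-type bootstrap on $\|W(t)\|_F$.

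First, I would identify the source of the approximation error by writing out gradients explicitly. For the exponential loss (the logistic case differs only by a global factor of $1/2$ that can be absorbed into $x_a$), setting $c_i(t):=\exp(-y_i f(x_i;W(t),v(t)))$, the gradient flow equation reads
\[
\dot w_j \;=\; v_j \sum_i y_i\, c_i(t)\, \mathbf{1}_{w_j^\top x_i > 0}\, x_i.
\]
Balancedness is preserved along gradient flow and the sign of $v_j$ is conserved, so $v_j(t)=\sign(v_j(0))\|w_j(t)\|$ for all $t$, which gives
\[
\frac{d}{dt}\frac{w_j}{\|w_j\|} \;=\; \frac{\mathcal{P}_{w_j}\dot w_j}{\|w_j\|} \;=\; \sign(v_j(0))\,\mathcal{P}_{w_j}\!\Big(x_a(w_j) + \sum_i y_i(c_i(t)-1)\mathbf{1}_{w_j^\top x_i > 0}\, x_i\Big).
\]
Hence the quantity in \eqref{eq_err} is bounded by $\sum_i |c_i(t)-1|\|x_i\| \le nR\,\max_i|c_i(t)-1|$ with $R:=\max_i\|x_i\|$.

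Next I would bound $|c_i(t)-1|$ by the ``energy'' $\|W(t)\|_F^2$. Balancedness yields $|f(x_i;W,v)|\le \sum_j |v_j|\|w_j\|\|x_i\| = \|x_i\|\|W\|_F^2 \le R\|W\|_F^2$, and $|e^{-y_i f(x_i)}-1|\le |f(x_i)|\, e^{|f(x_i)|}$ then gives $\max_i|c_i(t)-1| = \mathcal{O}(\|W(t)\|_F^2)$ as long as $\|W(t)\|_F^2$ is small enough to keep the exponential factor $\mathcal{O}(1)$. Thus it suffices to show $\|W(t)\|_F^2 = \mathcal{O}(\epsilon\sqrt{h})$ on $[0,T]$.

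Finally, I would close a bootstrap on $\|W(t)\|_F^2$. A direct computation gives
\[
\frac{d}{dt}\|w_j\|^2 \;=\; 2 v_j \sum_i y_i\, c_i(t)\,\sigma(w_j^\top x_i) \;\le\; 2 n R\,\big(\max_i c_i(t)\big)\,\|w_j\|^2.
\]
Summing over $j$ and applying Gr\"onwall, whenever $\max_i c_i(t)\le 2$ one obtains $\|W(t)\|_F^2 \le \|W(0)\|_F^2\, e^{4nRt}$. Since $\|W(0)\|_F^2=\epsilon^2\|W_0\|_F^2=\mathcal{O}(\epsilon^2 h)$, choosing $T = \Theta\bigl(\tfrac{1}{n}\log\tfrac{1}{\sqrt{h}\epsilon}\bigr)$ makes $\|W(T)\|_F^2=\mathcal{O}(\epsilon\sqrt{h})$, which in turn keeps every $c_i$ close to $1$ and validates the Gr\"onwall hypothesis. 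A standard continuity argument --- the set of $t\le T$ on which $\|W(t)\|_F^2\le C\epsilon\sqrt{h}$ is nonempty at $t=0$, closed, and, with slack in the constant $C$, relatively open --- extends the bound to the whole interval $[0,T]$. Plugging back into the decomposition of the first step yields the claimed $\mathcal{O}(\epsilon n\sqrt{h})$ bound on \eqref{eq_err}.

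The main obstacle is a three-way balancing of constants: the threshold $C$ in $\|W(t)\|_F^2\le C\epsilon\sqrt{h}$, the factor $2$ in $\max_i c_i(t)\le 2$, and the horizon $T$. Making $C$ too small forces $T$ to shrink; making it too large breaks the implication $\|W\|_F^2 \le C\epsilon\sqrt{h}\Rightarrow c_i\le 2$, so the Gr\"onwall rate jumps and the self-consistency of the bootstrap fails. One must also carefully propagate the dependence on $R$, $\max_j\|w_j(0)\|$ and $\min_j\|w_j(0)\|$ --- hidden by the $\mathcal{O}$ notation --- through every step so that the final horizon truly has the form $\Theta\bigl(\tfrac{1}{n}\log\tfrac{1}{\sqrt{h}\epsilon}\bigr)$ up to data- and initialization-dependent constants, matching the third remark of Section~2.1.
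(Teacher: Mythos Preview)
Your proposal is correct and follows essentially the same route as the paper: bound the directional error by $n X_{\max}\max_i|f(x_i)|$ via the exact formula for $\dot w_j$ and balancedness, then control $\max_i|f(x_i)|$ by neuron norms, and finally run a Gr\"onwall/continuity bootstrap on the norms to get $\max_i|f(x_i)|=\mathcal{O}(\epsilon\sqrt{h})$ on $[0,T]$ with $T=\Theta(\tfrac{1}{n}\log\tfrac{1}{\sqrt{h}\epsilon})$. The only cosmetic differences are that the paper tracks $\max_j\|w_j\|^2$ rather than $\|W\|_F^2$ (both yield the same $|f|$ bound since $|f(x_i)|\le hX_{\max}\max_j\|w_j\|^2$ and $|f(x_i)|\le X_{\max}\|W\|_F^2$), and the paper replaces your $|e^{-z}-1|\le |z|e^{|z|}$ step by the cleaner lemma $|-\nabla_{\hat y}\ell(y,\hat y)-y|\le 2|\hat y|$ valid for $|\hat y|\le 1$, which handles both losses at once and removes the exponential slack you flag as an obstacle.
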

This Lemma shows that the error between $\frac{d}{dt} \frac{w_j(t)}{\|w_j(t)\|}$ and $\sign(v_j(0))\mathcal{P}_{w_j(t)} x_a(w_j(t))$ can be arbitrarily small with some appropriate choice of $\epsilon$ (to be determined later). This allows one to analyze the true directional dynamics $\frac{w_j(t)}{\|w_j(t)\|}$ using some property of $\mathcal{P}_{w_j(t)} x_a(w_j(t))$,
which leads to a $t_1=\mathcal{O}(\frac{\log n}{\sqrt{\mu}})$ upper bound on the time it takes for the neuron direction to converge to the sets $\mathcal{S}_+$, $\mathcal{S}_-$, or $\mathcal{S}_{\text{dead}}$. Moreover, it also suggests $\epsilon$ can be made sufficiently small so that the error bound holds until the directional convergence is achieved, i.e. $T\geq t_1$. We will first illustrate the analysis for directional convergence, then close the proof sketch with the choice of a sufficiently small $\epsilon$.


\paragraph{Activation pattern evolution:}~
Given a sufficiently small $\epsilon$, one can show that under Assumption \ref{assump_data}, for every neuron $w_j$ that is not in $\mathcal{S}_\text{dead}$ we have:
\be
\left.\frac{d}{dt}\lan \frac{w_j}{\|w_j\|},\frac{x_iy_i}{\|x_i\|}\ran \right\rvert_{\lan w_i,x_i\ran=0}>0,\forall i\in [n], \text{if } j\in \mathcal{V}_+\,,\label{eq_mono_pos}
\ee
    \be
    \left.\frac{d}{dt}\lan \frac{w_j}{\|w_j\|},\frac{x_iy_i}{\|x_i\|}\ran \right\rvert_{\lan w_i,x_i\ran=0}<0,\forall i\in [n], \text{if } j\in \mathcal{V}_-\,.\label{eq_mono_neg}
\ee
This is because if a neuron satisfies $\lan x_i,w_j\ran=0$ for some $i$, and is not in $\mathcal{S}_\text{dead}$, GF moves $w_j$ towards $x_a(w_j)=\sum_{i:\lan x_i, w_j\ran>0}x_iy_i$. Interestingly, Assumption \ref{assump_data} implies $\lan x_iy_i,x_a(w_j)\ran>0,\forall i\in[n]$, which makes $\frac{d}{dt}\frac{w_j}{\|w_j\|}\simeq \sign(v_j(0))\mathcal{P}_{w_j}x_a(w_j)$ point inward (or outward) the halfspace $\lan x_iy_i,w_j\ran>0$, if $\sign(v_j(0))>0$ (or $\sign(v_j(0))<0$, respectively). See Figure \ref{fig_mono_activation} for illustration.

As a consequence, a neuron can only change its activation pattern in a particular manner: a neuron in $\mathcal{V}_+$, whenever it is activated by some $x_i$ with $y_i=+1$, never loses the activation on $x_i$ thereafter, because \eqref{eq_mono_pos} implies that GF pushes $\frac{w_j}{\|w_j\|}$ towards $x_i$ at the boundary $\lan w_j,x_i\ran=0$. Moreover, \eqref{eq_mono_pos} also shows that a neuron in $\mathcal{V}_+$ will never regain activation on a $x_i$ with $y_i=-1$ once it loses the activation because GF pushes $\frac{w_j}{\|w_j\|}$ against $x_i$ at the boundary $\lan w_i,x_i\ran=0$. Similarly, a neuron in $\mathcal{V}_-$ never loses activation on negative data and never gains activation on positive data.

\begin{figure}[!h]
\centering
\begin{minipage}{.35\textwidth}
  \centering
  \includegraphics[width=0.9\linewidth]{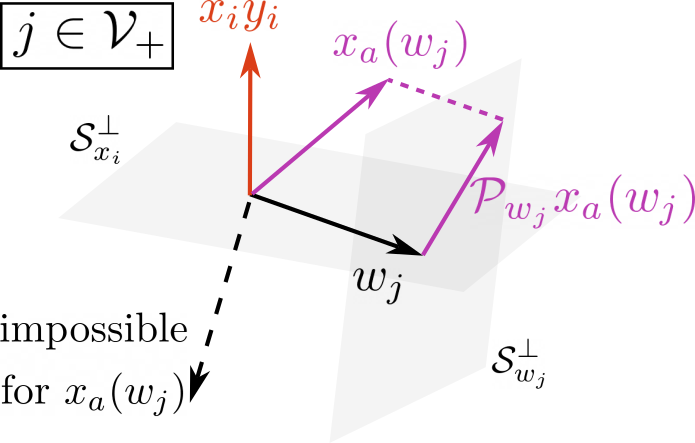}
  \captionof{figure}{For $j\in\mathcal{V}_+$, Assumption \ref{assump_data} enforces $\lan x_iy_i,x_a(w_j)\ran>0$, thus  GF pushes $w_j$ inward the halfspace $\lan x_iy_i,w_j\ran>0$ at $\lan x_i,w_j\ran=0$ (i.e. towards gaining activation on $x_i$, if $y_i=+1$, or losing activation on $x_i$, if $y_i=-1$.). $\mathcal{S}_{x_i}^\perp$ and $\mathcal{S}_{w_j}^\perp$ denotes the subspace orthogonal to $x_i$ and $w_j$, respectively.}
  \label{fig_mono_activation}
  \vspace{-0.4cm}
\end{minipage}
\hspace{.03\textwidth}
\begin{minipage}{.6\textwidth}
  \centering
  \includegraphics[width=0.9\linewidth]{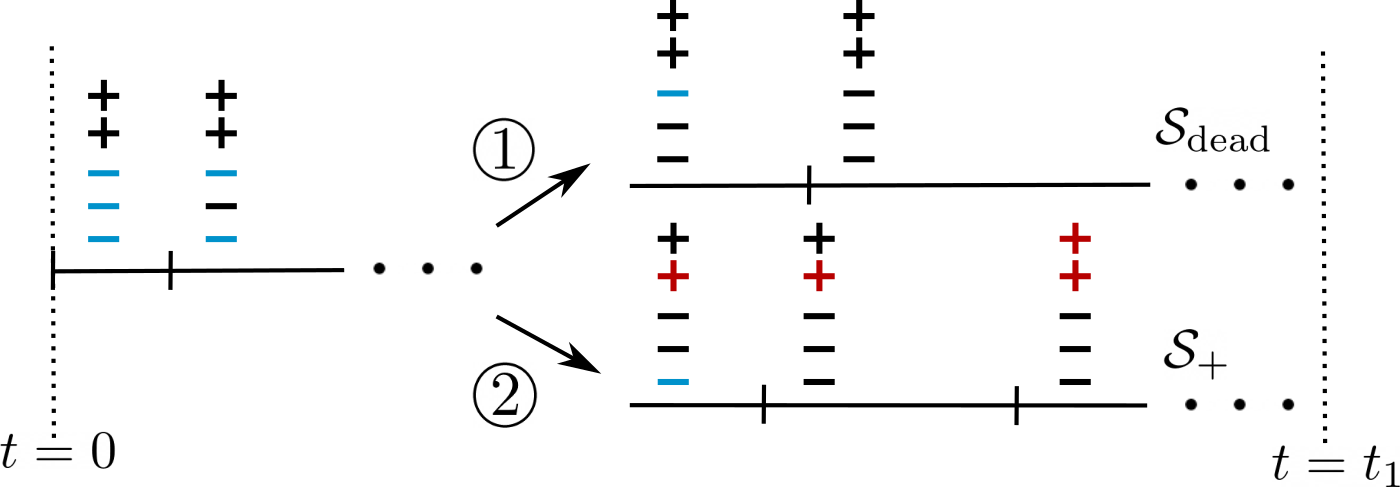}
  \captionof{figure}{Illustration of the activation pattern evolution. The epochs on the time axis denote the time $w_j$ changes its activation pattern by either losing one negative data (denoted by "$+$")  or gaining one positive data  (denoted by "$-$"). The markers are colored if it currently activates $w_j$.
  During the alignment phase $0\leq t\leq t_1$, a neuron $w_j, j\in\mathcal{V}_+$ starts with activation on all negative data and no positive data, every $\mathcal{O}\lp 1/n_a\rp$ time, it must change its activation, unless either \raisebox{.5pt}{\textcircled{\raisebox{-.9pt} {1}}} it reaches $\mathcal{S}_\text{dead}$, or \raisebox{.5pt}{\textcircled{\raisebox{-.9pt} {2}}} it activates some positive data at some epoch then eventually reaches $\mathcal{S}_+$.}
  \label{fig_activation_traj}
\end{minipage}
\end{figure}

\paragraph{Bound on activation transitions and duration:}
Equations \eqref{eq_mono_pos} and \eqref{eq_mono_neg} are key in the analysis of alignment because they limit how many times a neuron can change its activation pattern: a neuron in $\mathcal{V}_+$ can only gain activation on positive data and lose activation on negative data, thus at maximum, a neuron $w_j,\ j\in\mathcal{V}_+$, can start with full activation on all negative data and no activation on any positive one (which implies $w_j(0)\in\mathcal{S}_-$) then lose activation on every negative data and gain activation on every positive data as GF training proceeds (which implies $w_j(t_1)\in\mathcal{S}_+$), taking at most $n$ changes on its activation pattern. See Figure \ref{fig_activation_traj} for an illustration. 
Then, since it is possible to show that a neuron $w_j$ with $j\in \mathcal{V}_+$ that has $\cos(w_j,x_-)<1$ (guaranteed by Assumption \ref{assump_non_degen}) and is not in $\mathcal{S}_+$ or $\mathcal{S}_{\text{dead}}$, must change its activation pattern after $\mathcal{O}( \frac{1}{n_a\sqrt{\mu}})$ time (that does not depend on $\epsilon$), where $n_a$ is the number of data that currently activates $w_j$, one can upper bound the time for $w_j$ to reach $\mathcal{S}_+$ or $\mathcal{S}_{\text{dead}}$ by some $t_1=\mathcal{O}( \frac{\log n}{\sqrt{\mu}})$ constant independent of $\epsilon$. 
Moreover, $w_j$ must reach $\mathcal{S}_+$ if it initially has activation on at least one positive data, i.e., $\max_{i\in\mathcal{I}_+}\lan w_j(0),x_i\ran>0$ since it cannot lose this activation. A similar argument holds for $w_j, j\in\mathcal{V}_-$ that they reaches either $\mathcal{S}_-$ or $\mathcal{S}_{\text{dead}}$ before $t_1$. 

\paragraph{Choice of $\epsilon$:}
All the aforementioned analyses rely on the assumption that the approximation in equation \eqref{eq_dir_flow} holds with some specific error bound. We show in Appendix \ref{app_pf_thm_align} that the desired bound is
$\lV \frac{d}{dt} \frac{w_j(t)}{\|w_j(t)\|}- \sign(v_j(0)) \mathcal{P}_{w_j(t)} x_a(w_j(t))\rV\leq \mathcal{O}(\sqrt{\mu})$, which, by Lemma \ref{lem_small_norm_informal}, can be achieved by a sufficiently small initialization scale $\epsilon_1=\mathcal{O}(\frac{\sqrt{\mu}}{\sqrt{h}n})$. Moreover, the directional convergence (which takes $\mathcal{O}( \frac{\log n}{\sqrt{\mu}})$ time) should be achieved before the alignment phase ends, which happens at $T=\Theta(\frac{1}{n}\log\frac{1}{\sqrt{h}\epsilon})$. This is ensured by choosing another sufficiently small initialization scale $\epsilon_2=\mathcal{O}( \frac{1}{\sqrt{h}}\exp( -\frac{n}{\sqrt{\mu}}\log n))$. Overall, the initialization scale should satisfy $\epsilon\leq \min\{\epsilon_1,\epsilon_2\}$. We opt to present $\epsilon_2$ in our main theorem because $\epsilon_2$ beats $\epsilon_1$ when $n$ is large.

\section{Numerical Experiments}\label{sec_num}


We use a toy example in Appendix \ref{app_ssec_illu_exmpl} to clearly visualize the neuron alignment during training (due to space constraints). In the main body of this paper, we validate our theorem using a binary classification task for two MNIST digits. Such training data do not satisfy Assumption \ref{assump_data} since every data vector is a grayscale image with non-negative entries, making the inner product between any pair of data non-negative, regardless of their labels. However, we can preprocess the training data by centering: $x_i\leftarrow x_i-\bar{x}$, where $\bar{x}=\sum_{i\in[n]}x_i/n$. The preprocessed data, then,  approximately satisfies our assumption (see the left-most plot in Figure \ref{fig_mnist}): a pair of data points is very likely to have a positive correlation if they have the same label and to have a negative correlation if they have different labels. Thus we expect our theorem to make reasonable predictions on the training dynamics with preprocessed data. 
\begin{figure}[t]
    \centering
    \includegraphics[width=0.9\textwidth]{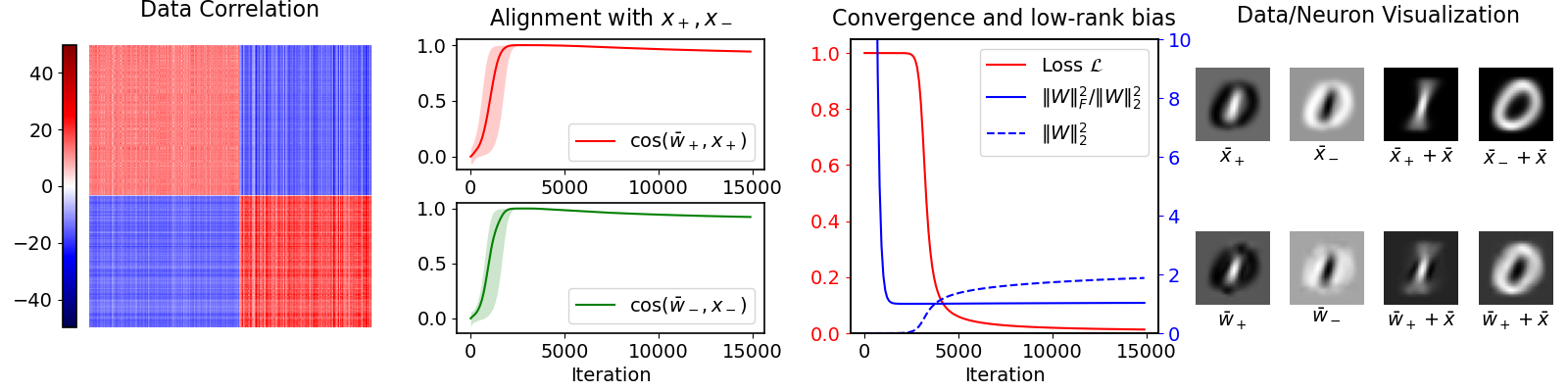}
    \caption{Training two-layer ReLU network under small initialization for binary classification on MNIST digits $0$ and $1$. 
    (\emph{First Plot}) Data correlation $[\lan x_i,x_j\ran]_{ij}$ as a heatmap, where the data are reordered by their label (digit 1 first, then digit 0); (\emph{Second Plot}) Alignment between neurons and the aggregate positive/negative data $x_+=\sum_{i\in\mathcal{I}_+}x_i$, $x_-=\sum_{i\in\mathcal{I}_-}x_i$. 
    (\emph{Third Plot}) The loss $\mathcal{L}$, the stable rank and the squared spectral norm of $W$ during training; (\emph{Fourth Plot}) Visualizing neuron centers $\bar{w}_+,\bar{w}_-$ and data centers $\bar{x}_+,\bar{x}_-$ (at iteration $15000$). 
    }
    \label{fig_mnist}
\end{figure}
For the remaining section, we use $x_i,i\in[n]$, to denote the preprocessed (centered) data and use $\bar{x}$ to denote the mean of the original data. 

We build a two-layer ReLU network with $h=50$ neurons and initialize all entries of the weights as $[W]_{ij}\overset{i.i.d.}{\sim} \mathcal{N}\lp 0,\alpha^2\rp, v_j\overset{i.i.d.}{\sim} \mathcal{N}\lp 0,\alpha^2\rp,\forall i\in[n],j\in[h]$ with $\alpha=10^{-6}$. Then we run gradient descent on both $W$ and $v$ with step size $\eta=2\times 10^{-3}$. Notice that here the weights are not initialized to be balanced as in \eqref{eq_init}. The numerical results are shown in Figure \ref{fig_mnist}. 

\paragraph{Alignment phase:} Without balancedness, one no longer has $\sign(v_j(t))=\sign(v_j(0))$. With a little abuse of notation, we denote $\mathcal{V}_+(t)=\{j\in[h]:\sign(v_j(t))>0\}$ and $\mathcal{V}_+(t)=\{j\in[h]:\sign(v_j(t))>0\}$, and we expect that at the end of the alignment phase, neurons in $\mathcal{V}_+$ are aligned with $x_+=\sum_{i\in\mathcal{I}_+}x_i$, and neurons in $\mathcal{V}_-$ with $x_-=\sum_{i\in\mathcal{I}_-}x_i$. The second plot in Figure \ref{fig_mnist} shows such an alignment between neurons and $x_+,x_-$. In the top part, the red solid line shows $\cos(\bar{w}_+,x_+)$ during training, where $\bar{w}_+=\sum_{j\in\mathcal{V}_+}w_j/|\mathcal{V}_+|$, and the shaded region defines the range between $\min_{j\in\mathcal{V}_+}\cos(w_i,x_+)$ and $\max_{j\in\mathcal{V}_+}\cos(w_i,x_+)$. Similarly,  in the bottom part, the green solid line shows $\cos(\bar{w}_-,x_-)$ during training, where $\bar{w}_-=\sum_{j\in\mathcal{V}_-}w_j/|\mathcal{V}_-|$, and the shaded region delineates the range between $\min_{j\in\mathcal{V}_-}\cos(w_i,x_-)$ and $\max_{j\in\mathcal{V}_-}\cos(w_i,x_-)$. Initially, every neuron is approximately orthogonal to $x_+,x_-$ due to random initialization. Then all neurons in $\mathcal{V}_+$ ($\mathcal{V}_-$) start to move towards $x_+$ ($x_-$) and achieve good alignment after ${\sim}2000$ iterations. When the loss starts to decrease, the alignment drops. We conjecture that because Assumption \ref{assump_data} is not exactly satisfied, neurons in $\mathcal{V}_+$ have to fit some negative data, for which $x_+$ is not the best direction.

\paragraph{Final convergence:} After ${\sim 3000}$ iterations, the norm $\|W\|_2^2$ starts to grow and the loss decreases, as shown in the third plot in Figure \ref{fig_mnist}. Moreover, the stable rank $\|W\|_F^2/\|W\|_2^2$ decreases below $2$. For this experiment, we almost have $\cos(x_+,x_-)\simeq -1$, thus the neurons in $\mathcal{V}_+$ (aligned with $x_+$) and those in $\mathcal{V}_-$ (aligned with $x_-$) are almost co-linear. Therefore, the stable rank $\|W\|_F^2/\|W\|_2^2$ is almost $1$, as seen from the plot. Finally, at iteration $15000$, we visualize the mean neuron $\bar{w}_+=\sum_{j\in\mathcal{V}_+}w_j/|\mathcal{V}_+|$, $\bar{w}_-=\sum_{j\in\mathcal{V}_-}w_j/|\mathcal{V}_-|$ as grayscale images, and compare them with $\bar{x}_+=x_+/|\mathcal{I}_+|,x_-=x_-/|\mathcal{I}_-|$, showing good alignment.

\paragraph{Comparison with other training schemes:} For two-layer ReLU networks, there is another line of work~\citep{brutzkus2018sgd,wang2019} that studies GD/SGD only on the first-layer weights $W$ and keeping the second-layer $v$ fixed throughout training. In Appendix \ref{app_addi_experiments_comp}, we compare our training schemes to those in~\citet{brutzkus2018sgd,wang2019}, and show that while both schemes achieve small training loss, the aforementioned two-phase training (alignment then final convergence) does no happen if only the first-layer in trained.
\section{Conclusion}
\vspace{-3mm}
This paper studies the problem of training a binary classifier via gradient flow on two-layer ReLU networks under small initialization. We consider a training dataset with well-separated input vectors. A careful analysis of the neurons' directional dynamics allows us to provide an upper bound on the time it takes for all neurons to achieve good alignment with the input data. 
Numerical experiment on classifying two digits from the MNIST dataset correlates with our theoretical findings.

\section*{Acknowledgement}
The authors thank the support of the NSF-Simons Research Collaborations on the Mathematical and Scientific Foundations of Deep Learning (NSF grant 2031985), the NSF HDR TRIPODS Institute for the Foundations of Graph and Deep Learning (NSF grant 1934979), the ONR MURI Program (ONR grant 503405-78051), and the NSF CAREER Program (NSF grant 1752362). The authors thank Ziqing Xu and Salma Tarmoun for the insightful discussions. 

\bibliographystyle{unsrtnat}
\bibliography{ref.bib}

\newpage
\appendix

\section{Additional Experiments}\label{app_addi_experiments}
\subsection{Illustrative example}\label{app_ssec_illu_exmpl}

We illustrate our theorem using a toy example: we train a two-layer ReLU network with $h=50$ neurons under a toy dataset in $\mathbb{R}^2$ (See Figure. \ref{fig_illust_ex}) that satisfies our Assumption \ref{assump_data}, and initialize all entries of the weights as $[W]_{ij}\overset{i.i.d.}{\sim} \mathcal{N}\lp 0,\alpha^2\rp, v_j\overset{i.i.d.}{\sim} \mathcal{N}\lp 0,\alpha^2\rp,\forall i\in[n],j\in[h]$ with $\alpha=10^{-6}$. Then we run gradient descent on both $W$ and $v$ with step size $\eta=2\times 10^{-3}$. Our theorem well predicts the dynamics of neurons at the early stage of the training: aside from neurons that ended up in $\mathcal{S}_\text{dead}$, neurons in $\mathcal{V}_+$ reach $\mathcal{S}_+$ and achieve good alignment with $x_+$, and neurons in $\mathcal{V}_-$ are well aligned with $x_-$ in $\mathcal{S}_-$. Note that after alignment, the loss experiences two sharp decreases before it gets close to zero, which is studied and explained in~\cite{boursier2022gradient}.
\begin{figure}[!h]
    \centering
    \includegraphics[width=0.9\textwidth]{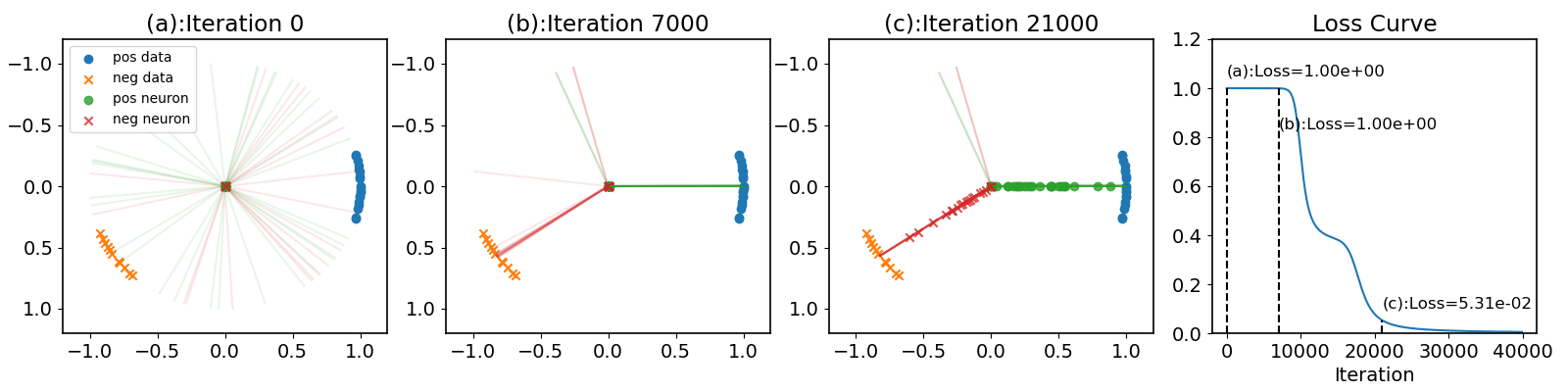}
    \caption{Illustration of gradient descent on two-layer ReLU network with small initialization. The marker represents either a data point or a neuron. Solid lines represent the directions of neurons. (a) at initialization, all neurons have small norm and are pointing in different directions; (b) around the end of the alignment phase, all neurons are in $\mathcal{S}_+,\mathcal{S}_-$, or $\mathcal{S}_\text{dead}$. Moreover, neurons in $\mathcal{S}_+$ ($\mathcal{S}_-$) are well aligned with $x_+$ ($x_-$); (c) With good alignment, neurons in $\mathcal{S}_-,\mathcal{\mathcal{S}_+}$ start to grow in norm and the loss decreases. When the loss is close to zero, the resulting network has its first-layer weight approximately low-rank.}
    \label{fig_illust_ex}
\end{figure}
\subsection{Effect of data separability $\mu$}\label{app_ssec_data_sep}
This section investigates the effect of data separability $\mu$ on the time required to achieve the desired alignment as in Theorem \ref{thm_conv_main}, through a simple example. we consider a similar setting as in \ref{app_ssec_illu_exmpl}, and explore the cases when data separability $\mu \ll 1$. We expect that as separability $\mu$ decreases, the time for neurons to achieve the desired alignment as in Theorem \ref{thm_conv_main} increases, necessitating a smaller initialization scale. For simplicity, we consider a dataset with only two positive data $(x_1,y_1=+1),(x_2,y_2=+1)$. 

In Figure \ref{fig_app_small_pos}, we first set $\mu=\lan x_1,x_2\ran=\sin(0.1)$, and the neuron alignment is consistent with Theorem \ref{thm_conv_main}: positive neurons (that are not dead) eventually enters $\mathcal{S}_+$, activating both data points, and then final convergence follows. 
\begin{figure}[!h]
    \centering
    \includegraphics[width=0.9\linewidth]{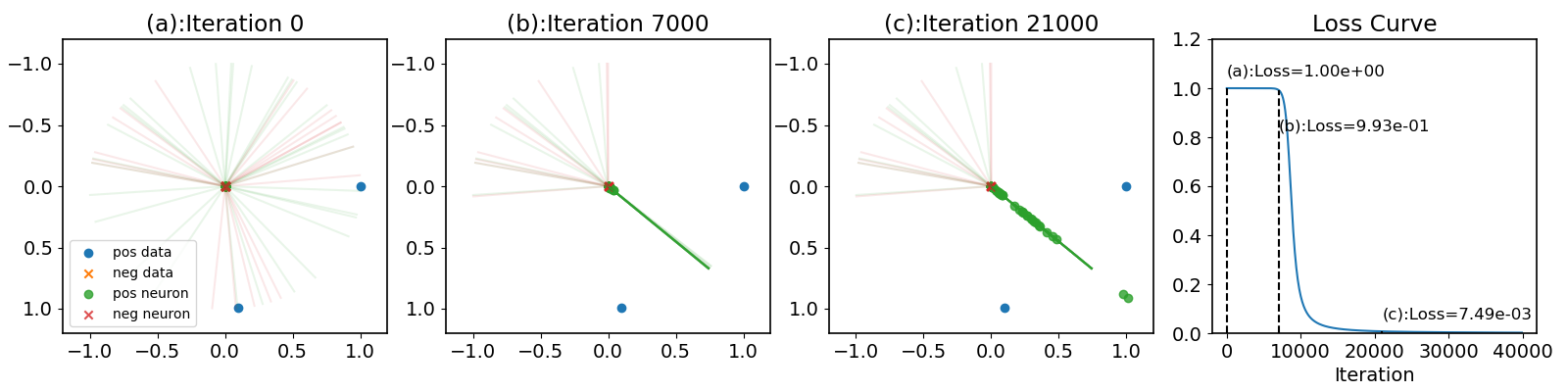}
    \caption{Neural alignment under two data points with small positive correlation: $x_1=[1,0],y_1=+1$, $\mathbf{x_2=[\sin(0.1),\cos(0.1)]},y_2=+1$. The experimental setting is exactly the same as the illustrative example in Appendix \ref{app_ssec_illu_exmpl} (\textbf{initialization scale $\alpha=10^{-6}$}). The marker represents either a data point or a neuron. Solid lines represent the directions of neurons. In the alignment phase, positive neurons are aligned with $x_+$, and then grow their norm for final convergence.}
    \label{fig_app_small_pos}
\end{figure}

However, in Figure \ref{fig_app_tiny_pos}, as we decrease the separability $\mu$ to $\sin(0.001)$ (other settings remain unchanged), the neural alignment becomes slower: 1) at iteration 7000, there are still neurons (that are not dead) outside $\mathcal{S}_+$, namely those aligned with either $x_1$ or $x_2$, while in our previous setting ($\mu=\sin(0.1)$), all neurons (that are not dead) have reached $\mathcal{S}_+$; 2) In this particular instance of the experiment, we also see one neuron remains outside $\mathcal{S}_+$ at the late stage of the training (at iteration 21000). This clearly shows that as data separability $\mu$ decreases, the time needed for all neurons (that are not dead) to reach $\mathcal{S}_+$ increases, and if the initialization scale is not small enough for the alignment phase to hold for a long time, there will be neurons remains outside $\mathcal{S}_+$.
\begin{figure}[!h]
    \centering
    \includegraphics[width=0.9\linewidth]{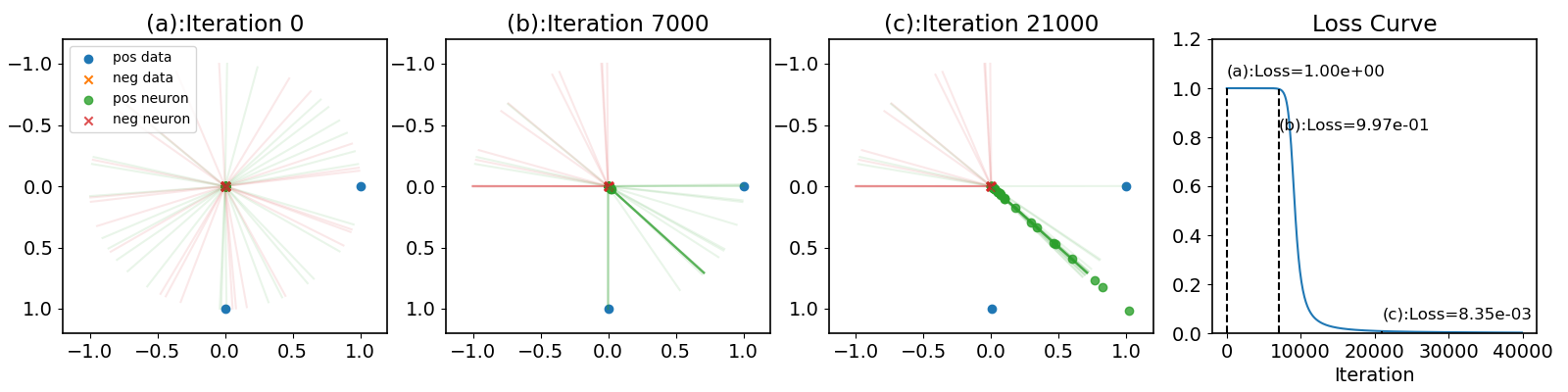}
    \caption{Neural alignment under two data points with tiny positive correlation: $x_1=[1,0],y_1=+1$, $\mathbf{x_2=[\sin(0.001),\cos(0.001)]},y_2=+1$. The experimental setting is exactly the same as the illustrative example in Appendix \ref{app_ssec_illu_exmpl} (\textbf{initialization scale $\alpha=10^{-6}$}). The marker represents either a data point or a neuron. Solid lines represent the directions of neurons. In the alignment phase, positive neurons are aligned with $x_+$, but the alignment is slower}
    \label{fig_app_tiny_pos}
\end{figure}
\subsection{Neuron dynamics under orthogonal data}\label{app_ssec_orth}
We have seen in the last section how a small $\mu$ affects the neuron dynamics. The orthogonal data assumption studied in~\citet{boursier2022gradient} is precisely the extreme case of $\mu\ra 0$, where the neuron behavior changes substantially. We follow exactly the same setting in Appendix \ref{app_ssec_data_sep} and consider the case of $\mu=0$.

In Figure \ref{fig_app_orth}, we see that \textbf{$\mathcal{S}_+$ is no longer the region that contains all (non-dead) positive neurons at the end of the alignment phase}. Depending on where each neuron is initialized, it could end up being in $\mathcal{S}_+$, aligned with $x_1$, or aligned with $x_2$. Moreover, for final convergence, only the neurons ended up in $\mathcal{S}_+$ grow their norms and fit the data, whose number is clearly less than that in the case of $\mu>0$.
\begin{figure}[!h]
    \centering
    \includegraphics[width=0.9\linewidth]{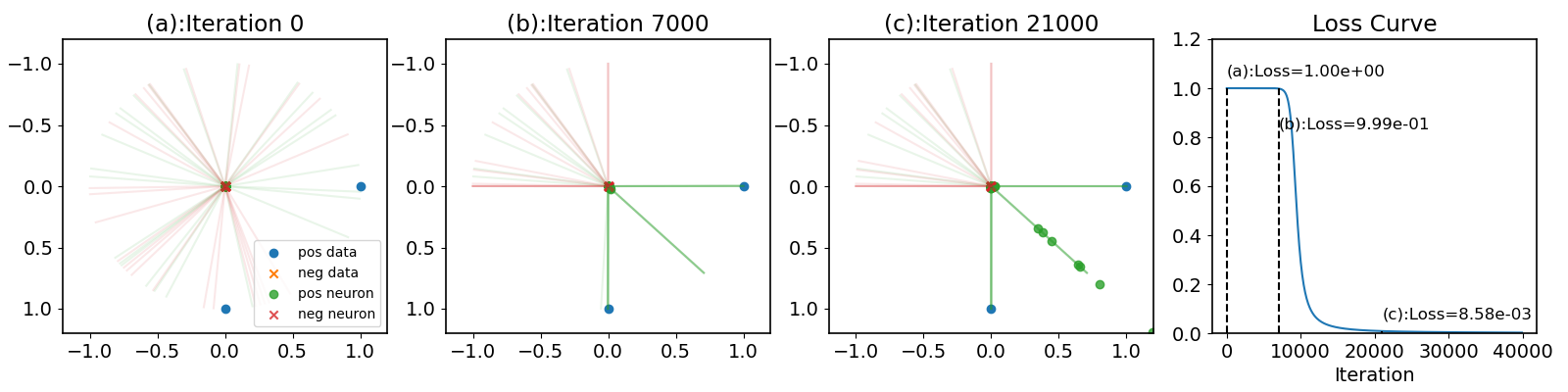}
    \caption{Neural alignment under two orthogonal data points: $x_1=[1,0],y_1=+1$, $x_2=[0,1],y_2=+1$. The experimental setting is exactly the same as the illustrative example in Appendix \ref{app_ssec_illu_exmpl}. The marker represents either a data point or a neuron. Solid lines represent the directions of neurons. In the alignment phase, positive neurons are aligned with one of these directions: $x_1,x_2,x_+$, and only those aligned with $x_+$ grow their norm for final convergence.}
    \label{fig_app_orth}
\end{figure}

This difference in neurons' dynamical behavior makes the analysis in~\citet{boursier2022gradient} different than ours: First,~\citet{boursier2022gradient} only studies the dynamics of the positive (negative) neurons that initially activate all positive (negative) data, which will end up in $\mathcal{S}_+$ ($\mathcal{S}_-$) and fit the data, and the analysis does not evolve the changes in their activation pattern. In our case, any positive (negative) neurons could potentially end up in $\mathcal{S}_+$ ($\mathcal{S}_-$), and in particular, it will if it initially activates at least one positive (negative) data, thus it becomes necessary to track the evolution of the activation pattern of all these neurons (novelty in our analysis). Moreover, consider the case that neurons are being randomly initialized,~\citet{boursier2022gradient} requires the set of positive (negative) neurons that initially activate all positive (negative) data being non-empty, which needs the number of neurons $h$ to scale exponentially in number training data $n$ (extremely overparametrized). In our case, we only require $h=\Theta(1)$ (See \textbf{Merits of overparametrization} after Theorem \ref{thm_conv_main}), a mild overparamerization.

In summary, while~\citet{boursier2022gradient} also provides quantitative analysis on neural alignment under small initialization, it is done under the assumption that all data are orthogonal to each other, leading to a different neuron dynamical behavior than ours. Due to such differences, their analysis cannot be directly applied to the case of orthogonally separable data (ours), for which we develop novel analyses on the evolution of neuron activation patterns (See proof sketch in Section \ref{ssec_pf_sketch}).

\subsection{Additional experiments on MNIST dataset}
We use exactly the same experimental setting as in the main paper and only use a different pair of digits. The results are as follows:
\begin{figure}[!h]
    \centering
    \includegraphics[width=\textwidth]{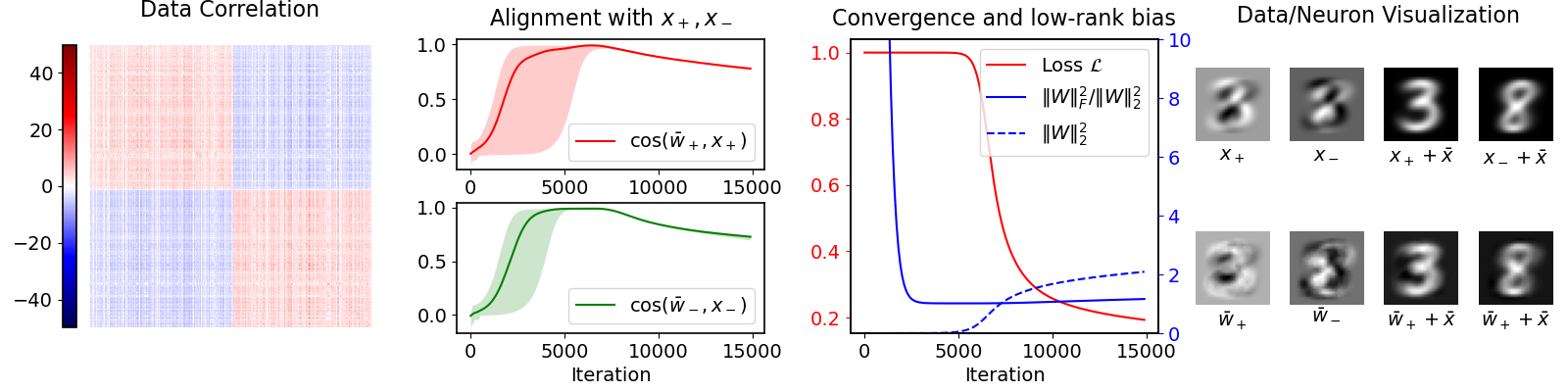}
    \caption{Binary classification on MNIST Digits $3$ and $8$.}
    \label{fig_mnist_3_8}
\end{figure}
\begin{figure}[!h]
    \centering
    \includegraphics[width=\textwidth]{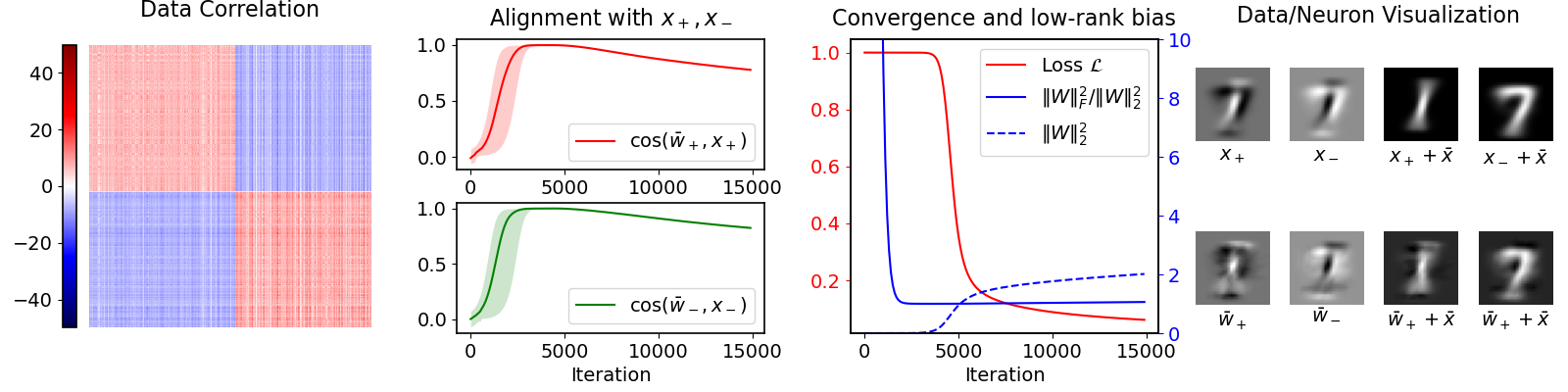}
    \caption{Binary classification on MNIST Digits $1$ and $7$.}
    \label{fig_mnist_1_7}
\end{figure}

{\color{black}\subsection{Discussion on the two-phase convergence}\label{app_addi_experiments_comp}
With the same two-digit MNIST dataset in Section \ref{sec_num}, we further discuss the two-phase convergence under small initialization. We use a two-layer ReLU network with $h=50$ neurons and initialize all entries of the weights as $[W]_{ij}\overset{i.i.d.}{\sim} \mathcal{N}\lp 0,\alpha^2\rp, v_j\overset{i.i.d.}{\sim} \mathcal{N}\lp 0,\alpha^2\rp,\forall i\in[n],j\in[h]$ with $\alpha=10^{-6}$. Then we run stochastic gradient descent (SGD) with batch size 2000 on both $W$ and $v$ with step size $\eta=2\times 10^{-3}$. For comparison, we also consider the training schemes studied in~\citet{brutzkus2018sgd,wang2019}, where only the first-layer weight $W$ is trained starting from a small initialization $[W]_{ij}\overset{i.i.d.}{\sim} \mathcal{N}\lp 0,\alpha^2\rp$, and $v_j$ are chosen to be either $+1$ or $-1$ with equal probability, then fixed throughout training.

We consider the changes in neuron norms and directions separately. In particular, these quantities are defined as
\be
    \sum_i\left.\frac{d}{dt}\|w_j\|^2\right\vert_{\dot{w}_j=-\nabla_{w_j}\mathcal{L}}=\sum_i2\lan -\nabla_{w_j}\mathcal{L}\,, w_j\ran\tag{changes in neuron norms}
\ee
\be
    \sum_i\lV\left.\frac{d}{dt}\frac{w_j}{\|w_j\|}\right\vert_{\dot{w}_j=-\nabla_{w_j}\mathcal{L}}\rV=\sum_i\lV\mathcal{P}_{w_j}\lp\frac{-\nabla_{w_j}\mathcal{L}}{\|w_j\|} \rp\rV\,,\tag{changes in neuron directions}
\ee
and they measure, at the end of every epoch, how much the neuron norms and directions will change if one uses a one-step full gradient descent with a small step size. 
\begin{figure}
    \centering
    \includegraphics[width=0.8\textwidth]{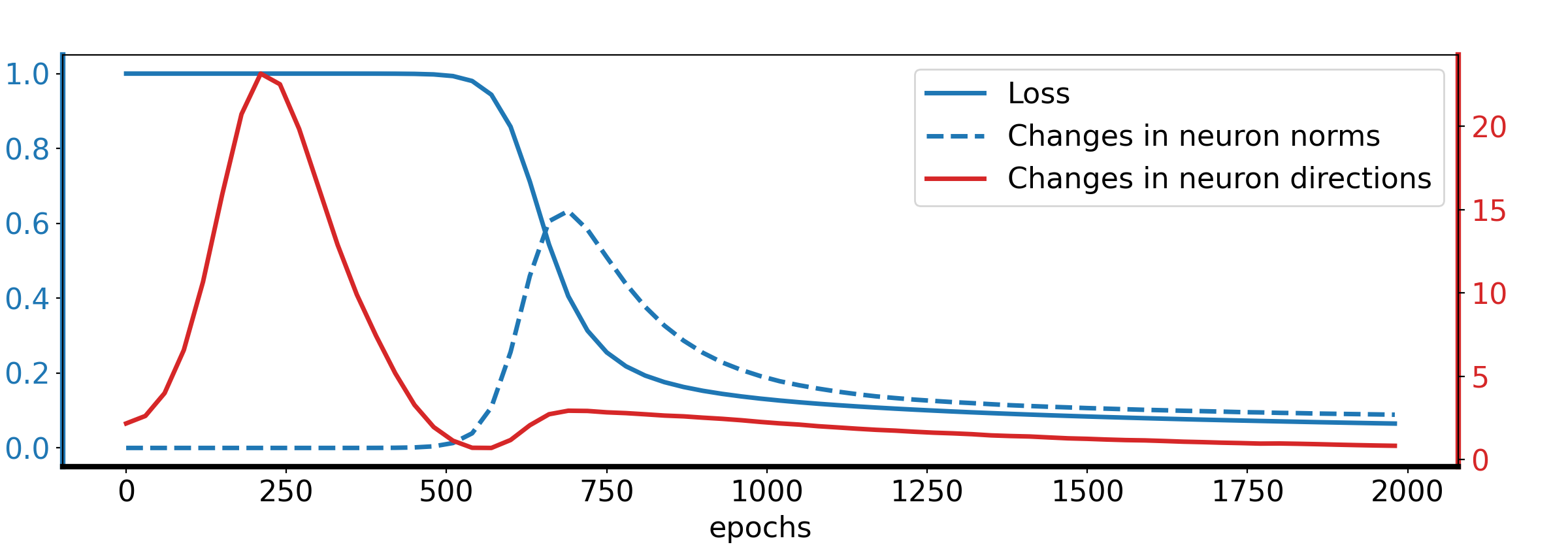}
    \caption{Two-phase training under small initialization (SGD with a batch size of 2000, step size of $2\times 10^{-3}$). At the early phase of the training, only neuron directions are changing while neurons' norms do not grow. In the second stage, neurons start to grow their norms and loss starts to decrease. See Appendix \ref{app_addi_experiments_comp} for the precise definitions of ``changes in neuron norms" and ``changes in neuron directions"}
    \label{fig_minist_two_phase}
\end{figure}

\textbf{Training both layers}: In Figure \ref{fig_minist_two_phase}, we show the changes in neuron norms and directions over the training trajectory when we run stochastic gradient descent (SGD) on both first- and second-layer weights. The two-phase (alignment phase then final convergence) is clearly shown by comparing the relative scale of changes in neuron norms and directions in different phases of the training.

\begin{figure}
    \centering
    \includegraphics[width=0.8\textwidth]{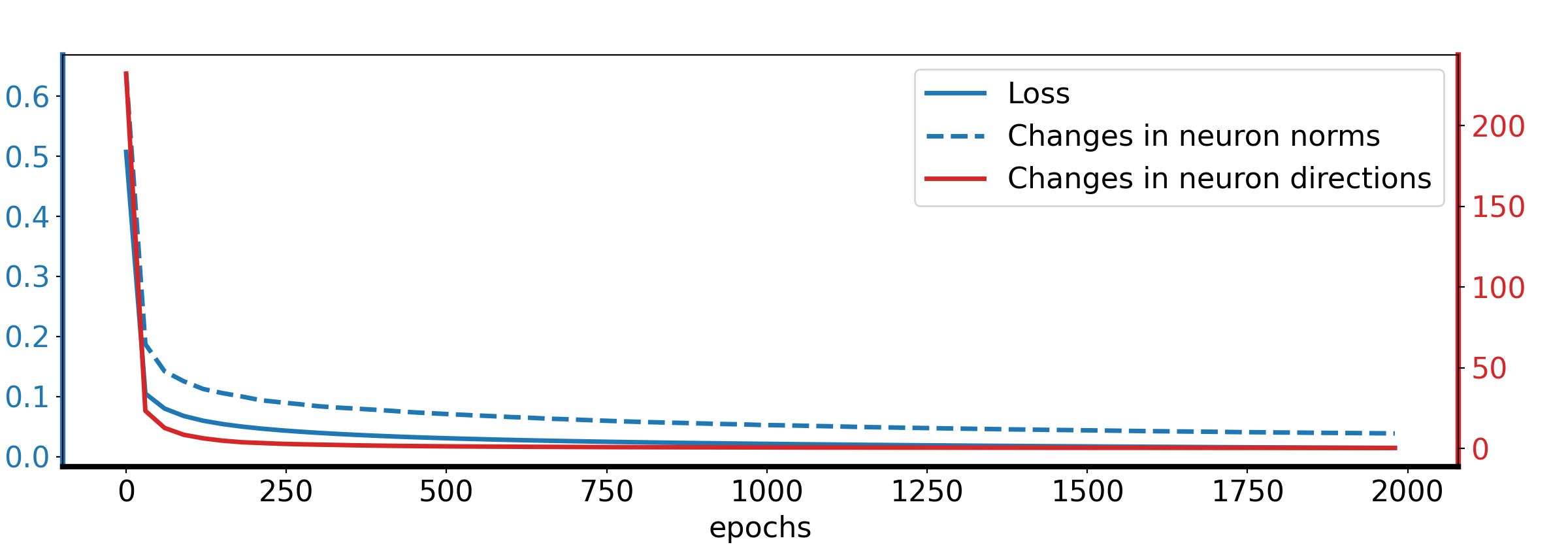}
    \caption{No two-phase training when only the first layer is trained (SGD with a batch size of 2000, step size of $2\times 10^{-3}$).}
    \label{fig_minist_one_phase}
\end{figure}

\textbf{Training only the first layer}: In Figure \ref{fig_minist_one_phase}, we show the changes in neuron norms and directions over the training trajectory when we run stochastic gradient descent (SGD) on ONLY the first-layer weights~\citep{brutzkus2018sgd,wang2019}. The plot indicates that two-phase training does not happen in this case.
} 

\newpage
\section{Proof of Lemma \ref{lem_small_norm_informal}: Neuron Dynamics under Small Initialization}\label{app_pf_lem_small_norm}
The following property of $\ell$ (exponential loss $\ell(y,\hat{y})=\exp(-y\hat{y})$ or logistic loss $\ell(y,\hat{y})=2\log(1+\exp(-y\hat{y}))$) will be used throughout the Appendix for proofs of several results:
    \begin{lemma}\label{assump_loss}
        For $\ell$, we have
        \be
            |-\nabla_{\hat{y}}\ell (y,\hat{y})-y|\leq 2|\hat{y}|,\forall y\in\{+1,-1\},\quad \forall |\hat{y}|\leq 1\,.
        \ee
    \end{lemma}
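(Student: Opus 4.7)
The plan is to handle the two loss functions separately, since in each case the inequality reduces to a routine one-variable bound after computing $-\nabla_{\hat{y}}\ell$ and subtracting $y$.

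For the exponential loss $\ell(y,\hat y)=\exp(-y\hat y)$, I would first compute $-\nabla_{\hat y}\ell(y,\hat y)=y\exp(-y\hat y)$. Since $y\in\{-1,+1\}$, we have $|y|=1$, so
\[
|-\nabla_{\hat y}\ell(y,\hat y)-y|\;=\;|y|\,\bigl|\exp(-y\hat y)-1\bigr|\;=\;|\exp(u)-1|
\]
where $u:=-y\hat y$ satisfies $|u|=|\hat y|\le 1$. It then suffices to bound $|\exp(u)-1|$ by $2|u|$ for $|u|\le 1$. The naive mean-value bound gives $|\exp(u)-1|\le e|u|$, which is not tight enough since $e>2$. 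I would instead expand in Taylor series and bound
\[
|\exp(u)-1|\;\le\;|u|\sum_{k=0}^{\infty}\frac{|u|^{k}}{(k+1)!}\;\le\;|u|\sum_{k=0}^{\infty}\frac{1}{(k+1)!}\;=\;(e-1)\,|u|\;<\;2|u|,
\]
so the exponential-loss case follows with room to spare.

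For the logistic loss $\ell(y,\hat y)=2\log(1+\exp(-y\hat y))$, I would compute
\[
-\nabla_{\hat y}\ell(y,\hat y)\;=\;\frac{2y\exp(-y\hat y)}{1+\exp(-y\hat y)}\;=\;\frac{2y}{1+\exp(y\hat y)},
\]
so that subtracting $y$ and combining fractions gives
\[
-\nabla_{\hat y}\ell(y,\hat y)-y\;=\;y\cdot\frac{2-(1+\exp(y\hat y))}{1+\exp(y\hat y)}\;=\;y\cdot\frac{1-\exp(y\hat y)}{1+\exp(y\hat y)}\;=\;-y\,\tanh\!\bigl(y\hat y/2\bigr),
\]
using the identity $\tanh(x)=(e^{2x}-1)/(e^{2x}+1)$. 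Taking absolute values and applying the elementary inequality $|\tanh(x)|\le|x|$ (valid on all of $\mathbb{R}$) yields
\[
|-\nabla_{\hat y}\ell(y,\hat y)-y|\;=\;|\tanh(y\hat y/2)|\;\le\;|y\hat y/2|\;=\;|\hat y|/2\;\le\;2|\hat y|,
\]
which in fact holds with no restriction on $|\hat y|$.

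There is no real obstacle here; the only subtle point is the exponential case, where the off-the-shelf bound $|e^u-1|\le e|u|$ would be $e$-times-bigger instead of $2$-times-bigger and would not close. The Taylor expansion $e^u-1=\sum_{k\ge 1}u^k/k!$ on $|u|\le 1$ gives the sharper constant $e-1<2$ that the lemma asks for, and the logistic case then falls out immediately from the $\tanh$ identity.
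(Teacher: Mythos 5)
Your proof is correct. For the exponential loss you follow essentially the same path as the paper — both reduce to bounding $|\exp(-y\hat{y})-1|$ — but where the paper cites the fact $2x\geq\max\{1-\exp(-x),\exp(x)-1\}$ on $[0,1]$, you derive the slightly sharper constant $e-1$ from the Taylor series, an equally valid and more self-contained way to close. For the logistic loss your route is genuinely different: the paper funnels back to the same exponential reduction by dropping the factor $1/(1+\exp(-y\hat{y}))\leq 1$ and reusing $|\exp(-y\hat{y})-1|\leq 2|\hat{y}|$, whereas you recognize $-\nabla_{\hat{y}}\ell-y=-y\tanh(y\hat{y}/2)$ and apply $|\tanh(x)|\leq|x|$, obtaining the much sharper bound $|\hat{y}|/2$ with no restriction on $|\hat{y}|$. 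Both routes work; the paper's is more economical in that one elementary inequality serves both cases, while yours exposes the $\tanh$ structure of the logistic gradient and shows that the logistic-loss estimate is unconditional and tighter by a factor of four.
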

    \begin{proof}
        \textbf{Exponential loss: when $\ell(y,\hat{y})=\exp(-y\hat{y})$:}
        \begin{align*}
            |-\nabla_{\hat{y}}\ell (y,\hat{y})-y|&=\;|y\exp(-y\hat{y})-y|\\
            &\leq\; |y||\exp(-y\hat{y})-1|\\
            &\leq\; |\exp(-y\hat{y})-1|\leq 2|\hat{y}|\,,
        \end{align*}
        where the last inequality is due to the fact that $2x\geq \max\{1-\exp(-x),\exp(x)-1\},\forall x\in[0,1]$.
        \textbf{Logistic loss: when $\ell(y,\hat{y})=2\log(1+\exp(-y\hat{y}))$:}
        \begin{align*}
            |-\nabla_{\hat{y}}\ell (y,\hat{y})-y|&=\;\lvt 2y\frac{\exp(-y\hat{y})}{1+\exp(-y\hat{y})}-y\rvt\\
            &=\; \lvt \frac{y\exp(-y\hat{y})-y}{1+\exp(-y\hat{y})}\rvt\\
            &\leq\; |y||\exp(-y\hat{y})-1|\\
            &\leq\; |\exp(-y\hat{y})-1|\leq 2|\hat{y}|\,,
        \end{align*}
    \end{proof}
\begin{remark}
    More generally, our subsequent results regarding neuron dynamics under small initialization hold for any loss function that satisfies the condition stated in Lemma \ref{assump_loss}, which includes the $l_2$ loss $\ell(y,\hat{y})=\frac{1}{2}(y-\hat{y})^2$ studied in~\citet{boursier2022gradient}. 
\end{remark}
\subsection{Formal statement}
Our results for neuron direction dynamics during the early phase of the training will be stated for networks with any $\alpha$-leaky ReLU activation $\sigma(x)=\max\{x,\alpha x\}$ with $\alpha\in [0,1]$. In particular, it is the ReLU activation when $\alpha=0$, which is the activation function we considered in the main paper, and it is the linear activation when $\alpha=1$.

Denote: $X_{\max}=\max_i\|x_i\|, W_{\max}=\max_j\|[W_0]_{:,j}\|$. The formal statement of Lemma \ref{lem_small_norm_informal} is as follow:
\begin{customlem}{1}
    Let the activation function be an $\alpha$-leaky ReLU activation $\sigma(x)=\max\{x,\alpha x\}$. Given some initialization from \eqref{eq_init}, for any $\epsilon\leq \frac{1}{4\sqrt{h} X_{\max}W_{\max}^2}$, then any solution to the gradient flow dynamics \eqref{eq_gf} satisfies that $\forall t\leq T=\frac{1}{4nX_{\max}}\log\frac{1}{\sqrt{h}\epsilon}
    $,
    \ben
        \max_j\lV \frac{d}{dt} \frac{w_j(t)}{\|w_j(t)\|}-\sign(v_j(0)) \mathcal{P}_{w_j(t)} x_a(w_j(t))\rV\leq 4\epsilon n\sqrt{h}X_{\max}^2W_{\max}^2\,,
    \een
    where
    \ben
        x_a(w_j)=\sum_{i=1}^nx_iy_i\sigma'(\lan x_i,w_j\ran)=\sum_{i:\lan x_i,w_j\ran>0}x_iy_i+\alpha\sum_{i:\lan x_i,w_j\ran\leq 0}x_iy_i\,.
    \een
\end{customlem}

With Lemma \ref{lem_small_norm_informal}, and set $\alpha=0$, we obtain the results stated in the main paper. Lemma \ref{lem_small_norm_informal} is a direct result of the following two lemmas.
\begin{lemma}\label{lem_small_norm}
     Let the activation function be an $\alpha$-leaky ReLU activation $\sigma(x)=\max\{x,\alpha x\}$. Given some initialization in \eqref{eq_init}, then for any $\epsilon\leq \frac{1}{4\sqrt{h} X_{\max}W_{\max}^2}$, any solution to the gradient flow dynamics \eqref{eq_gf} satisfies
    \be
        \max_j\|w_j(t)\|^2\leq \frac{2\epsilon W_{\max}^2}{\sqrt{h}},\quad \max_i|f(x_i;W(t),v(t))|\leq 2\epsilon \sqrt{h}X_{\max}W_{\max}^2\,,
    \ee
    $\forall t\leq \frac{1}{4nX_{\max}}\log\frac{1}{\sqrt{h}\epsilon}
    $.
\end{lemma}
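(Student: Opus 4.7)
The plan is to establish both bounds simultaneously via a bootstrap (continuity) argument on a maximal time interval, coupled with a Gronwall-type differential inequality for $\|w_j(t)\|^2$ that is made available by the balancedness $|v_j(t)|=\|w_j(t)\|$. Specifically, define $t^\star$ as the supremum of times $t \in [0,T]$ for which $\max_j \|w_j(s)\|^2 \leq \frac{2\epsilon W_{\max}^2}{\sqrt{h}}$ holds for every $s\in[0,t]$. The goal is to show that on $[0,t^\star]$ this bootstrap inequality is in fact achieved with strict slack, which by continuity forces $t^\star=T$; the output bound then drops out as an immediate consequence of the norm bound.

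First I would derive the key differential identity. From the gradient flow formula $\dot{w}_j = v_j\sum_i g_i \sigma'(\langle x_i,w_j\rangle)x_i$, where $g_i:=-\nabla_{\hat{y}}\ell(y_i,f(x_i))$, together with the leaky-ReLU identity $\sigma'(z)z=\sigma(z)$, I obtain
\be
    \frac{d}{dt}\|w_j\|^2 \;=\; 2v_j\sum_{i=1}^n g_i\,\sigma(\langle x_i,w_j\rangle).
\ee
Combining $|v_j|=\|w_j\|$ (balancedness) with $|\sigma(\langle x_i,w_j\rangle)|\leq \|x_i\|\|w_j\|$ gives $\frac{d}{dt}\|w_j\|^2 \leq 2\|w_j\|^2 \sum_i |g_i|\|x_i\|$. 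Next, for $t\in[0,t^\star]$, the bootstrap hypothesis yields
\be
    |f(x_i)| \;\leq\; X_{\max}\sum_j \|w_j\|^2 \;\leq\; hX_{\max}\cdot\frac{2\epsilon W_{\max}^2}{\sqrt{h}} \;=\; 2\epsilon\sqrt{h}\,X_{\max}W_{\max}^2 \;\leq\; \tfrac{1}{2},
\ee
where the last inequality uses the hypothesis $\epsilon\leq \frac{1}{4\sqrt{h}X_{\max}W_{\max}^2}$. Lemma \ref{assump_loss} then delivers $|g_i|\leq |y_i|+2|f(x_i)| \leq 2$, whence $\sum_i |g_i|\|x_i\| \leq 2nX_{\max}$.

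Gronwall applied to $\frac{d}{dt}\|w_j\|^2 \leq 4nX_{\max}\|w_j\|^2$ gives $\|w_j(t)\|^2 \leq \epsilon^2 W_{\max}^2 \exp(4nX_{\max}t)$. Evaluated at $t=T=\frac{1}{4nX_{\max}}\log\frac{1}{\sqrt{h}\epsilon}$, this is exactly $\frac{\epsilon W_{\max}^2}{\sqrt{h}}$, which is strictly below the bootstrap threshold $\frac{2\epsilon W_{\max}^2}{\sqrt{h}}$. Hence by continuity $t^\star=T$, and inserting this uniform norm bound into the second display above yields the claimed output bound $|f(x_i)|\leq 2\epsilon\sqrt{h}X_{\max}W_{\max}^2$ for all $t\leq T$.

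The main technical subtlety is keeping the $\sqrt{h}$ factor consistent throughout the bootstrap: the hypothesis controls $\max_j \|w_j\|^2$, but $|f(x_i)|$ only admits the loose upper bound $hX_{\max}\max_j\|w_j\|^2$, so there is an apparent loss of a factor $h$. Calibrating the bootstrap threshold precisely at the scale $\epsilon/\sqrt{h}$, rather than $\epsilon$, is exactly what makes the resulting output bound scale as $\mathcal{O}(\epsilon\sqrt{h})$, small enough (under the stated smallness of $\epsilon$) to invoke Lemma \ref{assump_loss} and close the argument self-consistently while leaving strict room for the continuity step.
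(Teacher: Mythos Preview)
Your proposal is correct and follows essentially the same bootstrap-plus-Gronwall argument as the paper: both define a maximal time on which the desired bounds hold, derive $\frac{d}{dt}\|w_j\|^2\leq C\|w_j\|^2$ from balancedness and Lemma~\ref{assump_loss}, and then use Gronwall to show the bound is achieved with strict slack, forcing the maximal time to equal $T$. The only cosmetic differences are that the paper sets its stopping time on the output bound rather than the norm bound (and then argues $\tau_{j^*}\leq T$ via the implication $|f|>2\epsilon\sqrt{h}X_{\max}W_{\max}^2\Rightarrow \exists j,\ \|w_j\|^2>2\epsilon W_{\max}^2/\sqrt{h}$), and uses the slightly sharper estimate $|g_i|\leq 1+4\epsilon\sqrt{h}X_{\max}W_{\max}^2$ before relaxing it to the same conclusion you reach directly with $|g_i|\leq 2$.
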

\begin{lemma}\label{lem_dir_flow_approx}
    Let the activation function be an $\alpha$-leaky ReLU activation $\sigma(x)=\max\{x,\alpha x\}$. Consider any solution to the gradient flow dynamic \eqref{eq_gf} starting from initialization \eqref{eq_init}. Whenever $\max_i|f(x_i;W,v)|\leq 1$, we have, $\forall i\in[n]$,
    \be
        \lV \frac{d}{dt} \frac{w_j}{\|w_j\|}-\sign(v_j(0)) \lp I-\frac{w_jw_j^\top }{\|w_j\|^2}\rp  x_a(w_j)\rV\leq 2n X_{\max} \max_i|f(x_i;W,v)|\,,
    \ee
    where
    \ben
        x_a(w_j)=\sum_{i=1}^nx_iy_i\sigma'(\lan x_i,w_j\ran)=\sum_{i:\lan x_i,w_j\ran>0}x_iy_i+\alpha\sum_{i:\lan x_i,w_j\ran\leq 0}x_iy_i\,.
    \een
\end{lemma}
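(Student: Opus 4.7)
The plan is to compute the directional derivative $\frac{d}{dt}\frac{w_j}{\|w_j\|}$ exactly from the gradient flow equation, then isolate the only quantity where the true dynamics differ from the target vector field $\sign(v_j(0))\mathcal{P}_{w_j}x_a(w_j)$, namely the residual $-\nabla_{\hat y}\ell(y_i,f(x_i))$ versus $y_i$, and bound that residual using Lemma \ref{assump_loss}.

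First I would write out the gradient flow for a single neuron. Under the chain rule applied to $\mathcal{L}(W,v)=\sum_i\ell(y_i,f(x_i;W,v))$ and the initialization in \eqref{eq_init}, the balancedness identity $v_j(t)=\sign(v_j(0))\|w_j(t)\|$ (invoked from the body text, citing \citep{Du&Lee, boursier2022gradient}) yields
\begin{equation*}
\dot w_j \;=\; \sign(v_j(0))\,\|w_j\|\sum_{i=1}^n r_i\,\sigma'(\langle x_i,w_j\rangle)\,x_i, \qquad r_i := -\nabla_{\hat y}\ell(y_i,f(x_i;W,v)).
\end{equation*}
Dividing by $\|w_j\|$ and projecting onto the tangent space of the sphere using the standard identity $\frac{d}{dt}\frac{w_j}{\|w_j\|}=\frac{1}{\|w_j\|}\mathcal{P}_{w_j}\dot w_j$, the factor of $\|w_j\|$ cancels, leaving the clean expression
\begin{equation*}
\frac{d}{dt}\frac{w_j}{\|w_j\|} \;=\; \sign(v_j(0))\,\mathcal{P}_{w_j}\sum_{i=1}^n r_i\,\sigma'(\langle x_i,w_j\rangle)\,x_i.
\end{equation*}

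Next I would subtract the target $\sign(v_j(0))\mathcal{P}_{w_j}x_a(w_j)=\sign(v_j(0))\mathcal{P}_{w_j}\sum_i y_i\sigma'(\langle x_i,w_j\rangle)x_i$ and take norms, producing the bound
\begin{equation*}
\left\|\tfrac{d}{dt}\tfrac{w_j}{\|w_j\|}-\sign(v_j(0))\mathcal{P}_{w_j}x_a(w_j)\right\|\;\leq\;\|\mathcal{P}_{w_j}\|_2\sum_{i=1}^n |r_i-y_i|\cdot|\sigma'(\langle x_i,w_j\rangle)|\cdot\|x_i\|.
\end{equation*}
Three elementary facts finish the argument: $\|\mathcal{P}_{w_j}\|_2\leq 1$ since it is an orthogonal projector; $|\sigma'(\cdot)|\leq 1$ for every $\alpha$-leaky ReLU with $\alpha\in[0,1]$; and $\|x_i\|\leq X_{\max}$. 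The residual $|r_i-y_i|$ is exactly the quantity Lemma \ref{assump_loss} controls: under the hypothesis $\max_i|f(x_i;W,v)|\leq 1$ we have $|r_i-y_i|\leq 2|f(x_i;W,v)|\leq 2\max_i|f(x_i;W,v)|$. Summing over $i\in[n]$ gives the advertised bound $2nX_{\max}\max_i|f(x_i;W,v)|$.

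There is no serious obstacle here: the lemma is essentially a bookkeeping result, and the only non-trivial ingredient is that the normalization $\|w_j\|$ cancels after projecting, which is why the statement does not require any lower bound on $\|w_j\|$. The mildly subtle point is making sure the argument goes through uniformly for every $\alpha$-leaky ReLU with $\alpha\in[0,1]$ and for both exponential and logistic loss; both are handled by the already-stated Lemma \ref{assump_loss} and the pointwise bound $|\sigma'|\leq 1$, so no case analysis is needed.
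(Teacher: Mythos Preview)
Your proof is correct and follows essentially the same route as the paper's own proof: compute $\dot w_j$ using balancedness, project to get $\frac{d}{dt}\frac{w_j}{\|w_j\|}$ with the $\|w_j\|$ factor canceling, subtract the target, and bound the residual $-\nabla_{\hat y}\ell(y_i,f(x_i))-y_i$ via Lemma~\ref{assump_loss} together with $\|\mathcal{P}_{w_j}\|_2\le 1$, $|\sigma'|\le 1$, and $\|x_i\|\le X_{\max}$. If anything, your version is slightly more careful in making the $|\sigma'|\le 1$ step explicit for general $\alpha$-leaky ReLU, whereas the paper's write-up mixes the indicator $\one_{\lan x_i,w_j\ran>0}$ with $\sigma'$ and silently drops the projector when passing to the norm bound.
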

\begin{remark}
    By stating our approximation results for neuron directional dynamics with any $\alpha$-leaky ReLU activation function, we highlight that even for some networks with other activation functions than ReLU, there is a similar notion of neuron alignment at the early stage of the training, and the analytical tools used in this paper can be applied to them. However, we note that our main results (Theorem \ref{thm_conv_main}) will not directly apply as the neuron directional dynamics have changed as we consider an activation function different than ReLU (see the general definition of $x_a(w_j)$), and additional efforts are required to establish the directional convergence for general leaky-ReLU functions.
\end{remark}
\subsection{Proof of Lemma \ref{lem_small_norm}: Bounds on Neuron Norms}
\begin{proof}[Proof of Lemma \ref{lem_small_norm}]
    Under gradient flow, we have
    \be
        \frac{d}{dt}w_j=-\sum_{i=1}^n\one_{\lan x_i,w_j\ran> 0}\nabla_{\hat{y}}\ell(y_i,f(x_i;W,v))x_iv_j\,.
    \ee
    Balanced initialization enforces $v_j=\sign(v_j(0))\|w_j\|$, hence
    \be
        \frac{d}{dt}w_j=-\sum_{i=1}^n\sigma'(\lan x_i,w_j\ran)\nabla_{\hat{y}}\ell(y_i,f(x_i;W,v))x_i\sign(v_j(0))\|w_j\|\,.
    \ee
    Let $T:=\inf\{t:\ \max_{i}|f(x_i;W(t),v(t))|>2\epsilon \sqrt{h}X_{\max}W_{\max}^2\}$,  then $\forall t\leq T, j\in[h]$, we have 
    \begin{align}
        \frac{d}{dt}\|w_j\|^2
        &=\;\lan w_j, \frac{d}{dt}w_j \ran & \nonumber\\
        &=\; -2\sum_{i=1}^n\sigma'(\lan x_i,w_j\ran)\nabla_{\hat{y}}\ell(y_i,f(x_i;W,v))\lan x_i, w_j\ran \sign(v_j(0))\|w_j\|&\nonumber\\
        &\leq\; 2\sum_{i=1}^n\lvt\nabla_{\hat{y}}\ell(y_i,f(x_i;W,v))\rvt \lvt\lan x_i, w_j\ran\rvt \|w_j\|&\nonumber\\
        &\leq\; 2\sum_{i=1}^n(|y_i|+2|f(x_i;W,v)|) \lvt\lan x_i, w_j\ran\rvt \|w_j\|& (\text{by Lemma \ref{assump_loss}})\nonumber\\
        &\leq\; 2\sum_{i=1}^n(1+4\epsilon \sqrt{h}X_{\max}W_{\max}^2) \lvt\lan x_i, w_j\ran\rvt \|w_j\|& (\text{Since } t\leq T)\nonumber\\
        &\leq\; 2\sum_{i=1}^n(1+4\epsilon \sqrt{h}X_{\max}W_{\max}^2) \|x_i\|\|w_j\|^2& \nonumber\\
        &\leq \; 2n(X_{\max}+4\epsilon\sqrt{h}X_{\max}^2W_{\max}^2 )\|w_j\|^2\,.&
    \end{align}
    Let $\tau_j:=\inf\{t: \|w_j(t)\|^2>\frac{2\epsilon W_{\max}^2}{\sqrt{h}}\}$, and let $j^*:=\arg\min_j \tau_j$, then $\tau_{j^*}=\min_{j}\tau_j\leq T$ due to the fact that 
    \ben
        |f(x_{i};W,v)|= \lvt\sum_{j\in[h]}\sigma'(\lan x_i,w_j\ran)v_j\lan w_j,x_i\ran\rvt\leq \sum_{j\in[h]} \|w_j\|^2\|x_i\|\leq hX_{\max}\max_{j\in[h]}\|w_j\|^2\,,
    \een
    which implies "$|f(x_i;W(t),v(t))|>2\epsilon \sqrt{h}X_{\max}W_{\max}^2\Rightarrow \exists j, s.t.\|w_j(t)\|^2>\frac{2\epsilon W_{\max}^2}{\sqrt{h}}$". 
    
    Then for $t\leq \tau_{j^*}$, we have
    \be
        \frac{d}{dt}\|w_{j^*}\|^2\leq 2n(X_{\max}+4\epsilon\sqrt{h}X_{\max}^2W_{\max}^2 )\|w_{j^*}\|^2\,.
    \ee
    By Gr\"onwall's inequality, we have $\forall t\leq \tau_{j^*}$
    \begin{align*}
        \|w_{j^*}(t)\|^2&\leq\; \exp\lp 2n(X_{\max}+4\epsilon\sqrt{h}X_{\max}^2W_{\max}^2 )t\rp\|w_{j^*}(0)\|^2\,,\\
         &=\;\exp\lp 2n(X_{\max}+4\epsilon\sqrt{h}X_{\max}^2W_{\max}^2 )t\rp\epsilon^2\|[W_0]_{:,j^*}\|^2\\
         &\leq \;\exp\lp 2n(X_{\max}+4\epsilon\sqrt{h}X_{\max}^2W_{\max}^2 )t\rp\epsilon^2W_{\max}^2\,.
    \end{align*}
    Suppose $\tau_{j^*}<\frac{1}{4nX_{\max}}\log\lp\frac{1}{\sqrt{h}\epsilon}\rp$, then by the continuity of $\|w_{j^*}(t)\|^2$, we have 
    \begin{align*}
        \frac{2\epsilon W_{\max}^2}{\sqrt{h}}\leq \|w_{j^*}(\tau_{j^*})\|^2&\leq\; \exp\lp 2n(X_{\max}+4\epsilon\sqrt{h}X_{\max}^2W_{\max}^2 )\tau_{j^*}\rp\epsilon^2W_{\max}^2\\
        &\leq\; \exp\lp 2n(X_{\max}+4\epsilon\sqrt{h}X_{\max}^2W_{\max}^2 )\frac{1}{4nX_{\max}}\log\lp\frac{1}{\sqrt{h}\epsilon}\rp\rp\epsilon^2W_{\max}^2\\
        &\leq\; \exp\lp 
        \frac{1+4\epsilon \sqrt{h}X_{\max}W_{\max}^2}{2}\log\lp\frac{1}{\sqrt{h}\epsilon}\rp\rp\epsilon^2W_{\max}^2\\
        &\leq\; \exp\lp 
        \log\lp\frac{1}{\sqrt{h}\epsilon}\rp\rp\epsilon^2W_{\max}^2=\frac{\epsilon W_{\max}^2}{\sqrt{h}}\,,
    \end{align*}
    which leads to a contradiction $2\epsilon\leq \epsilon$. Therefore, one must have $T\geq \tau_{j^*}\geq \frac{1}{4nX_{\max}}\log\lp\frac{1}{\sqrt{h}\epsilon}\rp$. This finishes the proof.
\end{proof}
\subsection{Proof of Lemma \ref{lem_dir_flow_approx}: Directional Dynamics of Neurons}
\begin{proof}[Proof of Lemma \ref{lem_dir_flow_approx}]
    As we showed in the proof for Lemma \ref{lem_small_norm}, under balanced initialization, 
    \be
        \frac{d}{dt}w_j=-\sum_{i=1}^n\one_{\lan x_i,w_j\ran> 0}\nabla_{\hat{y}}\ell(y_i,f(x_i;W,v))x_i\sign(v_j(0))\|w_j\|\,.
    \ee
    Then for any $i\in[n]$,
    \begin{align*}
        \frac{d}{dt} \frac{w_j}{\|w_j\|}&=\; -\sign(v_j(0))\sum_{i=1}^n\one_{\lan x_i,w_j\ran> 0}\nabla_{\hat{y}}\ell(y_i,f(x_i;W,v))\lp x_i-\frac{\lan x_i,w_j\ran}{\|w_j\|^2}w_j \rp\\
        &=\; -\sign(v_j(0))\sum_{i: \lan x_i,w_j\ran>0}\nabla_{\hat{y}}\ell(y_i,f(x_i;W,v))\lp x_i-\frac{\lan x_i,w_j\ran}{\|w_j\|^2}w_j \rp\\
        &=\; -\sign(v_j(0))\lp I-\frac{w_jw_j^\top }{\|w_j\|^2}\rp\lp\sum_{i=1}^n\sigma'(\lan x_i,w_j\ran) \nabla_{\hat{y}}\ell(y_i,f(x_i;W,v))x_i\rp\,.
    \end{align*}
    Therefore, whenever $\max_i|f(x_i;W,v)|\leq 1$,
    \begin{align}
        &\;\lV \frac{d}{dt} \frac{w_j}{\|w_j\|}-\sign(v_j(0))\lp I-\frac{w_jw_j^\top }{\|w_j\|^2}\rp x_a(w_j)\rV\nonumber\\
        =&\;\lV \sign(v_j(0))\lp\sum_{i=1}^n\sigma'(\lan x_i,w_j\ran) \lp\nabla_{\hat{y}}\ell(y_i,f(x_i;W,v))+y_i\rp x_i\rp\rV\nonumber\\
        \leq &\; \sum_{i=1}^n |\nabla_{\hat{y}}\ell(y_i,f(x_i;W,v))+y_i |\cdot \|x_i\|\nonumber\\
        \leq &\; \sum_{i=1}^n2|f(x_i;W,v)|\cdot \|x_i\|\leq 2nM_x \max_i|f(x_i;W,v)|\,.
    \end{align}
\end{proof}

\newpage
\section{Proof for Theorem \ref{thm_conv_main}: Early Alignment Phase}\label{app_pf_thm_align}
We break the proof of Theorem \ref{thm_conv_main} into two parts: In Appendix \ref{app_pf_thm_align} we prove the first part regarding directional convergence. Then in Appendix \ref{app_pf_thm_conv} we prove the remaining statement on final convergence and low-rank bias.
\subsection{Auxiliary lemmas}
The first several Lemmas concern mostly some conic geometry given the data assumption:

Consider the following conic hull
 \be
 K=\mathcal{CH}(\{x_iy_i,i\in[n]\})=\lb\sum_{i=1}^na_ix_iy_i: a_i\geq 0,i\in[n]\rb\,.
 \ee
It is clear that $x_iy_i\in K,\forall i$, and $x_a(w)\in K,\forall w$. 
The following lemma shows any pair of vectors in $K$ is $\mu$-coherent.
 \begin{lemma}\label{lem_app_K_coherence}
    $\cos(z_1,z_2)\geq \mu,\forall 0\neq z_1,z_2\in K$.
\end{lemma}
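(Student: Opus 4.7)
The plan is to expand the inner product by bilinearity, apply the coherence bound from Assumption \ref{assump_data} termwise, and then control the norms of $z_1$ and $z_2$ by the triangle inequality. Specifically, writing $z_1 = \sum_i a_i x_i y_i$ and $z_2 = \sum_j b_j x_j y_j$ with $a_i, b_j \geq 0$, I would first observe that
\[
\langle z_1, z_2 \rangle \;=\; \sum_{i,j} a_i b_j \langle x_i y_i, x_j y_j \rangle \;\geq\; \mu \sum_{i,j} a_i b_j \|x_i\| \|x_j\| \;=\; \mu \Big(\sum_i a_i \|x_i\|\Big)\Big(\sum_j b_j \|x_j\|\Big),
\]
where the inequality uses the definition of $\mu$ in Assumption \ref{assump_data}.

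Next I would apply the triangle inequality to get $\|z_1\| \leq \sum_i a_i \|x_i\|$ and $\|z_2\| \leq \sum_j b_j \|x_j\|$. Dividing, this yields
\[
\cos(z_1, z_2) \;=\; \frac{\langle z_1, z_2 \rangle}{\|z_1\| \|z_2\|} \;\geq\; \mu \cdot \frac{\big(\sum_i a_i \|x_i\|\big)\big(\sum_j b_j \|x_j\|\big)}{\|z_1\| \|z_2\|} \;\geq\; \mu.
\]

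The only mild point to verify is that the denominators are nonzero and that the final ratio step is justified, i.e., that $z_1 \neq 0$ implies $\sum_i a_i \|x_i\| > 0$. This follows because Assumption \ref{assump_data} (with $\mu > 0$) forces each $\|x_i\| > 0$, so if $\sum_i a_i \|x_i\| = 0$ then all $a_i = 0$ and hence $z_1 = 0$, contradicting the hypothesis; the same reasoning applies to $z_2$. I do not anticipate any significant obstacle here: the argument is essentially a two-line calculation, with Assumption \ref{assump_data} and the triangle inequality doing all the work.
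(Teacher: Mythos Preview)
Your proposal is correct and matches the paper's proof essentially line for line: both expand $\langle z_1,z_2\rangle$ bilinearly, apply Assumption~\ref{assump_data} termwise, and then bound $\|z_1\|\|z_2\|$ by $\bigl(\sum_i a_i\|x_i\|\bigr)\bigl(\sum_j b_j\|x_j\|\bigr)$ via the triangle inequality. Your added check that $z_k\neq 0$ implies $\sum a_i\|x_i\|>0$ is a small nicety the paper leaves implicit.
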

\begin{proof}
    Since $z_1,z_2\in K$, we let $z_1=\sum_{i=1}^nx_iy_ia_{1i}$, and$z_2=\sum_{j=1}^nx_jy_ja_{2j}$, where $a_{1i},a_{2j}\geq 0$ but not all of them.
    \begin{align*}
        \cos(z_1,z_2)= \frac{1}{\|z_1\|\|z_2\|}\lan z_1,z_2\ran
        &=\; \frac{1}{\|z_1\|\|z_2\|}\sum_{i,j\in[n]}a_{1i}a_{2j}\lan x_iy_i,x_jy_j\ran\\
        &=\; \frac{\sum_{i,j\in[n]}\|x_i\|\|x_j\|a_{1i}a_{2j}\mu}{\|z_1\|\|z_2\|}\geq \mu\,,
    \end{align*}
    where the last inequality is due to $$\|z_1\|\|z_2\|\leq \lp\sum_{i=1}^n\|x_i\|a_{1i}\rp\lp\sum_{j=1}^n\|x_j\|a_{2j}\rp=\sum_{i,j\in[n]}\|x_i\|\|x_j\|a_{1i}a_{2j}\,.$$
\end{proof}
The following lemma is some basic results regarding $\mathcal{S}_+$ and $\mathcal{S}_-$:
\begin{lemma}
    $\mathcal{S}_+$ and $\mathcal{S}_-$ are convex cones (excluding the origin).
\end{lemma}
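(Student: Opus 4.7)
The statement is a direct structural check on the sets defined at the start of Section 3, so the plan is to verify the two defining properties of a convex cone, i.e., closure under positive scalar multiplication and closure under convex combinations, straight from the linear-inequality description of $\mathcal{S}_+$ and $\mathcal{S}_-$. Recall that, writing out the indicator condition, $\mathcal{S}_+$ is precisely the set of $z\in\mathbb{R}^D$ such that $\langle x_i,z\rangle>0$ for every $i\in\mathcal{I}_+$ and $\langle x_i,z\rangle\leq 0$ for every $i\in\mathcal{I}_-$, and analogously for $\mathcal{S}_-$ with the labels swapped.

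For the cone property I would observe that for any $\lambda>0$ and $z\in\mathcal{S}_+$, $\langle x_i,\lambda z\rangle = \lambda\langle x_i,z\rangle$ has the same sign as $\langle x_i,z\rangle$, so every strict and non-strict inequality defining $\mathcal{S}_+$ is preserved; hence $\lambda z\in\mathcal{S}_+$. For convexity, pick $z_1,z_2\in\mathcal{S}_+$ and $\theta\in(0,1)$. For $i\in\mathcal{I}_+$ we have $\langle x_i,z_1\rangle>0$ and $\langle x_i,z_2\rangle>0$, hence
\begin{equation*}
\langle x_i, \theta z_1+(1-\theta)z_2\rangle=\theta\langle x_i,z_1\rangle+(1-\theta)\langle x_i,z_2\rangle>0,
\end{equation*}
since a strictly positive combination of strictly positive reals is strictly positive. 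For $i\in\mathcal{I}_-$ the same computation yields $\leq 0$ because both summands are $\leq 0$. Thus $\theta z_1+(1-\theta)z_2\in\mathcal{S}_+$. The identical argument, with the roles of $\mathcal{I}_+$ and $\mathcal{I}_-$ swapped, handles $\mathcal{S}_-$.

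Finally, the parenthetical ``excluding the origin'' is automatic: $\langle x_i,0\rangle=0$ fails the strict inequality required for every $i\in\mathcal{I}_+$ in the definition of $\mathcal{S}_+$ (and likewise for $\mathcal{S}_-$), so $0\notin\mathcal{S}_+\cup\mathcal{S}_-$. I do not anticipate any real obstacle in this proof; the only point that requires a moment of care is keeping track of which inequalities are strict and which are non-strict, and checking that positive combinations preserve both types on the correct side of $0$.
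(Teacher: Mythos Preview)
Your proof is correct and follows essentially the same approach as the paper: both verify directly from the linear-inequality description that positive scaling preserves the sign of each $\langle x_i,z\rangle$ and that positive combinations of two points preserve the (strict or non-strict) inequality on the correct side of zero. Your write-up is in fact slightly more careful than the paper's, which only spells out the strict-inequality case and leaves the $\leq 0$ case and the exclusion of the origin implicit.
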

\begin{proof}
    Since $\one_{\lan x_i,z\ran}=\one_{\lan x_i,az\ran},\forall i\in[n],a>0$, $\mathcal{S}_+,\mathcal{S}_-$ are cones. Moreover, $\lan x_i,z_1\ran > 0$ and $\lan x_i,z_2\ran > 0$ implies $\lan x_i,a_1z_1+a_2z_2\ran> 0,\forall a_1,a_2>0$, thus $\mathcal{S}_+,\mathcal{S}_-$ are convex cones.
\end{proof}
Now we consider the complete metric space $\mathbb{S}^{D-1}$ (w.r.t. $\arccos(\lan\cdot,\cdot\ran)$) and we are interested in its subsets $K\cap \mathbb{S}^{D-1}$, $\mathcal{S}_+\cap \mathbb{S}^{D-1}$, and $\mathcal{S}_-\cap \mathbb{S}^{D-1}$. First, we have (we use $\mathrm{Int}(S)$ to denote the interior of $S$)
\begin{lemma}\label{lem_app_int}
    $K\cap \mathbb{S}^{D-1} \subset \mathrm{Int}(\mathcal{S}_+\cap \mathbb{S}^{D-1})$, and $-K\cap \mathbb{S}^{D-1} \subset \mathrm{Int}(\mathcal{S}_-\cap \mathbb{S}^{D-1})$
\end{lemma}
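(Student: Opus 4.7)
The plan is to obtain the inclusions by invoking the coherence property of the cone $K$ established in the preceding lemma, which yields strict separation from every data vector.

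First, I would take an arbitrary $z \in K \cap \mathbb{S}^{D-1}$ and note that for every $j \in [n]$, the vector $x_j y_j$ also lies in $K \setminus \{0\}$ (corresponding to the conic combination with a single nonzero coefficient). Applying Lemma \ref{lem_app_K_coherence} to the pair $(z, x_j y_j)$ gives $\cos(z, x_j y_j) \geq \mu > 0$, so $\langle z, x_j y_j\rangle > 0$ for all $j \in [n]$. Splitting into cases according to the label, this says $\langle x_j, z\rangle > 0$ whenever $y_j = +1$ and $\langle x_j, z\rangle < 0$ whenever $y_j = -1$. In particular the activation pattern matches the sign of $y_j$ for every $j$, so $z \in \mathcal{S}_+$.

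Next I would promote this membership to membership in the interior. Define
\[
\delta(z) \;=\; \min_{j \in [n]} \; \mu \, \|x_j\| \; > \; 0 .
\]
Since $\cos(z, x_j y_j) \geq \mu$, one has $|\langle x_j, z\rangle| \geq \mu \|x_j\| \geq \delta(z)$ for every $j$. By continuity of the inner products on $\mathbb{S}^{D-1}$, there is an open neighborhood $U$ of $z$ in $\mathbb{S}^{D-1}$ on which $\operatorname{sign}(\langle x_j, \cdot \rangle) = \operatorname{sign}(y_j)$ for every $j$. Every point of $U$ therefore belongs to $\mathcal{S}_+$, proving $z \in \mathrm{Int}(\mathcal{S}_+ \cap \mathbb{S}^{D-1})$. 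The inclusion $K \cap \mathbb{S}^{D-1} \subset \mathrm{Int}(\mathcal{S}_+ \cap \mathbb{S}^{D-1})$ follows.

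For the companion inclusion $-K \cap \mathbb{S}^{D-1} \subset \mathrm{Int}(\mathcal{S}_- \cap \mathbb{S}^{D-1})$, I would observe that if $z \in -K \cap \mathbb{S}^{D-1}$ then $-z \in K \cap \mathbb{S}^{D-1}$; the argument above applied to $-z$ gives $\langle x_j, -z \rangle > 0$ when $y_j = +1$ and $\langle x_j, -z \rangle < 0$ when $y_j = -1$. Flipping signs, $\langle x_j, z\rangle > 0$ precisely when $y_j < 0$, which is the defining condition of $\mathcal{S}_-$. The same continuity argument then yields interior membership. There is no serious obstacle here: the entire proof is a direct consequence of the coherence bound $\cos(z_1, z_2) \geq \mu$ together with continuity of linear functionals, and the only thing to be careful about is that both sides of the activation pattern inequality become \emph{strict} thanks to $\mu > 0$, which is what guarantees openness.
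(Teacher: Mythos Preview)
Your proof is correct and follows essentially the same approach as the paper: show that every unit vector in $K$ has $\langle z, x_iy_i\rangle \geq \mu\|x_i\|>0$ for all $i$ (the paper does this by direct expansion of the conic combination, while you invoke Lemma~\ref{lem_app_K_coherence}, which is equivalent), then use continuity of the linear functionals $\langle x_i,\cdot\rangle$ to obtain an open neighborhood in $\mathcal{S}_+\cap\mathbb{S}^{D-1}$. The symmetry argument for $-K$ is exactly what the paper means by ``proved similarly.''
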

\begin{proof}
    Consider any $x_c=\sum_{j=1}^na_jx_jy_j\in K\cap \mathbb{S}^{D-1}$, For any $x_i,y_i, i\in[n]$, we have 
    \begin{align*}
        \lan x_c,x_i\ran&=\; \sum_{i=j}^na_j\|x_j\|\lan \frac{x_jy_j}{\|x_j\|},\frac{x_iy_i}{\|x_i\|}\ran \frac{\|x_i\|}{y_i}\\
        &\geq\; \mu y_i\|x_i\| \sum_{i=j}^na_j\|x_j\|\begin{cases}
            \geq \mu X_{\min}>0, & y_i>0\\
            \leq -\mu X_{\min}<0, & y_i<0\\
        \end{cases}\,.
    \end{align*}
    Depending on the sign of $y_i$, we have either
    \ben
        \lan x_c,x_i\ran= \sum_{i=j}^na_j\|x_j\|\lan \frac{x_jy_j}{\|x_j\|},\frac{x_iy_i}{\|x_i\|}\ran \frac{\|x_i\|}{y_i}\geq \mu \frac{\|x_i\|}{y_i} \sum_{i=j}^na_j\|x_j\|\geq \mu X_{\min}>0\,,\ (y_i=+1)
    \een
    or 
    \ben
        \lan x_c,x_i\ran= \sum_{i=j}^na_j\|x_j\|\lan \frac{x_jy_j}{\|x_j\|},\frac{x_iy_i}{\|x_i\|}\ran \frac{\|x_i\|}{y_i}\leq \mu \frac{\|x_i\|}{y_i} \sum_{i=j}^na_j\|x_j\|\leq -\mu X_{\min}<0\,,\ (y_i=-1)
    \een
    where we use the fact that $1=\|x_c\|=\|\sum_{j=1}^na_jx_jy_j\|\leq \sum_{j=1}^na_j\|x_j\|$. This already tells us $x_c\in \mathcal{S}_+\cap \mathbb{S}^{D-1}$.
    
    Since $f_i(z)=\lan z, x_i\ran$ is a continuous function of $z\in\mathbb{S}^{D-1}$. There exists an open ball $\mathcal{B}\lp x_c,\delta_i\rp$ centered at $x_c$ with some radius $\delta_i>0$, such that $\forall z\in \mathcal{B}\lp x_c,\delta_i\rp$, one have $\lvt f_i(z)-f_i\lp x_c\rp\rvt\leq \frac{\mu X_{\min}}{2}$, which implies 
    \ben
        \lan z, x_i\ran\begin{cases}
            \geq \mu X_{\min}/2>0, & y_i>0\\
            \leq -\mu X_{\min}/2<0, & y_i<0\\
        \end{cases}\,.
    \een
    Hence $\cap_{i=1}^n\mathcal{B}\lp \frac{x_c}{\|x_c\|},\delta_i\rp\in\mathcal{S}_+\cap \mathbb{S}^{D-1}$. Therefore, $x_c\in \mathrm{Int}(\mathcal{S}_+\cap \mathbb
    {S}^{D-1})$. This suffices to show $K\cap \mathbb
    {S}^{D-1}\subset \mathrm{Int}(\mathcal{S}_+\cap \mathbb
    {S}^{D-1})$. The other statement $-K\cap \mathbb{S}^{D-1} \subset \mathrm{Int}(\mathcal{S}_-\cap \mathbb{S}^{D-1})$ is proved similarly.
\end{proof}
The following two lemmas are some direct results of Lemma \ref{lem_app_int}.
\begin{lemma}\label{lem_app_zeta1}
    $\exists \zeta_1>0$ such that 
    \be
        \mathcal{S}_{x_+}^{\zeta_1}\subset \mathcal{S_+},\qquad\mathcal{S}_{x_-}^{\zeta_1}\subset \mathcal{S_-}\,,
    \ee
    where $\mathcal{S}_x^\zeta:=\{z\in\mathbb{R}^D:\ \cos(z,x)\geq \sqrt{1-\zeta}\}$.
\end{lemma}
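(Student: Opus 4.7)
The plan is to leverage the fact that $x_+$ and $x_-$ lie inside the conic hull $K$ and $-K$ respectively, and then invoke Lemma \ref{lem_app_int} to locate them in the interior of $\mathcal{S}_+\cap \mathbb{S}^{D-1}$ and $\mathcal{S}_-\cap \mathbb{S}^{D-1}$. The existence of a cosine-gap $\zeta_1$ will then follow from the equivalence between the angular metric on the sphere and the cosine-based definition of $\mathcal{S}_x^\zeta$.

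First I would observe that $x_+=\sum_{i\in\mathcal{I}_+}x_i = \sum_{i\in\mathcal{I}_+}x_iy_i$, so $x_+\in K$, and likewise $x_-=-\sum_{i\in\mathcal{I}_-}x_iy_i \in -K$. Normalizing, $x_+/\|x_+\|\in K\cap \mathbb{S}^{D-1}$ and $x_-/\|x_-\|\in (-K)\cap \mathbb{S}^{D-1}$. By Lemma \ref{lem_app_int}, both unit vectors lie in the interior (with respect to the arc-length metric $\arccos\langle\cdot,\cdot\rangle$) of $\mathcal{S}_+\cap \mathbb{S}^{D-1}$ and $\mathcal{S}_-\cap \mathbb{S}^{D-1}$, respectively. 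Explicitly, the proof of Lemma \ref{lem_app_int} provides a $\delta_+>0$ such that the open Euclidean ball $\mathcal{B}(x_+/\|x_+\|,\delta_+)\cap \mathbb{S}^{D-1}\subset \mathcal{S}_+\cap \mathbb{S}^{D-1}$, and analogously a $\delta_->0$ for $x_-$.

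Next I would convert the Euclidean-ball containment into the cosine condition defining $\mathcal{S}_{x}^\zeta$. For any nonzero $z$, on the sphere we have
\begin{equation*}
\Bigl\| \tfrac{z}{\|z\|} - \tfrac{x_+}{\|x_+\|}\Bigr\|^2 = 2\bigl(1-\cos(z,x_+)\bigr).
\end{equation*}
Hence if $\cos(z,x_+)\geq \sqrt{1-\zeta_1}$, then $\|z/\|z\| - x_+/\|x_+\|\|^2 \leq 2(1-\sqrt{1-\zeta_1})$, which tends to $0$ as $\zeta_1\to 0^+$. Choosing $\zeta_1$ so that $2(1-\sqrt{1-\zeta_1})<\delta_+^2$ (and simultaneously $<\delta_-^2$, by taking the minimum), we place $z/\|z\|$ inside the open ball, hence inside $\mathcal{S}_+\cap\mathbb{S}^{D-1}$. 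Since $\mathcal{S}_+$ is a cone, this gives $z\in\mathcal{S}_+$. The identical argument applied to $x_-$ yields $\mathcal{S}_{x_-}^{\zeta_1}\subset\mathcal{S}_-$.

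There is no real obstacle here; the lemma is essentially a topological restatement of Lemma \ref{lem_app_int}. The only care needed is to (i) verify that $x_+\in K$ and $x_-\in -K$ so that Lemma \ref{lem_app_int} applies, and (ii) perform the cosine-to-Euclidean conversion cleanly so that the single $\zeta_1$ works for both $x_+$ and $x_-$, which is handled by taking $\zeta_1$ to be the smaller of the two admissible thresholds.
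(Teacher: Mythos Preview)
Your proposal is correct and follows essentially the same approach as the paper: both use Lemma \ref{lem_app_int} to place $x_+/\|x_+\|$ (and $x_-/\|x_-\|$) in the interior of $\mathcal{S}_+\cap\mathbb{S}^{D-1}$ (resp.\ $\mathcal{S}_-\cap\mathbb{S}^{D-1}$), then extract a small angular neighborhood. The only cosmetic difference is that the paper works directly in the arc-length metric $\arccos\langle\cdot,\cdot\rangle$, so the open ball of radius $\arccos(\sqrt{1-\zeta_1})$ \emph{is} already $\mathcal{S}_{x_+}^{\zeta_1}\cap\mathbb{S}^{D-1}$, whereas you route through Euclidean distance and convert; your treatment of the $x_-$ case via $x_-\in -K$ is in fact cleaner than the paper's symmetry claim.
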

\begin{proof}
    By Lemma \ref{lem_app_int}, $\frac{x_+}{\|x_+\|}\in K\subset \mathrm{Int}(S_+)$. Since $\mathbb{S}^{D-1}$ is a complete metric space (w.r.t $\arccos \lan \cdot,\cdot\ran$), there exists a open ball centered at $\frac{x_+}{\|x_+\|}$  of some radius $\arccos(\sqrt{1-\zeta_1})$ that is a subset of $\mathcal{S}_+$, from which one can show $\mathcal{S}_{x_+}^{\zeta_1}\subset \mathcal{S}_+$. The other statement $\mathcal{S}_{x_-}^{\zeta_1}\subset \mathcal{S}_-$ simply comes from the fact that $x_+=-x_-$ and $\mathrm{Int}(\mathcal{S}_+)=-\mathrm{Int}(\mathcal{S}_-)$.
\end{proof}

\begin{lemma}\label{lem_app_gamma}
    $\exists \xi>0$, such that
    \be\sup_{x_1\in K\cap \mathbb{S}^{D-1},x_2\in (\mathcal{S}_+\cap\mathbb{S}^{D-1})^c\cap (\mathcal{S}_-\cap\mathbb{S}^{D-1})^c}|\cos(x_1,x_2)|\leq \sqrt{1-\xi}\,.\ee
    ($S^c$ here is defined to be $\mathbb{S}^{D-1}-S$, the set complement w.r.t. complete space $\mathbb{S}^{D-1}$)
\end{lemma}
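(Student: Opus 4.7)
The plan is to combine the interior-containment provided by Lemma \ref{lem_app_int} with a compactness argument on $K \cap \mathbb{S}^{D-1}$. The key observation is that $|\cos(x_1,x_2)|=1$ on the unit sphere forces $x_2=\pm x_1$. Since $x_1\in K\cap \mathbb{S}^{D-1}\subset \mathrm{Int}(\mathcal{S}_+\cap\mathbb{S}^{D-1})$ and $-x_1\in -K\cap \mathbb{S}^{D-1}\subset \mathrm{Int}(\mathcal{S}_-\cap\mathbb{S}^{D-1})$, neither $x_1$ nor $-x_1$ can equal any $x_2 \in (\mathcal{S}_+\cap\mathbb{S}^{D-1})^c \cap (\mathcal{S}_-\cap\mathbb{S}^{D-1})^c$; the remaining work is to upgrade this pointwise separation into a uniform positive angular gap.

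Concretely, I would introduce two angular-distance functions on $K\cap \mathbb{S}^{D-1}$, namely $d_+(x_1):=\inf\{\arccos\langle x_1,x_2\rangle : x_2\in (\mathcal{S}_+\cap\mathbb{S}^{D-1})^c\}$ and $d_-(x_1):=\inf\{\arccos\langle -x_1,x_2\rangle : x_2\in (\mathcal{S}_-\cap\mathbb{S}^{D-1})^c\}$. By Lemma \ref{lem_app_int}, for each $x_1\in K\cap \mathbb{S}^{D-1}$ there is an open spherical ball around $x_1$ contained in $\mathcal{S}_+\cap\mathbb{S}^{D-1}$, so $d_+(x_1)>0$; the symmetric statement for $-x_1\in -K\subset \mathrm{Int}(\mathcal{S}_-\cap\mathbb{S}^{D-1})$ gives $d_-(x_1)>0$. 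Each $d_\pm$ is $1$-Lipschitz in $x_1$ with respect to the spherical metric via a one-line triangle inequality check, and $K\cap \mathbb{S}^{D-1}$ is a nonempty compact subset of the sphere (closed cone intersected with the sphere, nonempty since it contains $x_iy_i/\|x_i\|$), so the positive continuous functions $d_\pm$ attain positive minima $\delta_\pm:=\min_{x_1\in K\cap\mathbb{S}^{D-1}} d_\pm(x_1)>0$.

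Finally, I would fix any admissible pair $(x_1,x_2)$ and note that $\arccos\langle x_1,x_2\rangle \ge \delta_+$ directly from $x_2\in (\mathcal{S}_+\cap\mathbb{S}^{D-1})^c$, while $x_2\in (\mathcal{S}_-\cap\mathbb{S}^{D-1})^c$ combined with the identity $\arccos\langle -x_1,x_2\rangle=\pi-\arccos\langle x_1,x_2\rangle$ yields $\arccos\langle x_1,x_2\rangle \le \pi-\delta_-$. Hence $\langle x_1,x_2\rangle \in [-\cos\delta_-,\cos\delta_+]$, so $|\cos(x_1,x_2)|\le \max(\cos\delta_+,\cos\delta_-)<1$, and I set $\xi:=1-\max(\cos^2\delta_+,\cos^2\delta_-)>0$. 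If the supremum set is empty (every unit vector lies in $\mathcal{S}_+$ or $\mathcal{S}_-$), the claim is vacuous and any $\xi\in(0,1]$ works.

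The main subtlety I anticipate is that $\mathcal{S}_+$ is not quite closed on the sphere; its defining indicators are strict for positive data but non-strict for negative data, so I cannot simply invoke the standard fact that ``the angular distance between two disjoint compact sets is positive.'' The argument instead leverages the interior containment of $K\cap \mathbb{S}^{D-1}$ inside $\mathcal{S}_+\cap\mathbb{S}^{D-1}$ to obtain pointwise positive gaps $d_\pm(x_1)>0$, and then uses compactness of $K\cap \mathbb{S}^{D-1}$ alone to uniformize them; the topological nature of the complement set becomes irrelevant.
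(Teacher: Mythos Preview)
Your proposal is correct and follows essentially the same strategy as the paper: invoke Lemma~\ref{lem_app_int} for the interior containment $K\cap\mathbb{S}^{D-1}\subset\mathrm{Int}(\mathcal{S}_+\cap\mathbb{S}^{D-1})$ (and the symmetric statement for $-K$ and $\mathcal{S}_-$), then use compactness to upgrade the pointwise angular gap to a uniform one. The only technical difference is that the paper passes to the closed set $(\mathrm{Int}(\mathcal{S}_+\cap\mathbb{S}^{D-1}))^c\supset(\mathcal{S}_+\cap\mathbb{S}^{D-1})^c$ and argues that two disjoint compact subsets of the sphere have positive distance, whereas you keep the original complement and instead minimize the continuous positive function $d_+$ over the compact set $K\cap\mathbb{S}^{D-1}$; these are equivalent standard maneuvers, and your explicit handling of the non-closedness of $\mathcal{S}_+$ is a nice touch.
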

\begin{proof}
    Notice that
    \ben
        \sup_{x_1\in K\cap \mathbb{S}^{D-1},x_2\in (\mathrm{Int}(\mathcal{S}_+\cap \mathbb{S}^{D-1}))^c}\lan x_1,x_2\ran=\inf_{x_1\in K\cap \mathbb{S}^{D-1},x_2\in (\mathrm{Int}(\mathcal{S}_+\cap \mathbb{S}^{D-1}))^c}\arccos \lan x_1,x_2\ran\,.
    \een
    Since $\mathbb{S}^{D-1}$ is a complete metric space (w.r.t $\arccos \lan \cdot,\cdot\ran$) and $K\cap \mathbb{S}^{D-1}$ and $x_2\in (\mathrm{Int}(\mathcal{S}_+\cap \mathbb{S}^{D-1}))^c$ are two of its compact subsets. Suppose 
    \ben
        \inf_{x_1\in K\cap \mathbb{S}^{D-1},x_2\in x_2\in (\mathrm{Int}(\mathcal{S}_+\cap \mathbb{S}^{D-1}))^c}\arccos \lan x_1,x_2\ran=0\,,
    \een
    then $\exists x_1\in K\cap \mathbb{S}^{D-1}, x_2\in (\mathrm{Int}(\mathcal{S}_+\cap \mathbb{S}^{D-1}))^c$ such that $\arccos \lan x_1,x_2\ran=0$, i.e., $x_1=x_2$, which contradicts the fact that $ K\cap \mathbb{S}^{D-1}\subseteq \mathrm{Int}(\mathcal{S}_+\cap \mathbb{S}^{D-1})$ (Lemma \ref{lem_app_int}). Therefore, we have the infimum strictly larger than zero, then
    \be
        \sup_{x_1\in K\cap \mathbb{S}^{D-1},x_2\in (S_+\cap \mathbb{S}^{D-1})^c}\lan x_1,x_2\ran\leq \sup_{x_1\in K\cap \mathbb{S}^{D-1},x_2\in (\mathrm{Int}(\mathcal{S}_+\cap \mathbb{S}^{D-1}))^c}\lan x_1,x_2\ran<1\,.
    \ee
    Similarly, one can show that
    \be
        \sup_{x_1\in -K\cap \mathbb{S}^{D-1},x_2\in (\mathcal{S}_-\cap \mathbb{S}^{D-1})^c}\lan x_1,x_2\ran<1\,.
    \ee
    Finally, find $\xi<1$ such that
    \ben
        \max\lb\sup_{x_1\in K\cap \mathbb{S}^{D-1},x_2\in (\mathcal{S}_+\cap \mathbb{S}^{D-1})^c}\lan x_1,x_2\ran,\sup_{x_1\in -K\cap \mathbb{S}^{D-1},x_2\in (\mathcal{S}_-\cap \mathbb{S}^{D-1})^c}\lan x_1,x_2\ran\rb=\sqrt{1-\xi}\,,
    \een
    then for any $x_1\in K\cap\mathbb{S}^{D-1}$ and $x_2\in (\mathcal{S}_+\cap\mathbb{S}^{D-1})^c\cap (\mathcal{S}_-\cap\mathbb{S}^{D-1})^c$, we have
    \ben
        -\sqrt{1-\xi}\leq \lan x_1,x_2\ran \leq \sqrt{1-\xi}\,,
    \een
    which is the desired result.
\end{proof}

The remaining two lemmas are technical but extensively used in the main proof.
\begin{lemma}\label{lem_app_psi_rj}
    Consider any solution to the gradient flow dynamic \eqref{eq_gf} starting from initialization \eqref{eq_init}. Let $x_r\in \mathbb{S}^{n-1}$ be some reference direction, we define
    \be
        \psi_{rj}=\lan x_r,\frac{w_j}{\|w_j\|}\ran,\ \psi_{ra}=\lan x_r, \frac{x_a(w_j)}{\|x_a(w_j)\|}\ran,\ \psi_{aj}=\lan \frac{w_j}{\|w_j\|}, \frac{x_a(w_j)}{\|x_a(w_j)\|}\ran\,,
    \ee
    where $x_a(w_j)=\sum_{i: \lan x_i,w_j\ran> 0}y_ix_i$.
    
    Whenever $\max_i|f(x_i;W,v)|\leq 1$, we have
    \be
        \lvt\frac{d}{dt} \psi_{rj} -\sign(v_j(0))\lp \psi_{ra}-\psi_{rj}\psi_{aj}\rp\|x_a(w_j)\|\rvt\leq 2nX_{\max}\max_i|f(x_i;W,v)|\,.
    \ee
\end{lemma}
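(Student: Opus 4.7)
The plan is to derive this inequality as an almost-direct corollary of Lemma \ref{lem_dir_flow_approx}. First I would write
\[
\frac{d}{dt}\psi_{rj} \;=\; \left\langle x_r,\,\frac{d}{dt}\frac{w_j}{\|w_j\|}\right\rangle
\]
because $x_r$ is a fixed reference direction independent of $t$. Then I would substitute in the approximation from Lemma \ref{lem_dir_flow_approx}, which asserts that $\frac{d}{dt}(w_j/\|w_j\|)$ equals $\sign(v_j(0))\mathcal{P}_{w_j}x_a(w_j)$ plus an error vector $E$ satisfying $\|E\|\leq 2nX_{\max}\max_i|f(x_i;W,v)|$ whenever $\max_i|f(x_i;W,v)|\leq 1$.

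The next step is to expand the leading term. Using $\mathcal{P}_{w_j} = I-\frac{w_jw_j^\top}{\|w_j\|^2}$ and the bilinearity of the inner product,
\[
\left\langle x_r,\,\mathcal{P}_{w_j}x_a(w_j)\right\rangle
=\langle x_r, x_a(w_j)\rangle - \left\langle x_r, \tfrac{w_j}{\|w_j\|}\right\rangle\left\langle \tfrac{w_j}{\|w_j\|}, x_a(w_j)\right\rangle.
\]
Factoring out $\|x_a(w_j)\|$ from both inner products on the right-hand side identifies the bracketed expression with $(\psi_{ra}-\psi_{rj}\psi_{aj})\|x_a(w_j)\|$ exactly as stated. (If $x_a(w_j)=0$, this term vanishes and $\psi_{ra},\psi_{aj}$ are undefined but the identity holds trivially by convention; this degenerate case does not cause any issue since we only need an upper bound.)

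Finally, I would bound the error contribution to $\frac{d}{dt}\psi_{rj}$ by Cauchy--Schwarz: $|\langle x_r, E\rangle|\leq \|x_r\|\|E\|=\|E\|$ because $x_r\in\mathbb{S}^{n-1}$, and then apply the error bound from Lemma \ref{lem_dir_flow_approx} to conclude
\[
\left|\frac{d}{dt}\psi_{rj} - \sign(v_j(0))(\psi_{ra}-\psi_{rj}\psi_{aj})\|x_a(w_j)\|\right|
\leq 2nX_{\max}\max_i|f(x_i;W,v)|.
\]
There is no substantive obstacle: the statement is essentially a scalarization of Lemma \ref{lem_dir_flow_approx} obtained by testing against a unit vector $x_r$. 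The only points to be careful about are (i) treating the case $x_a(w_j)=0$ as vacuous and (ii) keeping track of the $\sign(v_j(0))$ factor through the computation; both are routine.
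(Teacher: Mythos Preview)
Your proposal is correct and matches the paper's proof essentially line for line: the paper also writes the left-hand side as $\lvert x_r^\top(\frac{d}{dt}\frac{w_j}{\|w_j\|}-\sign(v_j(0))\mathcal{P}_{w_j}x_a(w_j))\rvert$ and then invokes Lemma~\ref{lem_dir_flow_approx} together with Cauchy--Schwarz using $\|x_r\|=1$. Your extra remarks about the degenerate case $x_a(w_j)=0$ and tracking the sign factor are fine but not needed beyond what the paper does.
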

\begin{proof}
    A simple application of Lemma \ref{lem_dir_flow_approx}, together with Cauchy-Schwartz:
    \begin{align*}
        &\;\lvt\frac{d}{dt} \psi_{rj} -\sign(v_j(0))\lp \psi_{ra}-\psi_{rj}\psi_{aj}\rp\|x_a(w_j)\|\rvt\\
        &=\; \lvt x_r^\top \lp\frac{d}{dt} \frac{w_j}{\|w_j\|}-\sign(v_j(0)) \lp I-\frac{w_jw_j^\top }{\|w_j\|^2}\rp \lp \sum_{i: \lan x_i,w_j\ran> 0}y_ix_i\rp\rp\rvt\leq 2nX_{\max}\max_i|f(x_i;W,v)|\,.
    \end{align*}
\end{proof}
\begin{lemma}\label{lem_app_x_a_lb}
    \be
        \|x_a(w)\|\geq \sqrt{\mu}n_a(w) X_{\min}\,,
    \ee
    where $n_a(w)=|\{i\in[n]: \lan x_i,w\ran>0\}|$.
\end{lemma}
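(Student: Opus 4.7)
The plan is to expand $\|x_a(w)\|^2$ directly and lower-bound each pairwise term using Assumption \ref{assump_data}. Writing $A = \{i\in[n]:\langle x_i, w\rangle > 0\}$, so that $|A| = n_a(w)$ and $x_a(w) = \sum_{i\in A} y_i x_i$, we have
\begin{equation*}
\|x_a(w)\|^2 \;=\; \sum_{i,j\in A} y_i y_j \langle x_i, x_j\rangle \;=\; \sum_{i,j\in A} \langle x_i y_i, x_j y_j\rangle .
\end{equation*}

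Next I would lower-bound each summand. By Assumption \ref{assump_data}, for every pair $i\neq j$ in $A$,
\begin{equation*}
\langle x_i y_i, x_j y_j\rangle \;\geq\; \mu \|x_i\|\|x_j\| \;\geq\; \mu X_{\min}^2,
\end{equation*}
and for the diagonal terms $i=j$, $\langle x_i y_i, x_i y_i\rangle = \|x_i\|^2 \geq X_{\min}^2 \geq \mu X_{\min}^2$ since $\mu\leq 1$ (which follows from Assumption \ref{assump_data} applied to $i=j$). Summing over the $n_a(w)^2$ index pairs in $A\times A$ gives
\begin{equation*}
\|x_a(w)\|^2 \;\geq\; n_a(w)^2\, \mu\, X_{\min}^2 .
\end{equation*}
Taking square roots yields $\|x_a(w)\| \geq \sqrt{\mu}\, n_a(w)\, X_{\min}$, as claimed.

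There is no real obstacle here: the lemma is essentially a restatement of the definition of $\mu$ in Assumption \ref{assump_data}. The only minor subtlety worth flagging explicitly is the diagonal term, which is handled by the trivial bound $\mu\leq 1$ (so that the Assumption-\ref{assump_data} bound, although originally stated as a minimum over all pairs including $i=j$, is automatically satisfied on the diagonal). No additional lemmas from the auxiliary list are needed for this statement.
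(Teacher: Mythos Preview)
Your proof is correct and follows essentially the same approach as the paper: both expand $\|x_a(w)\|^2$ into pairwise inner products $\langle x_iy_i,x_jy_j\rangle$ and lower-bound the cross terms via Assumption~\ref{assump_data}. Your version is slightly cleaner in that you bound every term (diagonal and off-diagonal) uniformly by $\mu X_{\min}^2$ using $\mu\le 1$, whereas the paper keeps the diagonal terms as $X_{\min}^2$ and only applies $\mu$ to the off-diagonal ones before combining; the end result is the same.
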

\begin{proof}
    Let $\mathcal{I}_a(w)$ denote $\{i\in[n]: \lan x_i,w\ran>0\}$, then
    \begin{align*}
        \|x_a(w)\|= \lV\sum_{i:\lan x_i,w\ran>0} x_iy_i\rV
        &=\; \sqrt{\sum_{i\in \mathcal{I}_a(w)}\|x_i\|^2y_i^2+\sum_{i,j\in \mathcal{I}_a(w), i<j}\|x_i\|\|x_j\|\lan \frac{x_iy_i}{\|x_i\|},\frac{x_jy_j}{\|x_j\|}\ran }\\
        &\geq\; \sqrt{\sum_{i\in \mathcal{I}_a(w)}\|x_i\|^2y_i^2+\sum_{i,j\in \mathcal{I}_a(w), i<j}\|x_i\|\|x_j\||y_i||y_j|\mu }\\
        &\geq\; \sqrt{n_a(w)X_{\min}^2+\mu n_a(w)\lp n_a(w)-1\rp X_{\min}^2}\\
        &\geq\; \sqrt{n_a(w)(1+\mu (n_a(w)-1))} X_{\min}\\
        &\geq\; \sqrt{\mu}n_a(w)X_{\min}\,.
    \end{align*}
\end{proof}
\subsection{Proof for early alignment phase}
\begin{proof}[Proof of Theorem \ref{thm_conv_main}: First Part]Given some initialization in \eqref{eq_init}, by Assumption \ref{assump_non_degen}, $\exists \zeta_2>0$, such that
\be
    \max_{j\in\mathcal{V}_+}\cos(w_j(0),x_-)< \sqrt{1-\zeta_2},\quad \max_{j\in\mathcal{V}_-}\cos(w_j(0),x_+)< \sqrt{1-\zeta_2}\,.\label{eq_app_assump_non_degen}
\ee
We define $\zeta:=\max\{\zeta_1,\zeta_2\}$, where $\zeta_1$ is from Lemma \ref{lem_app_zeta1}. In addition, by Lemma \ref{lem_app_gamma}, $\exists \xi>0$, such that
\be\label{eq_app_ub_gamma}
\sup_{x_1\in K\cap \mathbb{S}^{D-1},x_2\in \mathcal{S}_-^c\cap\mathcal{S}_+^c\cap \mathbb{S}^{D-1}}|\cos(x_1,x_2)|\leq \sqrt{1-\xi}\,.
\ee

We pick a initialization scale $\epsilon$ that satisfies:
\be
    \epsilon\leq \min\lb\frac{\min\{\mu,\zeta,\xi\}\sqrt{\mu}X_{\min}}{4\sqrt{h}nX_{\max}^2W_{\max}^2}, \frac{1}{\sqrt{h}}\exp\lp -\frac{64nX_{\max}}{\min\{\zeta,\xi\} \sqrt{\mu}X_{\min}}\log n\rp\rb\leq \frac{1}{4\sqrt{h}X_{\max}W_{\max}^2}\,.\label{eq_app_ub_epsilon}
\ee
By Lemma \ref{lem_small_norm}, $\forall t\leq T=\frac{1}{4nX_{\max}}\log \frac{1}{\sqrt{h}\epsilon}$, we have 
\be
\max_i|f(x_i;W,v)|\leq \frac{\min\{\mu,\zeta,\xi\}\sqrt{\mu}X_{\min}}{4nX_{\max}}\,,\label{eq_app_ub_f}
\ee
which is the key to analyzing the alignment phase. For the sake of simplicity, we only discuss the analysis of neurons 
in $\mathcal{V}_+$ here, the proof for neurons in $\mathcal{V}_-$ is almost identical.

\textbf{Activation pattern evolution:} Pick any $w_j$ in $\mathcal{V}_+$ and pick $x_r=x_iy_i$ for some $i\in[n]$, and consider the case when $\lan w_j, x_i\ran=0$. From Lemma \ref{lem_app_psi_rj},we have
\ben
    \lvt\frac{d}{dt} \psi_{rj} -\lp \psi_{ra}-\psi_{rj}\psi_{aj}\rp\|x_a(w_j)\|\rvt\leq 2nX_{\max}\max_i|f(x_i;W,v)|\,.
\een
$\lan w_j,x_i\ran=0$ implies $\psi_{rj}=\lan \frac{x_iy_i}{\|x_i\|},\frac{w_j}{\|w_j\|}\ran=0$, thus we have
\ben
    \lvt\frac{d}{dt} \psi_{rj}\vert_{\lan w_j,x_i\ran=0} -\psi_{ra}\|x_a(w_j)\|\rvt\leq 2nX_{\max}\max_i|f(x_i;W,v)|\,.
\een
Then whenever $w_j\notin \mathcal{S}_{\text{dead}}$, we have
\begin{align*}
    \frac{d}{dt} \psi_{rj}\vert_{\lan w_j,x_i\ran=0}&\geq\; \psi_{ra}\|x_a(w_j)\|-2nX_{\max}\max_i|f(x_i;W,v)| &\\
    &\geq\; \mu \|x_a(w_j)\| -2nX_{\max}\max_i|f(x_i;W,v)|&\quad (\text{by Lemma \ref{lem_app_K_coherence}})\\
    &\geq \; \mu^{3/2}X_{\min}-2nX_{\max}\max_i|f(x_i;W,v)|&\quad (\text{by Lemma \ref{lem_app_x_a_lb}})\\
    &\geq\; \mu^{3/2}X_{\min}/2>0\,. &\quad (\text{by \eqref{eq_app_ub_f}})
\end{align*}
This is precisely \eqref{eq_mono_pos} in Section \ref{ssec_pf_sketch}.

\textbf{Bound on activation transitions and duration:}  Next we show that if at time $t_0<T$, $w_j(t_0)\notin\mathcal{S}_+\cup \mathcal{S}_\text{dead}$, and the activation pattern of $w_j$ is $\one_{\lan x_i,w_j(t_0)\ran>0}$, then $\one_{\lan x_i,w_j(t_0+\Delta t))\ran>0}\neq \one_{\lan x_i,w_j(t_0)\ran>0}$, where $\Delta t=\frac{4}{\min\{\zeta,\xi\}\sqrt{\mu}X_{\min}n_a(w_j(t_0))}$ and $n_a(w_j(t_0))$ is defined in Lemma \ref{lem_app_x_a_lb} as long as $t_0+\Delta t<T$ as well. That is, during the alignment phase $[0, T]$, $w_j$ must change its activation pattern within $\Delta t$ time. There are two cases:
\begin{itemize}[leftmargin=0.4cm]
    \item The first case is when $w_j(t_0)\in \mathcal{S}_+^c\cap \mathcal{S}_-^c\cap \mathcal{S}_\text{dead}^c$. In this case, suppose that $\one_{\lan x_i,w_j(t_0+\tau))\ran>0}= \one_{\lan x_i,w_j(t_0)\ran>0}, \forall 0\leq \tau\leq \Delta t$, i.e. $w_j$ fixes its activation during $[t_0,t_0+\Delta t]$, then we have $x_a(w_j(t_0+\tau))=x_a(w_j(t_0)),\forall 0\leq \tau\leq \Delta t$. Let us pick $x_r=x_a(w_j(t_0))$, then Lemma \ref{lem_app_psi_rj} leads to
    \ben
    \lvt\frac{d}{dt} \cos(w_j,x_a(w_j)) -\lp 1-\cos^2(w_j,x_a(w_j))\rp\|x_a(w_j)\|\rvt\leq 2nX_{\max}\max_i|f(x_i;W,v)|\,.
    \een
    Since $x_a(w_j)$ is fixed, we have $\forall t\in[t_0,t_0+\Delta t]$,
    \begin{align*}
        \lvt\frac{d}{dt} \cos(w_j,x_a(w_j(t_0))) -\lp 1-\cos^2(w_j,x_a(w_j(t_0)))\rp\|x_a(w_j(t_0))\|\rvt\leq 2nX_{\max}\max_i|f(x_i;W,v)|\,,
    \end{align*}
    \begin{align*}
    \frac{d}{dt} \cos(w_j,x_a(w_j(t_0)))&\geq\; \lp 1-\cos^2(w_j,x_a(w_j(t_0)))\rp\|x_a(w_j(t_0))\|\\
    &\;\qquad\qquad -2nX_{\max}\max_i|f(x_i;W,v)| &\\
    &\geq\; \xi \|x_a(w_j(t_0))\| -2nX_{\max}\max_i|f(x_i;W,v)|&\quad (\text{by \eqref{eq_app_ub_gamma}})\\
    &\geq \; \xi \sqrt{\mu}n_a(w_j(t_0))X_{\min}-2nX_{\max}\max_i|f(x_i;W,v)|&\quad (\text{by Lemma \ref{lem_app_x_a_lb}})\\
    &\geq\; \xi\sqrt{\mu}n_a(w_j(t_0))X_{\min}/2 \,. &\quad (\text{by \eqref{eq_app_ub_f}})\\
    & \geq\; \min\{\xi,\zeta\}\sqrt{\mu}n_a(w_j(t_0))X_{\min}/2\,, &
\end{align*}
which implies that, by the Fundamental Theorem of Calculus,
\begin{align*}
    &\;\cos(w_j(t_0+\Delta t),x_a(w_j(t_0)))\\
    &=\;\cos(w_j(t_0),x_a(w_j(t_0)))+\int_0^{\Delta t}\frac{d}{dt} \cos(w_j(t_0+\tau),x_a(w_j(t_0)))d\tau\\
    &\geq\; \cos(w_j(t_0),x_a(w_j(t_0)))+\Delta t \cdot\min\{\xi,\zeta\}\sqrt{\mu}n_a(w_j(t_0))X_{\min}/2\\
    &=\; \cos(w_j(t_0),x_a(w_j(t_0)))+2\geq 1\,,
\end{align*}
which leads to $\cos(w_j(t_0+\Delta t),x_a(w_j(t_0)))=1$. This would imply $w_j(t_0+\Delta t)\in\mathcal{S}_+$ because $x_a(w_j(t_0))\in\mathcal{S}_+$, which contradicts our original assumption that $w_j$ fixes the activation pattern. Therefore, $\exists 0<\tau_0\leq \Delta t$ such that $\one_{\lan x_i,w_j(t_0+\tau_0))\ran}\neq \one_{\lan x_i,w_j(t_0)\ran>0}$, due to the restriction on how $w_j$ can change its activation pattern, it cannot return to its previous activation pattern, then one must have $\one_{\lan x_i,w_j(t_0+\Delta t))\ran}\neq \one_{\lan x_i,w_j(t_0)\ran>0}$.
\item  The other case is when $w_j(t_0)\in\mathcal{S}_-$. For this case, we need first show that $w_j(t_0+\tau)\notin \mathcal{S}_{x_-}^{\zeta},\forall 0\leq \tau\leq \Delta t$, or more generally, $\mathcal{S}_{x_-}^{\zeta}$ does not contain any $w_j$ in $\mathcal{V}_+$ during $[0,T]$. To see this, let us pick $x_r=x_-$, then Lemma \ref{lem_app_psi_rj} suggests that
\ben
    \lvt\frac{d}{dt} \psi_{rj} -\lp \psi_{ra}-\psi_{rj}\psi_{aj}\rp\|x_a(w_j)\|\rvt\leq 2nX_{\max}\max_i|f(x_i;W,v)|\,.
\een
Consider the case when $\cos(w_j,x_-)=\sqrt{1-\zeta}$, i.e. $w_j$ is at the boundary of $\mathcal{S}_{x_-}^{\zeta}$. We know that in this case, $w_j\in \mathcal{S}_{x_-}^{\zeta}\subseteq \mathcal{S}_-$ thus $x_a(w_j)=-x_-$, and
\ben
    \lvt\left.\frac{d}{dt} \cos(w_j,x_-)\right\vert_{\cos(w_j,x_-)=\sqrt{1-\zeta}} +\lp 1-\cos^2(w_j,x_-)\rp\|x_-\|\rvt\leq 2nX_{\max}\max_i|f(x_i;W,v)|\,,
\een
which is
\begin{align*}
    &\;\lvt\left.\frac{d}{dt} \cos(w_j,x_-)\right\vert_{\cos(w_j,x_-)=\sqrt{1-\zeta}} +\zeta\|x_-\|\rvt\leq 2nX_{\max}\max_i|f(x_i;W,v)| &\\
    \Ra&\; \left.\frac{d}{dt} \cos(w_j,x_-)\right\vert_{\cos(w_j,x_-)=\sqrt{1-\zeta}}\\
    &\leq\; -\zeta\|x_-\|+2nX_{\max}\max_i|f(x_i;W,v)| &\\
     &\leq \;-\zeta\sqrt{\mu}X_{\min}+2nX_{\max}\max_i|f(x_i;W,v)|&(\text{by Lemma \ref{lem_app_x_a_lb}})\\
     &\leq \;-\zeta\sqrt{\mu}X_{\min}/2<0\,. & (\text{by \eqref{eq_app_ub_f}}) 
\end{align*}
Therefore, during $[0,T]$, neuron $w_j$ in $\mathcal{V}_+$ cannot enter $\mathcal{S}_{x_-}^{\zeta}$ if at initialization, $w_j(0)\notin \mathcal{S}_{x_-}^{\zeta}$, which is guaranteed by \eqref{eq_app_assump_non_degen}.

With the argument above, we know that $w_j(t_0+\tau)\notin\mathcal{S}_{x_-}^{\zeta}, \forall 0\leq \tau\leq \Delta t$. Again we suppose that $w_j(t)\in \mathcal{S}_--\mathcal{S}_{x_-}^{\zeta},\forall t\in[t_0,t_0+\Delta t]$, i.e.,$w_j$ fixes its activation during $[t_0,t_0+\Delta t]$. Let us pick $x_r=x_-$, then Lemma \ref{lem_app_psi_rj} suggests that
\ben
    \lvt\frac{d}{dt} \cos(w_j,x_-) +\lp 1-\cos^2(w_j,x_-)\rp\|x_-\|\rvt\leq 2nX_{\max}\max_i|f(x_i;W,v)|\,,
    \een
    which leads to $\forall t\in[t_0,t_0+\Delta t]$,
    \begin{align*}
        \frac{d}{dt} \cos(w_j,x_-)&\leq\; -\lp 1-\cos^2(w_j,x_-)\rp\|x_-\|+2nX_{\max}\max_i|f(x_i;W,v)|&\\
        &\leq\; -\zeta \|x_-\|+2nX_{\max}\max_i|f(x_i;W,v)|& (w_j\notin \mathcal{S}_{x_-}^{\zeta})\\
        &\leq\; -\zeta \sqrt{\mu}n_a(w_j(t_0))X_{\min}+2nX_{\max}\max_i|f(x_i;W,v)|& (\text{by Lemma \ref{lem_app_x_a_lb}})\\
        &\leq\; -\zeta \sqrt{\mu}n_a(w_j(t_0))X_{\min}/2\,. & (\text{by \eqref{eq_app_ub_f}})\\
        & \leq\; -\min\{\xi,\zeta\}\sqrt{\mu}n_a(w_j(t_0))X_{\min}/2\,, &
    \end{align*}
    Similarly, by FTC, we have
    \ben
        \cos(w_j(t_0+\Delta t),x_-)\leq -1\,.
    \een
    This would imply $w_j(t_0+\Delta t)\in\mathcal{S}_+$ because $-x_-=x_a(w_j(t_0))\in\mathcal{S}_+$, which contradicts our original assumption that $w_j$ fixes its activation pattern. Therefore, one must have $\one_{\lan x_i,w_j(t_0+\Delta t))\ran}\neq \one_{\lan x_i,w_j(t_0)\ran>0}$.
\end{itemize}
In summary, we have shown that, during $[0,T]$, a neuron in $\mathcal{V}_+$ can not keep a fixed activation pattern for a time longer than $\Delta t=\frac{4}{\min\{\zeta,\xi\}\sqrt{\mu}X_{\min}n_a}$, where $n_a$ is the number of data points that activate $w_j$ under the fixed activation pattern.

\textbf{Bound on total travel time until directional convergence} As we have discussed in Section \ref{ssec_pf_sketch} and also formally proved here, during alignment phase $[0,T]$, a neuron in $\mathcal{V}_+$ must change its activation pattern within $\Delta t=\frac{4}{\min\{\zeta,\xi\}\sqrt{\mu}X_{\min}n_a}$ time unless it is in either $\mathcal{S}_+$ or $\mathcal{S}_\text{dead}$. And the new activation it is transitioning into must contain no new activation on negative data points and must keep all existing activation on positive data points, together it shows that a neuron must reach either $\mathcal{S}_+$ or $\mathcal{S}_\text{dead}$ within a fixed amount of time, which is the remaining thing we need to formally show here.

For simplicity of the argument, we first assume $T=\infty$, i.e., the alignment phase lasts indefinitely, and we show that a neuron in $\mathcal{V}_+$ must reach $\mathcal{S}_+$ or $\mathcal{S}_{\text{dead}}$ before $t_1=\frac{16\log n}{\min\{\zeta,\xi\}\sqrt{\mu}X_{\min}}$. Lastly, such directional convergence can be achieved if $t_1\leq T$, which is guaranteed by our choice of $\epsilon$ in \eqref{eq_app_ub_epsilon}. 

\begin{itemize}[leftmargin=0.3cm]
    \item For a neuron in $\mathcal{V}_+$ that reaches $\mathcal{S}_\text{dead}$, the analysis is easy: It must start with no activation on positive data and then lose activation on negative data one by one until losing all of its activation. Therefore, it must reach $\mathcal{S}_\text{dead}$ before
    \ben
        \sum_{k=1}^{n_a(w_j(0))}\frac{4}{\min\{\zeta,\xi\}\sqrt{\mu}X_{\min}k}\leq \frac{4}{\min\{\zeta,\xi\}\sqrt{\mu}X_{\min}}\lp\sum_{k=1}^n\frac{1}{k}\rp\leq \frac{16\log n}{\min\{\zeta,\xi\}\sqrt{\mu}X_{\min}}=t_1\,.
    \een
    \item For a neuron in $\mathcal{V}_+$ that reaches $\mathcal{S}_+$, there is no difference conceptually, but it can switch its activation pattern in many ways before reaching $\mathcal{S}_+$, so it is not straightforward to see its travel time until $\mathcal{S}_+$ is upper bounded by $t_1$.

    To formally show the upper bound on the travel time, we need some definition of a path that keeps a record of the activation patterns of a neuron $w_j(t)$ before it reaches $\mathcal{S}_+$.

    Let $n_+=|\mathcal{I}_+|$, $n_-=|\mathcal{I}_-|$ be the number of positive, negative data respectively, then we call $\mathcal{P}_{(k^{(0)},k^{(1)},\cdots,k^{(L)})}$ a \emph{path} of length-$L$, if
    \begin{enumerate}[leftmargin=0.5cm]
        \item $\forall 0\leq l\leq L$, we have $k^{(l)}=(k_+^{(l)},k_-^{(l)})\in \mathbb{N}\times\mathbb{N}$ with $0\leq k_+^{(l)}\leq n_+$, $0\leq k_-^{(l)}\leq n_-$;
        \item For $k^{(l_1)},k^{(l_2)}$ with $l_1< l_2$, we have either $k_+^{(l_1)}>k_+^{(l_2)}$ or $k_-^{(l_1)}<k_-^{(l_2)}$;
        \item $k^{(L)}=(n_+,0)$;
        \item $k^{(l)}\neq (0,0),\forall 0\leq l\leq L$.
    \end{enumerate}
    \begin{figure}[!h]
    \centering
    \begin{minipage}{.45\textwidth}
      \centering
      \includegraphics[width=0.9\linewidth]{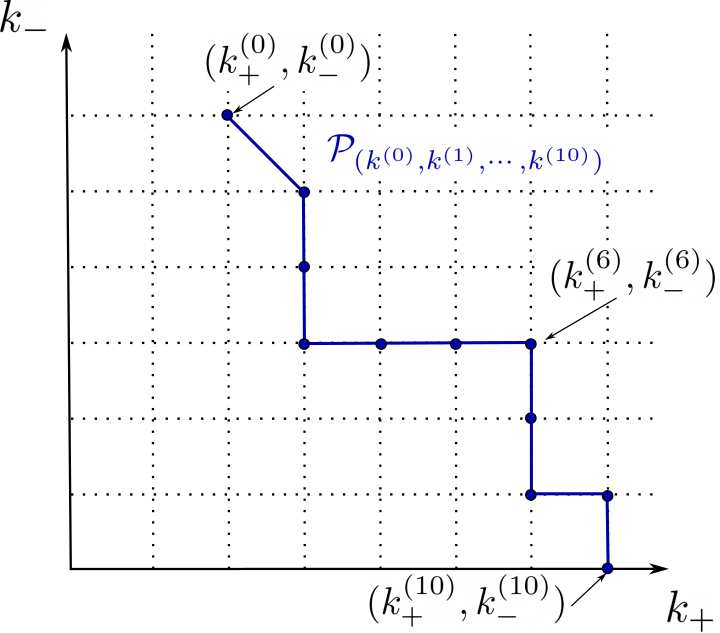}
      \captionof{figure}{Illustration of a path of length-10. Each dot on the grid represents one $k^{(l)}$. }
      \label{fig_path}
    \end{minipage}
    \hspace{0.03\textwidth}
    \begin{minipage}{.45\textwidth}
      \centering
      \includegraphics[width=0.9\linewidth]{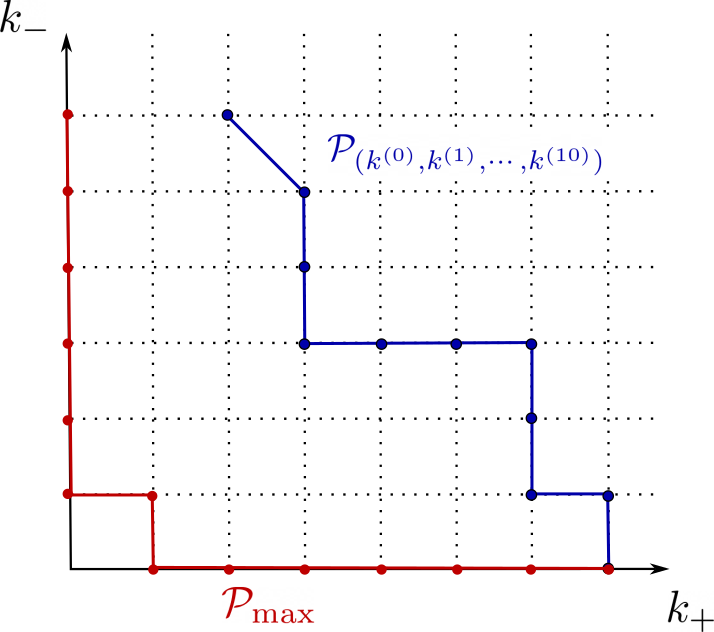}
      \captionof{figure}{Illustration of a path and the maximal path}
      \label{fig_path_max_path}
    \end{minipage}
    \vspace{-2mm}
    \end{figure}

    Given all our analysis on how a neuron $w_j(t)$ can switch its activation pattern in previous parts, we know that for any $w_j(t)$ that reaches $\mathcal{S}_+$, there is an associated $\mathcal{P}_{(k^{(0)},k^{(1)},\cdots,k^{(L)})}$ that keeps an ordered record of encountered values of
    \ben
        \lp|\{i\in\mathcal{I}_+:\lan x_i,w_j(t)\ran>0\}|,\ |\{i\in\mathcal{I}_-:\lan x_i,w_j(t)\ran>0\}|\rp\,,
    \een
    before $w_j$ reaches $\mathcal{S}_+$. That is, a neuron $w_j$ starts with some activation pattern that activates $k_+(0)$ positive data and $k_-(0)$ negative data, then switch its activation pattern (by either losing negative data or gaining positive data) to one that activates $k_+(1)$ positive data and $k_-(1)$ negative data. By keep doing so, it reaches $\mathcal{S}_+$ that activates $k_+(L)=n_+$ positive data and $k_-(L)=0$ negative data. Please see Figure \ref{fig_path} for an illustration of a path.

    Given a path $\mathcal{P}_{(k^{(0)},k^{(1)},\cdots,k^{(L)})}$ of neuron $w_j$, we define the \emph{travel time} of this path as
    \ben
        T(\mathcal{P}_{(k^{(0)},k^{(1)},\cdots,k^{(L)})})=\sum_{l=0}^{L-1}\frac{4}{\min\{\zeta,\xi\}\sqrt{\mu}X_{\min}(k_+^{(l)}+k_-^{(l)})}\,,
    \een
    which is exactly the traveling time from $k^{(0)}$ to $k^{(L)}$ if one spends $\frac{4}{\min\{\zeta,\xi\}\sqrt{\mu}X_{\min}(k_+^{(l)}+k_-^{(l)})}$ on the edge between $k^{(l)}$ and  $k^{(l+1)}$.
    
    Our analysis shows that if $w_j$ reaches $\mathcal{S}_+$, then
    \ben
        \inf\{t:w_j(t)\in\mathcal{S}_+\}\leq  T(\mathcal{P}_{(k^{(0)},k^{(1)},\cdots,k^{(L)})})\,.
    \een

    Now we define the maximal path $\mathcal{P}_{\max}$ as a path that has the maximum length $n=n_++n_-$, which is uniquely determined by the following trajectory of $k^{(l)}$
    \ben
        (0,n_-),(0,n_--1),(0,n_--2),\cdots,(0,1),(1,1),(1,0),\cdots,(n_+-1,0),(n_+,0)\,.
    \een
    Please see Figure \ref{fig_path_max_path} for an illustration.

    The traveling time for $\mathcal{P}_{\max}$ is
    \begin{align*}
         T(\mathcal{P}_{\max})&=\;\frac{4}{\min\{\zeta,\xi\}\sqrt{\mu}X_{\min}}\lp\sum_{k=1}^{n_-}\frac{1}{k}+\frac{1}{2}+\sum_{k=1}^{n_+-1}\frac{1}{k}\rp\\
         &\leq\; \frac{4}{\min\{\zeta,\xi\}\sqrt{\mu}X_{\min}}\lp 2\sum_{k=1}^{n}\frac{1}{k}+\frac{1}{2}\rp\\
         &\leq\; \frac{16\log n}{\min\{\zeta,\xi\}\sqrt{\mu}X_{\min}}=t_1\,.
    \end{align*}
    The proof is complete by the fact that any path satisfies
    \ben
        T(\mathcal{P}_{(k^{(0)},k^{(1)},\cdots,k^{(L)})})\leq T(\mathcal{P}_{\max})\,.
    \een
    This is because there is a one-to-one correspondence between the edges $(k^{(l)}, k^{(l+1)})$ in $\mathcal{P}_{(k^{(0)},k^{(1)},\cdots,k^{(L)})}$ and a subset of edges in $\mathcal{P}_{\max}$, and the travel time from of edge $(k^{(l)}, k^{(l+1)})$ is shorter than the corresponding edge in $\mathcal{P}_{\max}$. Formally stating such correspondence is tedious and a visual illustration in Figure \ref{fig_comp_path_h} and \ref{fig_comp_path_v} is more effective (Putting all correspondence makes a clustered plot thus we split them into two figures):
    \begin{figure}[!h]
    \centering
    \begin{minipage}{.45\textwidth}
      \centering
      \includegraphics[width=0.9\linewidth]{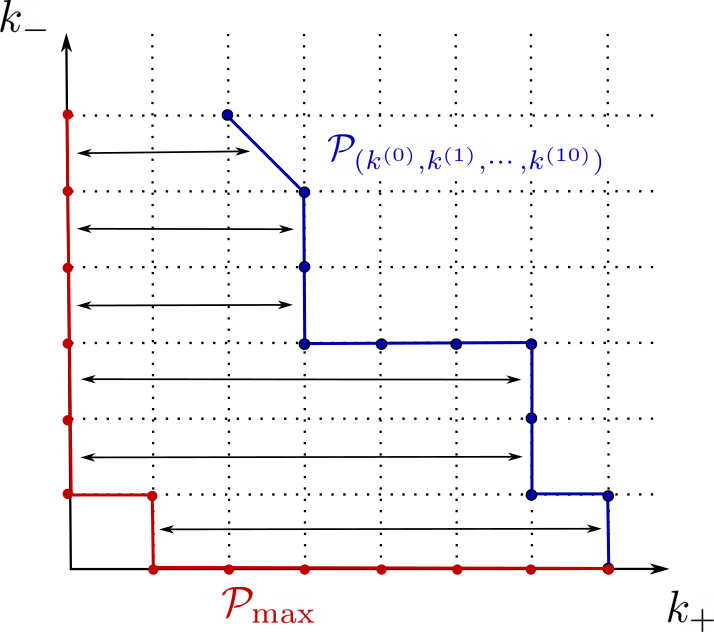}
      \captionof{figure}{Correspondence between edges in $\mathcal{P}_{(k^{(0)},k^{(1)},\cdots,k^{(L)})}$ and $\mathcal{P}_{\max}$. (Part 1)}
      \label{fig_comp_path_h}
    \end{minipage}
    \begin{minipage}{.45\textwidth}
      \centering
      \includegraphics[width=0.9\linewidth]{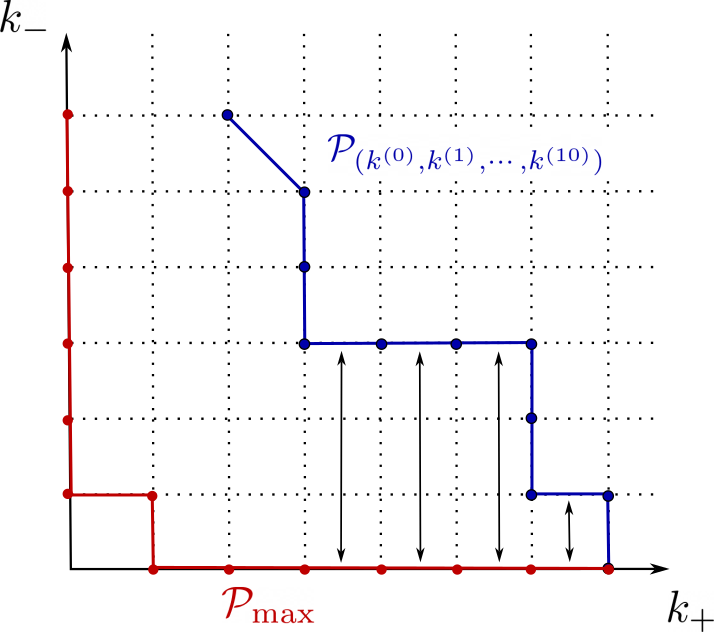}
      \captionof{figure}{Correspondence between edges in $\mathcal{P}_{(k^{(0)},k^{(1)},\cdots,k^{(L)})}$ and $\mathcal{P}_{\max}$. (Part 2)}
      \label{fig_comp_path_v}
    \end{minipage}
    \vspace{-2mm}
    \end{figure}
    
    Therefore, if $w_j$ reaches $\mathcal{S}_+$, then it reaches $\mathcal{S}_+$ within $t_1$:
    \ben
        \inf\{t:w_j(t)\in\mathcal{S}_+\}\leq  T(\mathcal{P}_{(k^{(0)},k^{(1)},\cdots,k^{(L)})})\leq T(\mathcal{P}_{\max})\leq t_1\,.
    \een
    
\end{itemize}

So far we have shown when the alignment phase lasts long enough, i.e., $T$ large enough, the directional convergence is achieved by $t_1$. We simply pick $\epsilon$ such that
\ben
    T=\frac{1}{4nX_{\max}}\log\frac{1}{\sqrt{h}\epsilon}\geq t_1=\frac{16\log n}{\min\{\zeta,\xi\}\sqrt{\mu}X_{\min}}\,,
\een
and \eqref{eq_app_ub_epsilon} suffices.
\end{proof}

\newpage
\section{Proof for Theorem \ref{thm_conv_main}: Final Convergence}\label{app_pf_thm_conv}
Since we have proved the first part of Theorem \ref{thm_conv_main} in Section \ref{app_pf_thm_align}, we will use it as a fact, then prove the remaining part of Theorem \ref{thm_conv_main}. 
\subsection{Auxiliary lemmas}
First, we show that $\mathcal{S}_+,\mathcal{S}_-,\mathcal{S}_\text{dead}$ are trapping regions.

\begin{lemma}\label{lem_app_trapping}
    Consider any solution to the gradient flow dynamic \eqref{eq_gf}, we have the following:
    \begin{itemize}[leftmargin=0.3cm]
        \item If at some time $t_1\geq 0$, we have $w_j(t_1)\in\mathcal{S}_\text{dead}$, then $w_j(t_1+\tau)\in\mathcal{S}_\text{dead},\ \forall \tau\geq 0$;
        \item If at some time $t_1\geq 0$, we have $w_j(t_1)\in\mathcal{S}_+$ for some $j\in\mathcal{V}_+$, then $w_j(t_1+\tau)\in\mathcal{S}_+,\ \forall \tau\geq 0$;
        \item If at some time $t_1\geq 0$, we have $w_j(t_1)\in\mathcal{S}_-$ for some $j\in\mathcal{V}_-$, then $w_j(t_1+\tau)\in\mathcal{S}_-,\ \forall \tau\geq 0$;
    \end{itemize}
\end{lemma}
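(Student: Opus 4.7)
The plan is to handle the three trapping regions separately: $\mathcal{S}_{\text{dead}}$ is immediate, and the $\mathcal{S}_+$ and $\mathcal{S}_-$ cases are mirror images of each other, so I focus on $\mathcal{S}_+$. For $\mathcal{S}_{\text{dead}}$, the convention $\sigma'(x) = \one_{x>0}$ makes every data contribution vanish whenever $\lan x_i, w_j\ran \leq 0$ for all $i$; hence $\dot{w}_j \equiv 0$ on the closed set $\mathcal{S}_{\text{dead}}$, so the natural (regular, per Definition \ref{def_reg}) solution stays at $w_j(t_0)$ for all $t \geq t_0$ and remains in $\mathcal{S}_{\text{dead}}$.

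For $w_j(t_1) \in \mathcal{S}_+$ with $j \in \mathcal{V}_+$, the plan is to simplify the GF dynamics on $\mathcal{S}_+$ and then invoke a Nagumo-type forward-invariance argument. Balancedness gives $v_j = \|w_j\| > 0$, and in $\mathcal{S}_+$ only positive data activate $w_j$, so
\[
\dot{w}_j \;=\; v_j \sum_{i \in \mathcal{I}_+} \lp-\nabla_{\hat{y}} \ell(+1, f(x_i))\rp x_i,
\]
where each coefficient is strictly positive (for both exponential and logistic losses). Hence $\dot{w}_j$ lies in the cone $\mathrm{cone}(\{x_i y_i : i \in \mathcal{I}_+\}) \subset K$, which by Lemma \ref{lem_app_int} is contained in $\mathrm{Int}(\mathcal{S}_+)$: the velocity strictly points into the interior of the convex cone $\mathcal{S}_+$. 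Setting $\tau := \inf\{t > t_1 : w_j(t) \notin \mathcal{S}_+\}$ and supposing for contradiction that $\tau < \infty$, continuity places $w_j(\tau)$ on $\partial \mathcal{S}_+$, so either (a) some $i \in \mathcal{I}_+$ satisfies $\lan w_j(\tau), x_i\ran = 0$, or (b) some $i \in \mathcal{I}_-$ satisfies $\lan w_j(\tau), x_i\ran > 0$. Both are precluded by the scalar dynamics
\[
\tfrac{d}{dt} \lan w_j, x_i\ran \;=\; v_j \sum_{i' \in \mathcal{I}_+} c_{i'} \lan x_{i'}, x_i\ran \qquad \text{on } [t_1, \tau),
\]
which Assumption \ref{assump_data} makes strictly positive for $i \in \mathcal{I}_+$ (same-label correlations are positive) and strictly negative for $i \in \mathcal{I}_-$ (different-label correlations are negative); thus the defining inequalities of $\mathcal{S}_+$ are reinforced rather than violated along the trajectory.

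The $\mathcal{S}_-$ case with $j \in \mathcal{V}_-$ is fully symmetric: $v_j = -\|w_j\| < 0$ and $-\nabla_{\hat{y}} \ell(-1, \cdot) < 0$ combine so that $\dot{w}_j$ is again a strictly positive combination of $\{x_i : i \in \mathcal{I}_-\} = \{-x_i y_i : i \in \mathcal{I}_-\} \subset -K$, which by Lemma \ref{lem_app_int} lies in $\mathrm{Int}(\mathcal{S}_-)$; the boundary analysis proceeds identically after interchanging $\mathcal{I}_+$ and $\mathcal{I}_-$. The only (mild) technical subtlety I anticipate is the non-smoothness of the ReLU subgradient at $\partial \mathcal{S}_+$ and $\partial \mathcal{S}_-$; but because the inward-pointing property is \emph{strict} rather than merely tangential, continuity of $w_j(t)$ alone suffices to propagate the defining inequalities forward in time, and no refined tools from differential inclusion theory beyond what is already invoked in the paper are required.
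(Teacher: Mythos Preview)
Your argument is correct and shares the paper's key observation: for $j\in\mathcal{V}_+$ with $w_j\in\mathcal{S}_+$, the velocity is a strictly positive combination of $\{x_i:i\in\mathcal{I}_+\}$, which under Assumption~\ref{assump_data} reinforces every defining inequality of $\mathcal{S}_+$. The paper, however, packages this via the Fundamental Theorem of Calculus rather than a Nagumo-type boundary contradiction: it writes $w_j(t_1+\tau)=w_j(t_1)+\tilde x_+$ with $\tilde x_+=\sum_{i\in\mathcal{I}_+}\bigl(\int_0^\tau c_i\,\|w_j\|\,ds\bigr)x_i\in K\subset\mathcal{S}_+$, and then appeals once to the fact that $\mathcal{S}_+$ is a convex cone to conclude that the sum remains in $\mathcal{S}_+$. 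Your scalar-monotonicity argument is the same content unpacked coordinate by coordinate; the integral route is slightly slicker in that it bypasses any case analysis at the boundary. One cosmetic slip: in your case~(b) the relevant boundary condition is $\langle w_j(\tau),x_i\rangle=0$ for some $i\in\mathcal{I}_-$, not $>0$, since continuity from $\leq 0$ on $[t_1,\tau)$ already rules out the strict inequality; your monotonicity conclusion covers either formulation.
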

\begin{proof}
    The first statement is simple, if $w_j\in\mathcal{S}_\text{dead}$, then one have $\dot{w}_j=0$, thus $w_j$ remains in $\mathcal{S}_\text{dead}$.

    For the second statement, we have, since $j\in\mathcal{V}_+$,
    \ben
        \frac{d}{dt}w_j=-\sum_{i=1}^n\one_{\lan x_i,w_j\ran> 0}\nabla_{\hat{y}}\ell(y_i,f(x_i;W,v))x_i\|w_j\|\,.
    \een
    When $\ell$ is the exponential loss, by the Fundamental Theorem of Calculus, one writes, $\forall \tau\geq 0$,
    \begin{align*}
        w_j(t_1+\tau)&=\;w_j(t_1)+\int_0^\tau \frac{d}{dt}w_jd\tau\\
        &=\;w_j(t_1)+\int_0^\tau -\sum_{i=1}^n\one_{\lan x_i,w_j\ran> 0}\nabla_{\hat{y}}\ell(y_i,f(x_i;W,v))x_i\|w_j\|d\tau\\
        &=\;w_j(t_1)+\int_0^\tau \sum_{i=1}^n\one_{\lan x_i,w_j\ran> 0}y_i\exp(-y_if(x_i;W,v))x_i\|w_j\|d\tau\\
        &=\;w_j(t_1)+\underbrace{\sum_{i\in\mathcal{I}_+}\lp\int_0^\tau \exp(-y_if(x_i;W,v))\|w_j\|d\tau \rp x_i}_{:=\tilde{x}_+}\,.
    \end{align*}
    Here $w_j(t_1)\in\mathcal{S}_+$ by our assumption, $\tilde{x}_+\in K\subseteq \mathcal{S}_+$ because $\tilde{x}_+$ is a conical combination of $x_i,i\in\mathcal{I}_+$. Since $\mathcal{S}_+$ is a convex cone, we have $w_j(t_1+\tau)\in\mathcal{S}_+$ as well.

    When $\ell$ is the logistic loss, we have, similarly,
    \begin{align*}
        w_j(t_1+\tau)&=\;w_j(t_1)+\int_0^\tau \sum_{i=1}^n\one_{\lan x_i,w_j\ran> 0}y_i\frac{2\exp(-y_if(x_i;W,v))}{1+\exp(-y_if(x_i;W,v))}x_i\|w_j\|d\tau\\
        &=\;w_j(t_1)+\underbrace{\sum_{i\in\mathcal{I}_+}\lp\int_0^\tau \frac{2\exp(-y_if(x_i;W,v))}{1+\exp(-y_if(x_i;W,v))}\|w_j\|d\tau \rp x_i}_{:=\tilde{x}_+}\in \mathcal{S}_+\,.
    \end{align*}
    
    The proof of the third statement is almost identical (we only show the case of exponential loss here): when $j\in\mathcal{V}_-$, we have
    \ben
        \frac{d}{dt}w_j=\sum_{i=1}^n\one_{\lan x_i,w_j\ran> 0}\nabla_{\hat{y}}\ell(y_i,f(x_i;W,v))x_i\|w_j\|\,,
    \een
    and 
    \ben
        w_j(t_1+\tau)= w_j(t_1)+\underbrace{\sum_{i\in\mathcal{I}_-}\lp\int_0^\tau \exp(-y_if(x_i;W,v))\|w_j\|d\tau \rp x_i}_{:=\tilde{x}_-}.
    \een
    Again, here $w_j(t_1)\in\mathcal{S}_-$ by our assumption, $\tilde{x}_-\in -K\subseteq \mathcal{S}_-$ because $\tilde{x}_-$ is a conical combination of $x_i,i\in\mathcal{I}_-$. Since $\mathcal{S}_-$ is a convex cone, we have $w_j(t_1+\tau)\in\mathcal{S}_+$ as well.
    \end{proof}

    Then the following Lemma provides a lower bound on neuron norms upon $t_1$.
    \begin{lemma}\label{lem_app_norm_lb_t1}
        Consider any solution to the gradient flow dynamic \eqref{eq_gf} starting from initialization \eqref{eq_init}. Let $t_1$ be the time when directional convergence is achieved, as defined in Theorem \ref{thm_conv_main}, and we define $\tilde{\mathcal{V}}_+:\{j: w_j(t_1)\in\mathcal{S}_+\}$ and $\tilde{\mathcal{V}}_-:\{j: w_j(t_1)\in\mathcal{S}_-\}$. If both $\tilde{\mathcal{V}}_+$ and $\tilde{\mathcal{V}}_-$ are non-empty, we have 
        \ben
            \sum_{j\in \tilde{\mathcal{V}}_+}\|w_j(t_1)\|^2\geq \exp(-4nX_{\max}t_1)\sum_{j\in\tilde{\mathcal{V}}_+}\|w_j(0)\|^2, 
        \een
        \ben
            \sum_{j\in \tilde{\mathcal{V}}_-}\|w_j(t_1)\|^2\geq \exp(-4nX_{\max}t_1)\sum_{j\in\tilde{\mathcal{V}}_-}\|w_j(0)\|^2, 
        \een
    \end{lemma}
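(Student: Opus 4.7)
}
The plan is to establish a per-neuron lower bound of the form $\|w_j(t_1)\|^2 \geq \exp(-4nX_{\max}t_1)\|w_j(0)\|^2$ valid for every $j\in[h]$ throughout the alignment phase $[0,t_1]$, and then simply sum this inequality over $j\in\tilde{\mathcal{V}}_+$ (respectively $\tilde{\mathcal{V}}_-$). Since the alignment time $t_1$ was shown to satisfy $t_1\leq T=\frac{1}{4nX_{\max}}\log\frac{1}{\sqrt{h}\epsilon}$ under the choice of $\epsilon$ in \eqref{eq_app_ub_epsilon}, this reduction places us entirely inside the regime handled by Lemma \ref{lem_small_norm}, which is crucial for controlling the loss derivatives.

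First I would expand $\frac{d}{dt}\|w_j\|^2 = 2\langle w_j,\dot{w}_j\rangle$ using the GF dynamics together with the balanced identity $v_j=\sign(v_j(0))\|w_j\|$, exactly as in the first display of the proof of Lemma \ref{lem_small_norm}. This gives
\[
\frac{d}{dt}\|w_j\|^2 \;=\; -2\sign(v_j(0))\|w_j\|\sum_{i=1}^{n}\sigma'(\langle x_i,w_j\rangle)\,\nabla_{\hat y}\ell(y_i,f(x_i;W,v))\,\langle x_i,w_j\rangle.
\]
I would then take absolute values and apply Lemma \ref{assump_loss}, which yields $|\nabla_{\hat y}\ell(y_i,\hat y_i)|\leq 1+2|\hat y_i|$ whenever $|\hat y_i|\leq 1$; this hypothesis is in force during $[0,T]$ by Lemma \ref{lem_small_norm}, since $\epsilon$ in \eqref{eq_app_ub_epsilon} is small enough that $\max_i|f(x_i;W,v)|\leq \tfrac{1}{2}$. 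Combining with Cauchy--Schwarz and $\|x_i\|\leq X_{\max}$ gives the bound
\[
\left|\frac{d}{dt}\|w_j\|^2\right| \;\leq\; 2n(1+2\max_i|f(x_i;W,v)|)\,X_{\max}\,\|w_j\|^2 \;\leq\; 4nX_{\max}\|w_j\|^2,
\]
so in particular $\frac{d}{dt}\|w_j\|^2 \geq -4nX_{\max}\|w_j\|^2$ on $[0,t_1]$.

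Gr\"onwall's inequality then delivers $\|w_j(t)\|^2 \geq \exp(-4nX_{\max}t)\|w_j(0)\|^2$ for all $t\in[0,t_1]$ and every $j\in[h]$. Summing over $j\in\tilde{\mathcal{V}}_+$ at $t=t_1$ gives the first claim, and the same argument restricted to $\tilde{\mathcal{V}}_-$ gives the second. The sign of $v_j(0)$ never enters the bound (only $|\cdot|$ does), so the computation is symmetric between the two index sets.

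The only genuinely delicate point is verifying the hypothesis $|f(x_i;W,v)|\leq 1$ over the entire interval $[0,t_1]$; this is where the argument depends on the earlier work. Since $t_1=\mathcal{O}(\log n/\sqrt{\mu})$ and our choice of $\epsilon$ ensures $t_1\leq T$, Lemma \ref{lem_small_norm} supplies the required bound on $\max_i|f|$ throughout $[0,t_1]$, so there is no circularity and no additional bootstrap is needed. Aside from this, the proof is a one-line Gr\"onwall estimate, so I do not expect any real obstacle.
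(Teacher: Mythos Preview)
Your proposal is correct and follows essentially the same route as the paper: derive the per-neuron differential inequality $\frac{d}{dt}\|w_j\|^2\geq -4nX_{\max}\|w_j\|^2$ on $[0,t_1]$ using Lemma~\ref{assump_loss} together with the bound $\max_i|f(x_i;W,v)|\leq \tfrac{1}{2}$ from Lemma~\ref{lem_small_norm}, and then conclude via Gr\"onwall. The only cosmetic difference is that the paper sums over $\tilde{\mathcal{V}}_+$ before invoking Gr\"onwall whereas you apply Gr\"onwall per neuron and then sum, which is of course equivalent.
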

    \begin{proof}
        We have shown that
        \ben
        \frac{d}{dt}\|w_j\|^2
        = -2\sum_{i=1}^n\one_{\lan x_i,w_j\ran> 0}\nabla_{\hat{y}}\ell(y_i,f(x_i;W,v))\lan x_i, w_j\ran \sign(v_j(0))\|w_j\|\,.
        \een
        Then before $t_1$, we have $\forall j\in[h]$
        \begin{align*}
            \frac{d}{dt}\|w_j\|^2 &=\; -2\sum_{i=1}^n\one_{\lan x_i,w_j\ran> 0}\nabla_{\hat{y}}\ell(y_i,f(x_i;W,v))\lan x_i, w_j\ran \sign(v_j(0))\|w_j\|\\
            &\geq\;-2\sum_{i=1}^n(|y_i|+2\max_i|f(x_i;W,v)|)\|x_i\|\|w_j\|^2\\
            &\geq\; -4\sum_{i=1}^n\|x_i\|\|w_j\|^2\geq -4nX_{\max}\|w_j\|^2\,,
        \end{align*}
        where the second last inequality is because $\max_i|f(x_i;W,v)|\leq \frac{1}{2}$ before $t_1$. Summing over $j\in\tilde{\mathcal{V}}_+$, we have
        \ben
            \frac{d}{dt}\sum_{j\in\tilde{\mathcal{V}}_+}\|w_j\|^2\geq -4nX_{\max}\sum_{j\in\tilde{\mathcal{V}}_+}\|w_j\|^2\,.
        \een
        Therefore, we have the following bound:
        \ben
            \sum_{j\in\tilde{\mathcal{V}}_+}\|w_j(t_1)\|^2\geq \exp(-4nX_{\max}t_1)\sum_{j\in\tilde{\mathcal{V}}_+}\|w_j(0)\|^2\,.
        \een
    \end{proof}
    Moreover, after $t_1$, the neuron norms are non-decreasing, as suggested by 
    \begin{lemma}\label{lem_app_mono_norm}
        Consider any solution to the gradient flow dynamic \eqref{eq_gf} starting from initialization \eqref{eq_init}. Let $t_1$ be the time when directional convergence is achieved, as defined in Theorem \ref{thm_conv_main}, and we define $\tilde{\mathcal{V}}_+:\{j: w_j(t_1)\in\mathcal{S}_+\}$ and $\tilde{\mathcal{V}}_-:\{j: w_j(t_1)\in\mathcal{S}_-\}$. If both $\tilde{\mathcal{V}}_+$ and $\tilde{\mathcal{V}}_-$ are non-empty, we have $\forall \tau\geq 0$ and $t_2\geq t_1$,
        \be
            \sum_{j\in\tilde{\mathcal{V}}_+}\|w_j(t_2+\tau)\|^2\geq \sum_{j\in\tilde{\mathcal{V}}_+}\|w_j(t_2)\|,\qquad \sum_{j\in\tilde{\mathcal{V}}_-}\|w_j(t_2+\tau)\|^2\geq \sum_{j\in\tilde{\mathcal{V}}_-}\|w_j(t_2)\| 
        \ee
    \end{lemma}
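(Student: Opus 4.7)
The plan is to show that for each $j\in\tilde{\mathcal{V}}_+$ individually, $\|w_j(t)\|^2$ is non-decreasing on $[t_1,\infty)$; the statement about the sum then follows trivially. The same argument works symmetrically for $\tilde{\mathcal{V}}_-$. The key observation is that once $w_j$ has entered a trapping region, its activation pattern is frozen, which aligns the gradient flow direction with $w_j$ itself.

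\textbf{Step 1: freeze the activation pattern.} By Lemma~\ref{lem_app_trapping}, for every $j\in\tilde{\mathcal{V}}_+$ and every $t\geq t_1$ we have $w_j(t)\in\mathcal{S}_+$, so $\langle x_i,w_j(t)\rangle>0$ for $i\in\mathcal{I}_+$ and $\langle x_i,w_j(t)\rangle\leq 0$ for $i\in\mathcal{I}_-$. Hence the ReLU indicators $\sigma'(\langle x_i,w_j\rangle)$ reduce to $\mathbf{1}_{i\in\mathcal{I}_+}$ on $[t_1,\infty)$. Combined with balancedness ($v_j=\sign(v_j(0))\|w_j\|$, with $\sign(v_j(0))=+1$ since $j\in\mathcal{V}_+$), the gradient flow simplifies to
\be
    \dot{w}_j(t)\;=\;-\,\|w_j(t)\|\sum_{i\in\mathcal{I}_+}\nabla_{\hat{y}}\ell(+1,f(x_i;W,v))\,x_i,\qquad t\geq t_1.
\ee

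\textbf{Step 2: sign of the scalar coefficients.} For both the exponential loss and the logistic loss one has $-\nabla_{\hat{y}}\ell(+1,\hat{y})>0$ for every $\hat{y}\in\mathbb{R}$ (explicitly, $\exp(-\hat{y})$ in the exponential case and $\tfrac{2\exp(-\hat{y})}{1+\exp(-\hat{y})}$ in the logistic case). In particular, $-\nabla_{\hat{y}}\ell(+1,f(x_i;W,v))>0$ for all $i\in\mathcal{I}_+$ along the trajectory.

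\textbf{Step 3: monotonicity of the norm.} Using Steps 1 and 2,
\be
    \tfrac{d}{dt}\|w_j\|^2 \;=\; 2\langle w_j,\dot{w}_j\rangle \;=\; 2\|w_j\|\sum_{i\in\mathcal{I}_+}\bigl(-\nabla_{\hat{y}}\ell(+1,f(x_i;W,v))\bigr)\langle x_i,w_j\rangle\;\geq\;0,
\ee
since every factor in every summand is non-negative (indeed strictly positive as long as $w_j\neq 0$, which is preserved by the flow). Integrating from $t_2$ to $t_2+\tau$ gives $\|w_j(t_2+\tau)\|^2\geq \|w_j(t_2)\|^2$ for all $j\in\tilde{\mathcal{V}}_+$, and summing over $j\in\tilde{\mathcal{V}}_+$ yields the first claim. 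The argument for $\tilde{\mathcal{V}}_-$ is identical after flipping signs: for $j\in\mathcal{V}_-$ we get $\dot{w}_j=-\|w_j\|\sum_{i\in\mathcal{I}_-}\nabla_{\hat{y}}\ell(-1,f_i)\,x_i\cdot(-1)$, and $-\nabla_{\hat{y}}\ell(-1,\hat{y})<0$ while $\langle x_i,w_j\rangle>0$ for $i\in\mathcal{I}_-$ when $w_j\in\mathcal{S}_-$, producing the same non-negative inner product $\langle w_j,\dot{w}_j\rangle\geq 0$.

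\textbf{Expected obstacle.} There is essentially no conceptual obstacle here once the trapping property of Lemma~\ref{lem_app_trapping} is in hand: the lemma is a direct algebraic consequence of the frozen activation pattern together with the sign structure of the loss derivative. The only minor bookkeeping point is to verify the sign identity in Step 2 for both loss functions and to keep the balanced-initialization identity $v_j=\sign(v_j(0))\|w_j\|$ consistent with $j\in\mathcal{V}_\pm$ when translating the flow from $\dot{w}_j\in\partial_W\mathcal{L}$ into the simplified form used in Step 1.
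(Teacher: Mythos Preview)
Your proposal is correct and follows essentially the same approach as the paper: invoke Lemma~\ref{lem_app_trapping} to freeze the activation pattern on $[t_1,\infty)$, compute $\tfrac{d}{dt}\|w_j\|^2$ for each $j\in\tilde{\mathcal{V}}_+$, and observe that every summand is non-negative because $-\nabla_{\hat y}\ell(y_i,\cdot)>0$ and $\langle x_i,w_j\rangle>0$ for $i\in\mathcal{I}_+$, $w_j\in\mathcal{S}_+$. The paper's proof is identical in structure, only slightly terser in the sign bookkeeping.
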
 
    \begin{proof}
        It suffices to show that after $t_1$, the following derivatives:
        \ben
            \frac{d}{dt}\sum_{j\in\tilde{\mathcal{V}}_+}\|w_j(t)\|^2, \quad \frac{d}{dt}\sum_{j\in\tilde{\mathcal{V}}_-}\|w_j(t)\|^2\,,
        \een
        are non-negative. 

        For $j\in \tilde{\mathcal{V}}_+$, $w_j$ stays in $\mathcal{S}_+$ by Lemma \ref{lem_app_trapping}, and we have
        \begin{align*}
            \frac{d}{dt}\|w_j\|^2
            &=\; -2\sum_{i\in\mathcal{I}_+}\nabla_{\hat{y}}\ell(y_i,f(x_i;W,v))\lan x_i, w_j\ran \|w_j\|\,.\\
            &=\; \begin{cases}
                2\sum_{i\in\mathcal{I}_+}y_i\exp(-y_if(x_i;W,v))\lan x_i, w_j\ran \|w_j\| & (\ell \text{ is exponential})\\
                2\sum_{i\in\mathcal{I}_+}y_i\frac{2\exp(-y_if(x_i;W,v))}{1+\exp(-y_if(x_i;W,v))}\lan x_i, w_j\ran \|w_j\| & (\ell \text{ is logistic})
            \end{cases}\\
            &\geq\; 0\,.
        \end{align*}
        Summing over $j\in \tilde{\mathcal{V}}_+$, we have $\frac{d}{dt}\sum_{j\in\tilde{\mathcal{V}}_+}\|w_j(t)\|^2\geq 0$. Similarly one has $\frac{d}{dt}\sum_{j\in\tilde{\mathcal{V}}_-}\|w_j(t)\|^2\geq 0$.
    \end{proof}
    Finally, the following lemma is used for deriving the final convergence.
    \begin{lemma}\label{lem_app_gpl}
        Consider the following loss function
        \ben
            \mathcal{L}_{\text{lin}}(W,v)=\sum_{i=1}^n\ell\lp y_i,v^\top W^\top x_i)\rp\,,
        \een    
        if $\{x_i,y_i\},i\in[n]$ are linearly separable, i.e., $\exists \gamma>0$ and $z\in\mathbb{S}^{D-1}$ such that $y_i\lan z,x_i\ran\geq \gamma,\forall i\in[n]$, then under the gradient flow on $\mathcal{L}_{\text{lin}}(W,v)$, whenever $y_iv^\top W^\top x_i\geq 0$, $\forall i$, we have
        \be
            \dot{\mathcal{L}}_{\text{lin}}\leq -\frac{1}{4}\|v\|^2\mathcal{L}^2\gamma^2\,.
        \ee
        
    \end{lemma}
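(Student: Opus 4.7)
}
The plan is to reduce the lemma to a Polyak-{\L}ojasiewicz (PL)-type inequality along the gradient flow trajectory. First, I would compute $\dot{\mathcal{L}}_{\text{lin}}$ directly using the chain rule and the gradient flow equations $\dot W=-\nabla_W\mathcal{L}_{\text{lin}}=-\sum_i g_i\,x_i v^\top$ and $\dot v=-\nabla_v\mathcal{L}_{\text{lin}}=-\sum_i g_i\,W^\top x_i$, where $g_i:=\nabla_{\hat y}\ell(y_i,v^\top W^\top x_i)$. Since $f(x_i)=v^\top W^\top x_i$, a short calculation using $\frac{d}{dt}f(x_i)=\dot v^\top W^\top x_i+v^\top \dot W^\top x_i$ gives the clean identity
\begin{equation*}
\dot{\mathcal{L}}_{\text{lin}}=-g^\top\!\bigl(WW^\top+\|v\|^2 I\bigr)g, \qquad g:=\sum_{i=1}^n g_i\,x_i,
\end{equation*}
which by positive semidefiniteness of $WW^\top$ yields $\dot{\mathcal{L}}_{\text{lin}}\leq -\|v\|^2\|g\|^2$.

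Next I would lower-bound $\|g\|$ using the linear separator $z$. Because $\ell$ is a strictly decreasing margin loss (exponential or logistic), the sign of $g_i$ is opposite that of $y_i$, so $-y_i g_i=|g_i|$. Combining with $y_i\lan z,x_i\ran\geq\gamma$, I get
\begin{equation*}
\|g\|\geq |\lan z,g\ran|=\Bigl|\sum_i g_i\lan z,x_i\ran\Bigr|=\sum_i |g_i|\cdot y_i\lan z,x_i\ran\geq \gamma\sum_i |g_i|.
\end{equation*}

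The last step, which I expect to be the main source of subtlety, is relating $\sum_i|g_i|$ back to $\mathcal{L}_{\text{lin}}$; this is where the hypothesis $y_i v^\top W^\top x_i\geq 0$ is essential. For the exponential loss $|g_i|=\exp(-y_i f(x_i))=\ell(y_i,f(x_i))$, so $\sum_i|g_i|=\mathcal{L}_{\text{lin}}$ trivially. For the logistic loss $\ell(y,\hat y)=2\log(1+\exp(-y\hat y))$, one has $|g_i|=2/(1+\exp(y_i f(x_i)))$ and $\ell_i=2\log(1+\exp(-y_i f(x_i)))$; using $y_i f(x_i)\geq 0$ one gets $\exp(-y_i f(x_i))\leq 1$, so $\ell_i\leq 2\exp(-y_i f(x_i))$ (from $\log(1+u)\leq u$) and $|g_i|\geq \exp(-y_i f(x_i))$ (from $1+\exp(-y_i f(x_i))\leq 2$), yielding $|g_i|\geq \ell_i/2$ and therefore $\sum_i|g_i|\geq \mathcal{L}_{\text{lin}}/2$. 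Without the sign condition, $\ell_i$ grows like $|y_i f(x_i)|$ while $|g_i|\to 2$, so this step genuinely requires the correct-classification hypothesis.

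Putting the three inequalities together,
\begin{equation*}
\dot{\mathcal{L}}_{\text{lin}}\;\leq\;-\|v\|^2\|g\|^2\;\leq\;-\|v\|^2\gamma^2\Bigl(\sum_i|g_i|\Bigr)^2\;\leq\;-\tfrac{1}{4}\|v\|^2\gamma^2\mathcal{L}_{\text{lin}}^2,
\end{equation*}
which is the claimed bound.
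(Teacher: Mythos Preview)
Your proposal is correct and follows essentially the same route as the paper's proof: drop the $\nabla_v$ contribution to get $\dot{\mathcal{L}}_{\text{lin}}\leq -\|v\|^2\|g\|^2$, project $g$ onto the separator $z$ to obtain $\|g\|\geq\gamma\sum_i|g_i|$, and then use a self-bounding inequality for the loss (trivial for exponential, requiring $y_i f(x_i)\geq 0$ for logistic) to get $\sum_i|g_i|\geq \tfrac{1}{2}\mathcal{L}_{\text{lin}}$. The only cosmetic differences are that you write the identity $\dot{\mathcal{L}}_{\text{lin}}=-g^\top(WW^\top+\|v\|^2 I)g$ explicitly, and for the logistic step you split the bound into $|g_i|\geq e^{-y_if_i}\geq \ell_i/2$, whereas the paper uses the equivalent one-line inequality $\tfrac{2z}{1+z}\geq\log(1+z)$ for $z\in[0,1]$.
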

    \begin{proof}
    For $\ell$ being exponential loss, we have:
        \begin{align*}
            \dot{\mathcal{L}}=-\|\nabla_W\mathcal{L}\|^2_F-\|\nabla_v\mathcal{L}\|^2_F
            &\leq\; -\|\nabla_W\mathcal{L}\|^2_F\\
            &=\; -\lV \sum_{i=1}^ny_i\ell(y_i,v^\top W^\top x_i)x_iv^\top \rV_F^2\\
            &=\; -\|v\|^2\lV \sum_{i=1}^ny_i\ell(y_i,v^\top W^\top x_i)x_i\rV^2\\
            &\leq\; -\|v\|^2 \lvt\lan z,\sum_{i=1}^ny_i\ell(y_i,v^\top W^\top x_i)x_i\ran\rvt^2\\
            &\leq\; -\|v\|^2 \lvt\sum_{i=1}^n\ell(y_i,v^\top W^\top x_i)\gamma\rvt^2\\
            &\leq\; -\|v\|^2\mathcal{L}^2\gamma^2\leq -\frac{1}{4}\|v\|^2\mathcal{L}^2\gamma^2\,.
        \end{align*}
    For $\ell$ being logistic loss, we have:
        \begin{align*}
            \dot{\mathcal{L}}=-\|\nabla_W\mathcal{L}\|^2_F-\|\nabla_v\mathcal{L}\|^2_F
            &\leq\; -\|\nabla_W\mathcal{L}\|^2_F\\
            &=\; -\lV \sum_{i=1}^ny_i\frac{2\exp(-y_iv^\top W^\top x_i)}{1+\exp(-y_iv^\top W^\top x_i)}x_iv^\top \rV_F^2\\
            &=\; -\|v\|^2\lV \sum_{i=1}^ny_i\frac{2\exp(-y_iv^\top W^\top x_i)}{1+\exp(-y_iv^\top W^\top x_i)}x_i\rV^2\\
            &\leq\; -\|v\|^2 \lvt\lan z,\sum_{i=1}^ny_i\frac{2\exp(-y_iv^\top W^\top x_i)}{1+\exp(-y_iv^\top W^\top x_i)}x_i\ran\rvt^2\\
            &\leq\; -\|v\|^2 \lvt\sum_{i=1}^n\frac{2\exp(-y_iv^\top W^\top x_i)}{1+\exp(-y_iv^\top W^\top x_i)}\gamma\rvt^2\\
            &=\; -\|v\|^2\gamma^2\lvt\sum_{i=1}^n\frac{2\exp(-y_iv^\top W^\top x_i)}{1+\exp(-y_iv^\top W^\top x_i)}\rvt^2\\
            &\leq\; -\|v\|^2\gamma^2\lvt\sum_{i=1}^n\log(1+\exp(-y_iv^\top W^\top x_i))\rvt^2\\
            &=\; -\frac{1}{4}\|v\|^2\mathcal{L}^2\gamma^2\,,
        \end{align*}
        where the last inequality uses the fact that $2\frac{z}{1+z}\geq \log(1+z)$ when $z\in[0,1]$.
    \end{proof}

\subsection{Proof of final convergence}

\begin{proof}[Proof of Theorem \ref{thm_conv_main}: Second Part]By Lemma \ref{lem_app_trapping}, we know that after $t_1$, neurons in $\mathcal{S}_+$ ($\mathcal{S}_-$) stays in $\mathcal{S}_+$ ($\mathcal{S}_-$). Thus the loss can be decomposed as
\be\label{eq_app_L_decouple}
    \mathcal{L}=\underbrace{\sum_{i\in\mathcal{I}_+}\ell\lp y_i,\sum_{j\in\tilde{\mathcal{V}}_+}v_j\lan w_j,x_i\ran\rp }_{\mathcal{L}_+}+\underbrace{\sum_{i\in\mathcal{I}_-}\ell\lp y_i,\sum_{j\in\tilde{\mathcal{V}}_-}v_j\lan w_j,x_i\ran\rp}_{\mathcal{L}_-}\,,
\ee
where $\tilde{\mathcal{V}}_+:\{j: w_j(t_1)\in\mathcal{S}_+\}$ and $\tilde{\mathcal{V}}_-:\{j: w_j(t_1)\in\mathcal{S}_-\}$. Therefore, the training after $t_1$ is decoupled into 1) using neurons in $\tilde{\mathcal{V}}_+$ to fit positive data in $\mathcal{I}_+$ and 2) using neurons in $\tilde{\mathcal{V}}_-$ to fit positive data in $\mathcal{I}_-$. 

We define $f_+(x_i;W,v)=\sum_{j\in\tilde{\mathcal{V}}_+}v_j\lan w_j,x_i\ran$ and let $t_2^+=\inf\{t: \max_{i\in\mathcal{I}_+}|f_+(x_i;W,v)|>\frac{1}{4}\}$. Similarly, we also define $f_-(x_i;W,v)=\sum_{j\in\tilde{\mathcal{V}}_+}v_j\lan w_j,x_i\ran$ and let $t_2^-=\inf\{t: \max_{i\in\mathcal{I}_-}|f_-(x_i;W,v)|>\frac{1}{4}\}$. Then $t_1\leq \min\{t_2^+,t_2^-\}$, by Lemma \ref{lem_small_norm}. 

\textbf{$\mathcal{O}\lp 1/t\rp$ convergence after $t_2$}: We first show that when both $t_2^+,t_2^-$ are finite, then it implies $\mathcal{O}(1/t)$ convergence on the loss. Then we show that they are indeed finite and $t_2:=\max\{t_2^+,t_2^-\}=\mathcal{O}(\frac{1}{n}\log\frac{1}{\epsilon})$.

At $t_2=\max\{t_2^+,t_2^-\}$, by definition, $\exists i_+\in\mathcal{I}_+$ such that 
\be
    \frac{1}{4}\leq f_+(x_{i_+};W,v)\leq \sum_{j\in\tilde{\mathcal{V}}_+}v_j\lan w_j,x_{i_+}\ran\leq \sum_{j\in\tilde{\mathcal{V}}_+} \|w_j\|^2\|x_{i_+}\|\,,
\ee
which implies, by Lemma \ref{lem_app_mono_norm}, $\forall t\geq t_2$
\be\label{eq_app_w_p_sum_lb}
    \sum_{j\in\tilde{\mathcal{V}}_+}\|w_j(t)\|^2\geq \sum_{j\in\tilde{\mathcal{V}}_+}\|w_j(t_2)\|^2\geq \frac{1}{4\|x_{i_+}\|}\geq \frac{1}{4X_{\max}}\,.
\ee
Similarly, we have $\forall t\geq t_2$,
\be\label{eq_app_w_m_sum_lb}
\sum_{j\in\tilde{\mathcal{V}}_-}\|w_j(t)\|^2\geq\frac{1}{4X_{\max}}\,.
\ee
Under the gradient flow dynamics \eqref{eq_gf}, we apply Lemma \ref{lem_app_gpl} to the decomposed loss \eqref{eq_app_L_decouple}
\begin{align*}
    4\dot{\mathcal{L}}&\leq\;-\lp\sum_{j\in\tilde{\mathcal{V}}_+}v_j^2 \rp\cdot \mathcal{L}_+^2\cdot (\mu X_{\min})^2-\lp\sum_{j\in\tilde{\mathcal{V}}_+}v_j^2 \rp\cdot \mathcal{L}_-^2\cdot (\mu X_{\min})^2\,.
\end{align*}
Here, we can pick the same $\gamma=\mu X_{\min}$ for both $\mathcal{L}_+$ and $\mathcal{L}_-$ because $\{x_i,y_i\},i\in\mathcal{I}_+$ is linearly separable with $z=\frac{y_1x_1}{\|x_1\|}$: $\lan z,x_iy_i\ran\geq \mu\|x_i\|\geq \mu X_{\min}$ by Assumption \ref{assump_data}. And similarly, $\{x_i,y_i\},i\in\mathcal{I}_-$ is linearly separable with $\lan z,x_iy_i\ran\geq \mu\|x_i\|\geq \mu X_{\min}$. Replace $v_i^2$ by $\|w_j\|^2$ from balancedness, together with \eqref{eq_app_w_p_sum_lb}\eqref{eq_app_w_m_sum_lb}, we have
\begin{align*}
    4\dot{\mathcal{L}}&\leq\;-\lp\sum_{j\in\tilde{\mathcal{V}}_+}\|w_j\|^2 \rp\cdot \mathcal{L}_+^2\cdot (\mu X_{\min})^2-\lp\sum_{j\in\tilde{\mathcal{V}}_+}\|w_j\|^2 \rp\cdot \mathcal{L}_-^2\cdot (\mu X_{\min})^2\\
    &\leq\; -\frac{(\mu X_{\min})^2}{4X_{\max}} (\mathcal{L}_+^2+\mathcal{L}_-^2)\leq -\frac{(\mu X_{\min})^2}{8X_{\max}} (\mathcal{L}_++\mathcal{L}_-)^2=-\frac{(\mu X_{\min})^2}{8X_{\max}}\mathcal{L}^2\,,
\end{align*}
which is
\ben
    \frac{1}{\mathcal{L}^2} \dot{\mathcal{L}} \leq -\frac{(\mu X_{\min})^2}{32X_{\max}}\,.
\een
Integrating both side from $t_2$ to any $t\geq t_2$, we have
\ben
\left.\frac{1}{\mathcal{L}}\rvt_{t_2}^\top \leq-\frac{(\mu X_{\min})^2}{32X_{\max}}(t-t_2)\,,
\een
which leads to
\ben
    \mathcal{L}(t)\leq \frac{\mathcal{L}(t_2)}{\mathcal{L}(t_2)\alpha (t-t_2)+1}\,, \text{ where } \alpha =\frac{(\mu X_{\min})^2}{32X_{\max}}\,.
\een
\textbf{Showing $t_2=\mathcal{O}(\frac{1}{n}\log\frac{1}{\epsilon})$}: The remaining thing is to show $t_2$ is $\mathcal{O}(\frac{1}{n}\log\frac{1}{\epsilon})$. 

Since after $t_1$, the gradient dynamics are fully decoupled into two gradient flow dynamics (on $\mathcal{L}_+$ and on $\mathcal{L}_-$), it suffices to show $t_2^+=\mathcal{O}(\frac{1}{n}\log\frac{1}{\epsilon})$ and $t_2^-=\mathcal{O}(\frac{1}{n}\log\frac{1}{\epsilon})$ separately, then combine them to show $t_2=\max\{t_2^+,t_2^-\}=\mathcal{O}(\frac{1}{n}\log\frac{1}{\epsilon})$. The proof is almost identical for $\mathcal{L}_+$ and $\mathcal{L}_-$, thus we only prove $t_2^+=\mathcal{O}(\frac{1}{n}\log\frac{1}{\epsilon})$ here.

Suppose 
\be\label{eq_app_t2_assump}
    t_2\geq t_1+\frac{6}{\sqrt{\mu} n_+X_{\min}}+\frac{4}{\sqrt{\mu}n_+X_{\min}}\lp \log\frac{2}{\epsilon^2\sqrt{\mu}X_{\min}W^2_{\min}}+4nX_{\max}t_1\rp\,,
\ee
where $n_+=|\mathcal{I}_+|$. It takes two steps to show a contradiction: First, we show that for some $t_a\geq 0$, a refined alignment $\cos(w_j(t_1+t_a),x_+)\geq \frac{1}{4},\forall j\in\tilde{\mathcal{V}}_+$ is achieved, and such refined alignment is maintained until 
 at least $t_2^+$: $\cos(w_j(t),x_+)\geq \frac{1}{4},\forall j\in\tilde{\mathcal{V}}_+$ for all $t_1+t_a\leq t\leq t_2^+$. Then, keeping this refined alignment leads to a contradiction.
 \begin{itemize}[leftmargin=0.3cm]
     \item For $j\in\tilde{\mathcal{V}}_+$, we have
     \ben
        \frac{d}{dt} \frac{w_j}{\|w_j\|}= \lp I-\frac{w_jw_j^\top }{\|w_j\|^2}\rp\underbrace{\lp\sum_{i\in\mathcal{I}_+} -\nabla_{\hat{y}}\ell(y_i,f_+(x_i;W,v))x_i\rp}_{:=\tilde{x}_a}\,.
     \een
     Then 
     \begin{align*}
         \frac{d}{dt} \cos(x_+,w_j)&=\;\lp\cos(x_+,\tilde{x}_a)-\cos(x_+,w_j)\cos(\tilde{x}_a,w_j)\rp \|\tilde{x}_a\|\\
         &\geq\; \lp\cos(x_+,\tilde{x}_a)-\cos(x_+,w_j)\rp \|\tilde{x}_a\|\,.
     \end{align*}
     We can show that $\cos(x_+,\tilde{x}_a)\geq \frac{1}{3}$ and $\|\tilde{x}_a\|\geq \sqrt{\mu}n_+X_{\min}/2$ when $t_1\leq t\leq t_2^+$ (we defer the proof to the end as it breaks the flow), thus within $[t_1,t_2^+]$, we have
     \be\label{eq_app_ref_align}
        \frac{d}{dt} \cos(x_+,w_j)\geq \lp\frac{1}{3}-\cos(x_+,w_j)\rp\sqrt{\mu}n_+X_{\min}/2\,.
     \ee
     We use \eqref{eq_app_ref_align} in two ways: First, since 
     \ben
        \left.\frac{d}{dt} \cos(x_+,w_j)\right\rvert_{\cos(x_+,w_j)=\frac{1}{4}}\geq \frac{\sqrt{\mu}n_+X_{\min}}{24}>0\,,
     \een
     $\cos(x_+,w_j)\geq \frac{1}{4}$ is a trapping region for $w_j$ during $[t_1,t_2^+]$. Define $t_a:=\inf\{t\geq t_1: \min_{j\in\tilde{\mathcal{V}}_+} \cos(x_+,w_j(t))\geq \frac{1}{4}\}$, then clearly, if $t_a\leq t_2^+$, then $\cos(w_j(t),x_+)\geq \frac{1}{4},\forall j\in\tilde{\mathcal{V}}_+$ for all $t_1+t_a\leq t\leq t_2^+$.

     Now we use \eqref{eq_app_ref_align} again to show that $t_a\leq t_1+\frac{6}{\sqrt{\mu} n_+X_{\min}}$: Suppose that $t_a\geq t_1+\frac{6}{\sqrt{\mu} n_+X_{\min}}$, then $\exists j^*$ such that $\cos(x_+,w_{j^*}(t))< \frac{1}{4},\forall t\in[t_1,t_1+\frac{6}{\sqrt{\mu} n_+X_{\min}}]$, and we have
    \be
        \frac{d}{dt} \cos(x_+,w_{j^*})\geq \lp\frac{1}{3}-\cos(x_+,w_j)\rp\sqrt{\mu}n_+X_{\min}/2\geq \frac{\sqrt{\mu}n_+X_{\min}}{24}\,.
     \ee
     This shows
     \ben
        \cos(x_+,w_{j^*}(t_1+1))\geq \cos(x_+,w_{j^*}(t_1))+\frac{1}{4}\geq \frac{1}{4}\,,
     \een
     which contradicts that $\cos(x_+,w_{j^*}(t))< \frac{1}{4}$. Hence we know $t_a\leq t_1+\frac{6}{\sqrt{\mu} n_+X_{\min}}$.

     In summary, we have $\cos(w_j(t),x_+)\geq \frac{1}{4},\forall j\in\tilde{\mathcal{V}}_+$ for all $t_1+\frac{6}{\sqrt{\mu} n_+X_{\min}}\leq t\leq t_2^+$.
     \vspace{0.3cm}
     \item Now we check the dynamics of $\sum_{j\in\tilde{\mathcal{V}}_+}\|w_j(t)\|^2$ during $t_1+\frac{6}{\sqrt{\mu} n_+X_{\min}}\leq t\leq t_2^+$. For simplicity, we denote $t_1+\frac{6}{\sqrt{\mu} n_+X_{\min}}:=t_1'$.

     For $j\in\tilde{\mathcal{V}}_+$, we have, for $t_1'\leq t\leq t_2^+$,
     \begin{align*}
            \frac{d}{dt}\|w_j\|^2 &=\; 2\sum_{i\in\mathcal{I}_+}-\nabla_{\hat{y}}\ell(y_i,f(x_i;W,v))\lan x_i, w_j\ran \|w_j\|&\\
            &\geq\; \sum_{i\in\mathcal{I}_+}\lan x_i, w_j\ran \|w_j\|& (\text{by }\eqref{eq_app_conv_grad_bd})\\
            &=\; \lan x_+, w_j\ran \|w_j\|&\\
            &=\; \|x_+\|\|w_j\|^2\cos(x_+,w_j)&\\
            &\geq\; \frac{1}{4}\|x_+\|\|w_j\|^2&(\text{Since } t\geq t_1') \\
            &\geq\; \frac{\sqrt{\mu}n_+X_{\min}}{4}\|w_j\|^2\,, & (\text{by Lemma \ref{lem_app_x_a_lb}})
    \end{align*}
    which leads to (summing over $j\in\tilde{\mathcal{V}}_+$)
    \ben
        \frac{d}{dt}\sum_{j\in\tilde{\mathcal{V}}_+}\|w_j\|^2\geq \frac{\sqrt{\mu}n_+X_{\min}}{4}\sum_{j\in\tilde{\mathcal{V}}_+}\|w_j\|^2\,.
    \een
    By Gronwall's inequality, we have
    \begin{align*}
        &\;\sum_{j\in\tilde{\mathcal{V}}_+}\|w_j(t_2^+)\|^2&\\
        \geq &\; \exp\lp \frac{\sqrt{\mu}n_+X_{\min}}{4}(t_2^+-t_1')\rp \sum_{j\in\tilde{\mathcal{V}}_+}\|w_j(t_1')\|^2&\\
        \geq &\; \exp\lp \frac{\sqrt{\mu}n_+X_{\min}}{4}(t_2^+-t_1')\rp \sum_{j\in\tilde{\mathcal{V}}_+}\|w_j(t_1)\|^2& (\text{By Lemma \ref{lem_app_mono_norm}})\\
        \geq &\; \exp\lp \frac{\sqrt{\mu}n_+X_{\min}}{4}(t_2^+-t_1')\rp \exp\lp -4nX_{\max}t_1\rp\sum_{j\in\tilde{\mathcal{V}}_+}\|w_j(0)\|^2& (\text{By Lemma \ref{lem_app_norm_lb_t1}})\\
        \geq &\; \exp\lp \frac{\sqrt{\mu}n_+X_{\min}}{4}(t_2^+-t_1')\rp \exp\lp -4nX_{\max}t_1\rp \epsilon^2W_{\min}^2\geq \frac{2}{\sqrt{\mu}X_{\min}}\,.& (\text{by }\eqref{eq_app_t2_assump})
    \end{align*}
    However, at $t_2^+$, we have
    \begin{align*}
        \frac{1}{4}\geq \frac{1}{n_+}\sum_{i\in\mathcal{I}_+}f_+(x_i;W,v)&=\;\frac{1}{n_+}\sum_{i\in\mathcal{I}_+}\sum_{j\in\tilde{\mathcal{V}}_+}v_j\lan w_j,x_i\ran&\\
        &=\; \frac{1}{n_+}\sum_{j\in\tilde{\mathcal{V}}_+}v_j\lan w_j,x_+\ran*\\
        &=\;\frac{1}{n_+}\sum_{j\in\tilde{\mathcal{V}}_+}\|w_j\|^2\cos(w_j,x_+)\|x_+\|&\\
        &\geq\;\frac{1}{4n_+}\sum_{j\in\tilde{\mathcal{V}}_+}\|w_j\|^2\|x_+\|& (\text{Since } t\geq t_1')\\
        &\geq\;\frac{1}{4}\sum_{j\in\tilde{\mathcal{V}}_+}\|w_j\|^2\sqrt{\mu}X_{\min}\,, & (\text{by Lemma \ref{lem_app_x_a_lb}})
    \end{align*}
    which suggests $\sum_{j\in\tilde{\mathcal{V}}_+}\|w_j\|^2\leq \frac{1}{\sqrt{\mu}X_{\min}}$. A contradiction.
 \end{itemize}
 Therefore, we must have
 \be
    t_2^+\leq t_1+\frac{6}{\sqrt{\mu} n_+X_{\min}}+\frac{4}{\sqrt{\mu}n_+X_{\min}}\lp \log\frac{2}{\epsilon^2\sqrt{\mu}X_{\min}W^2_{\min}}+4nX_{\max}t_1\rp\,.
\ee
Since the dominant term here is $\frac{4}{\sqrt{\mu}n_+X_{\min}}\log\frac{2}{\epsilon^2\sqrt{\mu}X_{\min}W^2_{\min}}$, we have $t_2^+=\mathcal{O}(\frac{1}{n}\log\frac{1}{\epsilon})$. A similar analysis shows $t_2^-=\mathcal{O}(\frac{1}{n}\log\frac{1}{\epsilon})$. Therefore $t_2=\max\{t_2^+,t_2^-\}=\mathcal{O}(\frac{1}{n}\log\frac{1}{\epsilon})$

\textbf{Complete the missing pieces}
 We have two claims remaining to be proved. The first is $\cos(x_+,\tilde{x}_a)\geq\frac{1}{2}$ when $t_1\leq t\leq t_2^+$. Since $x_+=\sum_{i\in\mathcal{I}_+}x_i$ and $\tilde{x}_a=\sum_{i\in\mathcal{I}_+}-\nabla_{\hat{y}}\ell(y_i,f_+(x_i;W,v))x_i$. We simply use the fact that before $t_2^+$, we have, by Lemma \ref{assump_loss},
 \be\label{eq_app_conv_grad_bd}
    \frac{1}{2}\leq -\nabla_{\hat{y}}\ell(y_i,f_+(x_i;W,v))=\leq \frac{3}{2}\,,
 \ee
 to show the following
 \begin{align*}
     \cos(x_+,\tilde{x}_a)&=\;\frac{\lan x_+,\tilde{x}_a\ran}{\|x_+\|\|\tilde{x}_a\|}\\
     &=\; \frac{\sum_{i,j\in\mathcal{I}_+}(-\nabla_{\hat{y}}\ell(y_i,f_+(x_i;W,v)))\lan x_i,x_j\ran}{\sqrt{\sum_{i,j\in\mathcal{I}_+}\lan x_i,x_j\ran}\sqrt{\sum_{i,j\in\mathcal{I}_+}(-\nabla_{\hat{y}}\ell(y_i,f_+(x_i;W,v)))^2\lan x_i,x_j\ran}}\\
     &\geq \; \frac{\frac{1}{2}\sum_{i,j\in\mathcal{I}_+}\lan x_i,x_j\ran}{\sqrt{\sum_{i,j\in\mathcal{I}_+}\lan x_i,x_j\ran}\sqrt{\sum_{i,j\in\mathcal{I}_+}(-\nabla_{\hat{y}}\ell(y_i,f_+(x_i;W,v)))^2\lan x_i,x_j\ran}}\\
     &\geq \; \frac{\frac{1}{2}\sum_{i,j\in\mathcal{I}_+}\lan x_i,x_j\ran}{\sqrt{\sum_{i,j\in\mathcal{I}_+}\lan x_i,x_j\ran}\sqrt{\sum_{i,j\in\mathcal{I}_+}(\frac{3}{2})^2\lan x_i,x_j\ran}}\geq \frac{1}{3}\,,
 \end{align*}
 since all $\lan x_i,x_j\ran,i,j\in\mathcal{I}_+$ are non-negative.

 The second claim is $\|\tilde{x}_a\|\geq \sqrt{\mu}n_+ X_{\min}/2$ is due to that
 \ben
    \|\tilde{x}_a\|=\sqrt{\sum_{i,j\in\mathcal{I}_+}(-\nabla_{\hat{y}}\ell(y_i,f_+(x_i;W,v)))^2\lan x_i,x_j\ran}\geq \frac{1}{2}\sqrt{\sum_{i,j\in\mathcal{I}_+}\lan x_i,x_j\ran}=\frac{\|x_+\|}{2}\geq \frac{\sqrt{\mu}n_+X_{\min}}{2}\,,
 \een
 where the last inequality is from Lemma \ref{lem_app_x_a_lb}.
\end{proof}
\subsection{Proof of low-rank bias}
So far we have proved the directional convergence at the early alignment phase and final $\mathcal{O}(1/t)$ convergence of the loss in the later stage. The only thing that remains to be shown is the low-rank bias. The proof is quite straightforward but we need some additional notations.

As we proved above, after $t_1$, neurons in $\mathcal{S}_+$ ($\mathcal{S}_-$) stays in $\mathcal{S}_+$ ($\mathcal{S}_-$). Thus the loss can be decomposed as
\ben
    \mathcal{L}=\underbrace{\sum_{i\in\mathcal{I}_+}\ell\lp y_i,\sum_{j\in\tilde{\mathcal{V}}_+}v_j\lan w_j,x_i\ran\rp }_{\mathcal{L}_+}+\underbrace{\sum_{i\in\mathcal{I}_-}\ell\lp y_i,\sum_{j\in\tilde{\mathcal{V}}_-}v_j\lan w_j,x_i\ran\rp}_{\mathcal{L}_-}\,,
\een
where $\tilde{\mathcal{V}}_+:\{j: w_j(t_1)\in\mathcal{S}_+\}$ and $\tilde{\mathcal{V}}_-:\{j: w_j(t_1)\in\mathcal{S}_-\}$. Therefore, the training after $t_1$ is decoupled into 1) using neurons in $\tilde{\mathcal{V}}_+$ to fit positive data in $\mathcal{I}_+$ and 2) using neurons in $\tilde{\mathcal{V}}_-$ to fit positive data in $\mathcal{I}_-$. We use
\ben
    W_+=[W]_{:,\tilde{\mathcal{V}}_+},\quad W_-=[W]_{:,\tilde{\mathcal{V}}_-}
\een
to denote submatrices of $W$ by picking only columns in  $\tilde{\mathcal{V}}_+$ and $\tilde{\mathcal{V}}_-$, respectively. Similarly, we define
\ben
    v_+=[v]_{\tilde{\mathcal{V}}_+}, \quad v_-=[v]_{\tilde{\mathcal{V}}_-}
\een
for the second layer weight $v$. Lastly, we also define
\ben
    W_\text{dead}=[W]_{:,\tilde{\mathcal{V}}_\text{dead}}, v_\text{dead}=[v]_{\tilde{\mathcal{V}}_\text{dead}}\,,
\een
where $\tilde{\mathcal{V}}_\text{dead}:=\{j: w_j(t_1)\in\mathcal{S}_\text{dead}\}$. Given these notations, after $t_1$ the loss is decomposed as
\ben
    \mathcal{L}=\underbrace{\sum_{i\in\mathcal{I}_+}\ell\lp y_i,x_i^\top W_+v_+\rp }_{\mathcal{L}_+}+\underbrace{\sum_{i\in\mathcal{I}_-}\ell\lp y_i,x_i^\top W_-v_-\rp}_{\mathcal{L}_-}\,,
\een
and the GF on $\mathcal{L}$ is equivalent to GF on $\mathcal{L}_+$ and $\mathcal{L}_-$ separately. It suffices to study one of them. For GF on $\mathcal{L}_+$, we have the following important invariance~\cite{arora2018optimization} $\forall t\geq t_1$:
\ben
    W_+^\top(t)  W_+(t)-v_+(t)v_+^\top(t)=W_+^\top(t_1)  W_+(t_1)-v_+(t_1)v_+^\top(t_1)\,,
\een
from which one has
\begin{align*}
    \|W_+^\top(t)  W_+(t)-v_+(t)v_+^\top(t)\|_2&=\; \|W_+^\top(t_1)  W_+(t_1)-v_+(t_1)v_+^\top(t_1)\|_2\\
    &\leq \; \|W_+^\top(t_1)  W_+(t_1)\|_2-\|v_+(t_1)v_+^\top(t_1)\|_2\\
    &\leq\; \tr(W_+^\top(t_1)  W_+(t_1))+\|v_+(t_1)\|^2\\
    &=\;2\sum_{j\in\tilde{\mathcal{V}}_+}\|w_j(t_1)\|^2\leq \frac{4\epsilon W_{\max}^2}{\sqrt{h}}|\tilde{\mathcal{V}}_+|\,,
\end{align*}
where the last inequality is by Lemma \ref{lem_small_norm}. Then one can immediately get
\ben
    \|v_+(t)v_+^\top (t)\|_2-\|W_+^\top(t) W_+(t)\|_2\leq\|W_+^\top(t)  W_+(t)-v_+(t)v_+^\top(t)\|_2\leq  \frac{4\epsilon W_{\max}^2}{\sqrt{h}}|\tilde{\mathcal{V}}_+|\,,
\een
which is precisely
\be\label{eq_app_lrank_1}
    \|W_+(t)\|_F^2\leq \|W_+(t)\|_2^2+\frac{4\epsilon W_{\max}^2}{\sqrt{h}}|\tilde{\mathcal{V}}_+|\,.
\ee
Similarly, we have
\be\label{eq_app_lrank_2}
    \|W_-(t)\|_F^2\leq \|W_-(t)\|_2^2+\frac{4\epsilon W_{\max}^2}{\sqrt{h}}|\tilde{\mathcal{V}}_-|\,.
\ee
Lastly, one has
\be\label{eq_app_lrank_3}
\|W_{\text{dead}}\|^2_F=\sum_{j\in\tilde{\mathcal{V}}_{\text{dead}}}\|w_j(t_1)\|^2\leq \frac{4\epsilon W_{\max}^2}{\sqrt{h}}|\tilde{\mathcal{V}}_\text{dead}|
\ee
Adding \eqref{eq_app_lrank_1}\eqref{eq_app_lrank_2}\eqref{eq_app_lrank_3} together, we have
\begin{align*}
    \|W(t)\|^2_F&=\;\|W_+(t)\|_F^2+\|W_-(t)\|_F^2+\|W_{\text{dead}}\|^2_F\\
    &\leq\; \|W_+(t)\|_2^2+\|W_-(t)\|_2^2+\frac{4\sqrt{h}\epsilon W_{\max}^2}{\sqrt{h}}\leq 2\|W(t)\|_2^2+4\sqrt{h}\epsilon W_{\max}^2\,.
\end{align*}
Finally, since we have shown $\mathcal{L}\ra 0$ as $t\ra \infty$, then $\forall i\in[n]$, we have $\ell(y_i,f(x_i;W,v))\ra 0$. This implies
\ben
    f(x_i;W,v)=-\frac{1}{y_i}\log \ell(y_i,f(x_i;W,v))\ra \infty\,.
\een
Because we have shown that 
\ben
    f(x_i;W,v)\leq \sum_{j\in[h]}\|w_j\|^2\|x_i\|\leq \|W\|_F^2X_{\max}\,,
\een
$f(x_i;W,v)\ra \infty$ enforces $\|W\|_F^2\ra \infty$ as $t\ra \infty$, thus $\|W\|_2^2\ra \infty$ as well. This gets us
\ben
    \lim\sup_{t\ra \infty}\frac{\|W\|_F^2}{\|W\|_2^2}=2\,.
\een
\newpage
\section{Existence of Caratheodory Solution under Fixed Subgradient $\sigma'(x)=\one_{x>0}$}\label{app_cara}
In this Appendix, we first introduce the notion of solution we are interested in for the GF \eqref{eq_gf}: Caratheodory solutions that satisfy \eqref{eq_gf} for almost all time $t$. Next, in Appendix \ref{app_sec_exist_cara}, we show that if we fix the ReLU subgradient as $\sigma'(x)=\one_{x>0}$, then global Caratheodory solutions exists for \eqref{eq_gf} under Assumption \ref{assump_data}. Finally, we use simple examples to illustrate two points: 1)  Caratheodory solutions cease to exist when ReLU subgradient at zero is chosen to be a fixed non-zero value, highlighting the importance of choosing the right subgradient for analysis; 2) Caratheodory solutions are potentially non-unique, the neurons' dynamical behavior could become somewhat irregular if certain solutions are not excluded, justifying the introduction of regular solutions (Definition \ref{def_reg}).
\subsection{Caratheodory Solutions}
Given an differential equation \be\dot{\theta}=F(\theta),\theta(0)=\theta_0\,,\label{app_eq_ode_disc}\ee with $F$ potentially be discontinuous, $\theta(t)$ is said to be a Caratheodory solution of \eqref{app_eq_ode_disc} if it satisfies the following integral equation
\be
    \theta(t)=\theta_0+\int_0^tF(\theta(\tau))d\tau\,,\label{app_eq_cara_int}
\ee
for all $t\in[0,a)$, where $a\in \mathbb{R}_{\geq 0}\cup \infty$. In this section, we are interested in global Caratheodory solutions: $\theta(t)$ that satisfies \eqref{app_eq_cara_int} for all time $t\geq 0$.
\subsection{Proof of existence of Regular Caratheodory solutions under Assumption \ref{assump_data}}\label{app_sec_exist_cara}
In this section, we show the existence of global regular (Definition \ref{def_reg}) Caratheodory solutions to $\dot{\theta}=F(\theta),\theta(0)=\theta_0$, where $\theta:=\{W,v\}$ and $F:=\nabla_{W,v}\mathcal{L}$ defined from a fixed choice of ReLU subgradient $\sigma'(x)=\one_{x>0}$, under Assumption \ref{assump_data}. For the sake of a clear presentation, we first discuss the case of $\mathcal{S}_{\mathrm{dead}}=\emptyset$, where all solutions are regular. then discuss the modifications one needs to make when $\mathcal{S}_{\mathrm{dead}}\neq \emptyset$.

\textbf{Existence of Caratheodory solutions when $\mathcal{S}_{\mathrm{dead}}=\emptyset$}: First of all, notice that $\nabla_{W,v}\mathcal{L}$ is continuous almost everywhere except for a zero measure set $\mathcal{A}=\{W,v: \exists i\in[n],j\in[h]\ s.t. \lan x_i,w_j\ran=0\}$, since discontinuity only happens when one has to evaluate $\sigma'(\lan x_i,w_j\ran)$ at $\lan x_i,w_j\ran=0$ for some $i,j$. Being a finite union of hyperplanes, $\mathcal{A}$ has zero measure.

For points outside $\mathcal{A}$, the existence of a local solution is guaranteed by the generalized Caratheodory existence theorem in~\citet{persson1975} (We refer readers to Appendix \ref{app_sec_cara_pf} for the construction of such a local solution). The local solution can be extended to a global solution, as long as it does not encounter any point in $\mathcal{A}$ (the set where the flow is discontinuous). Whenever a point in $\mathcal{A}$ is reached, one requires extra certificates to extend the solution beyond that point.
\begin{figure}[!h]
    \centering
    \begin{minipage}{.45\textwidth}
      \centering
      \includegraphics[width=0.7\linewidth]{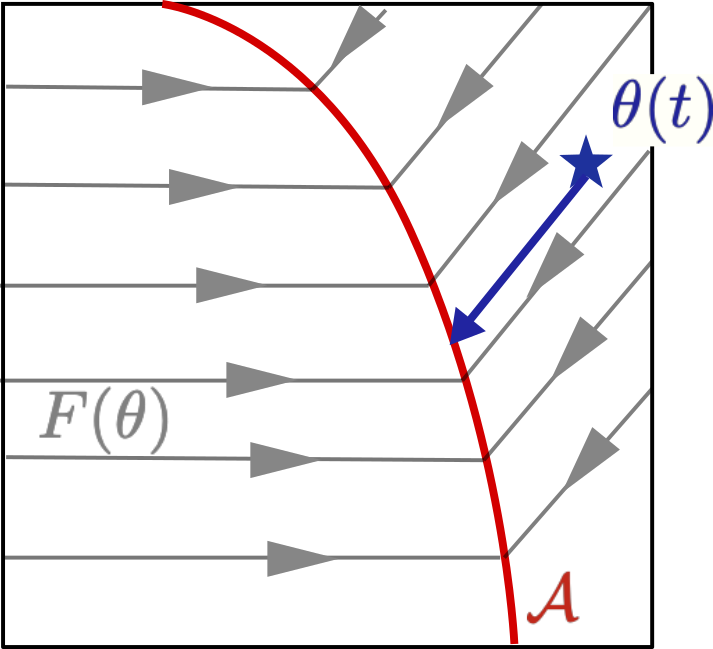}
      \captionof{figure}{Non-existence of Caratheodory solution around points of discontinuity (Does not happen under Assumption \ref{assump_data}). Since flow $F(\theta)$ is continuous The solution $\theta(t)$ can be extended until it reaches the points of discontinuity in $\mathcal{A}$, after which the solution is forced to stay within $\mathcal{A}$ (often referred as Zeno behavior: the solution is crossing $\mathcal{A}$ infinitely many times), and Caratheodory solution ceases to exist. }
      \label{fig_cara_disc_non_exist}
    \end{minipage}
    \hspace{0.1cm}
    \begin{minipage}{.45\textwidth}
      \centering
      \includegraphics[width=0.7\linewidth]{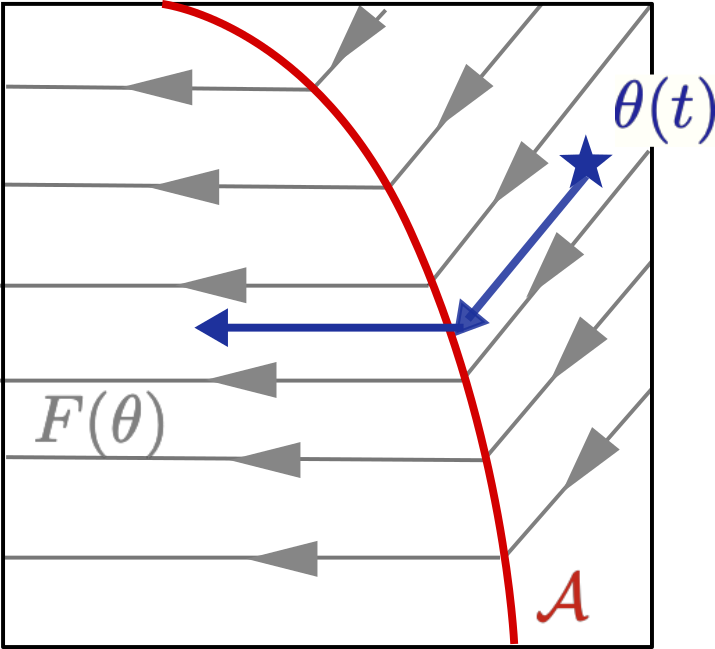}
      \captionof{figure}{Existence of Caratheodory solution around points of discontinuity (Guaranteed by Assumption \ref{assump_data}). When the solution $\theta(t)$ reaches $\mathcal{A}$, it immediately leaves $\mathcal{A}$ since the flow on the opposite side is flowing outward. This is a valid Caratheodory solution.}
      \vspace{1.4cm}
      \label{fig_cara_disc_exist}
    \end{minipage}
    \end{figure}
    Simply speaking, the existence of a local solution around a point in $\mathcal{A}$ requires that the flow around this point does not push trajectories towards $\mathcal{A}$ from both sides of the zero measure set, causing an infinite number of crossings of $\mathcal{A}$, called Zeno behavior~\citep{schaft2000,maennel2018gradient}. See Figure \ref{fig_cara_disc_non_exist} and \ref{fig_cara_disc_exist} for an illustration. In Appendix \ref{app_sec_cara_pf}, we formally show that if there is no Zeno behavior, then a solution can be extended until reaching discontinuity in $\mathcal{A}$, and gets extended by leaving $\mathcal{A}$ immediately.\footnote{ Strictly speaking, Appendix \ref{app_sec_cara_pf} is part of the proof but discussing the technical part right now disrupts the presentation.}

One sufficient condition for avoiding Zeno behavior is to show: For each hyperplane $\mathcal{A}_{ij}:=\{\lan x_i,w_j\ran=0\}$, all points in a neighborhood around this hyperplane $\mathcal{A}_{ij}$ must satisfy that the inner products between the normal vector of $\mathcal{A}_{ij}$ and the flow $F$ have the same sign. Formally speaking, we need that there exists $\delta>0$, such that for all pair of $\theta_k,\theta_l\in \{\theta=(W,v): 0<|\lan x_i,w_j\ran|<\delta\}$, we have $\lan\mathcal{N}_{\mathcal{A}_{ij}}, F(\theta_k)\ran\lan\mathcal{N}_{\mathcal{A}_{ij}}, F(\theta_l)\ran>0$, here $\mathcal{N}_{\mathcal{A}_{ij}}$ should be a fixed choice of the normal vector of hyperplane $\mathcal{A}_{ij}$.

This inner product $\lan\mathcal{N}_{\mathcal{A}_{ij}}, F(\theta_k)\ran$ between the normal vector and the flow is exactly computed as $\lan x_i,\nabla_{w_j}\mathcal{L}\ran$. Under Assumption \ref{assump_data}, we have a much stronger result than what is required in the last paragraph: we can show that on the entire parameter space, we have (shown in Appendix \ref{app_sec_cara_pf})
\be 
y_i\sign(v_j)\lan x_i,\nabla_{w_j}\mathcal{L}\ran>0\,,\label{app_eq_act_mono}
\ee
As such, since $v_j(t)$ does not change sign,  Assumption \ref{assump_data} prevents Zeno behavior and ensures the existence of local solution around points in $\mathcal{A}$. 

In summary, from any initialization, the Caratheodory solution can be extended~\citep{persson1975} until the trajectory encounters points of discontinuity in $\mathcal{A}$, then the existence of a local solution is guaranteed by ensuring that the flow forces the solution to leave $\mathcal{A}$ immediately. Moreover, \eqref{app_eq_act_mono} ensures that $\mathcal{A}$ can only be crossed a finite number of times (every hyperplane can only be crossed once), after which no discontinuity is encountered and the solution can be extended to $t=\infty$. Therefore a global Caratheodory solution always exists.

\textbf{Existence of Caratheodory solutions when $\mathcal{S}_{\mathrm{dead}}\neq\emptyset$}:
Notice that when $\mathcal{S}_{\mathrm{dead}}\neq\emptyset$. $\mathcal{A}$ contains boundary of $\mathcal{S}_{\mathrm{dead}}$. If the solution gets extended to $\mathcal{A}$ where one neuron lands on the boundary of $\mathcal{S}_{\mathrm{dead}}$, then this neuron stays at the boundary of $\mathcal{S}_{\mathrm{dead}}$, i.e. the solution stays at $\mathcal{A}$. Therefore, the previous argument about existence does not apply.

However, one only needs very a minor modification: If at time $t_0$, the solution enters $\mathcal{A}$ by having one neuron (say $w_j(t)$) land on the boundary of $\mathcal{S}_{\mathrm{dead}}$, set $w_j(t)\equiv w_j(t_0)$ and $v_j(t)\equiv v_j(t_0)$ for $t\geq t_0$, then exclude $\{w_j,v_j\}$ from the parameter space and continue constructing and extending local solutions for other parameters via the previous argument. This shows the existence of the Caratheodory solution under non-empty $\mathcal{S}_{\mathrm{dead}}$, and by our construction, the solution is regular.

\begin{figure}[!h]
    \centering
    \begin{minipage}{.45\textwidth}
      \centering
      \includegraphics[width=0.9\linewidth]{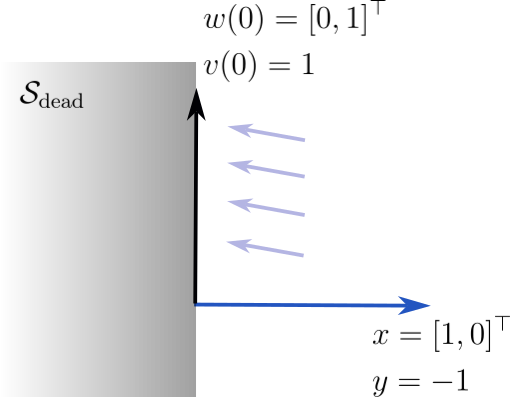}
      \captionof{figure}{Non-existence of Caratheodory solution under fixed ReLU subgradient $\sigma'(0)=a>0$. For any $t>0$, $w(t)$ cannot stay at $[0,1]^\top$, because the subgradient $\sigma'(0)=a>0$ is positive, leading to non-zero $\dot{w}$. At the same time, it cannot enter the interior of $\mathcal{S}_{\mathrm{dead}}$, because there is no flow in the interior.}
      \label{fig_cara_non_exist}
    \end{minipage}
    \hspace{0.1cm}
    \begin{minipage}{.45\textwidth}
      \centering
      \includegraphics[width=0.9\linewidth]{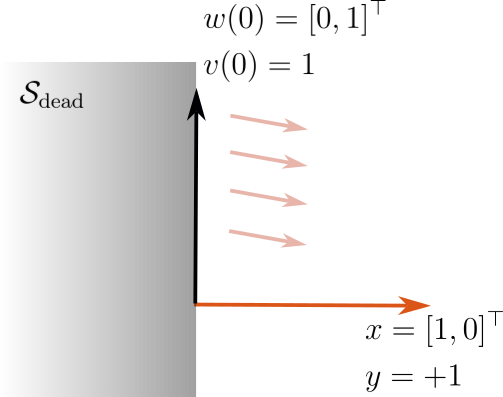}
      \captionof{figure}{Non-uniqueness of Caratheodory solutions. One solution is that $w(t)\equiv w(0),v(t)\equiv v(0)$, i.e. the neuron stays within $\mathcal{S}_{\mathrm{dead}}$. However, the neuron can leave $\mathcal{S}_{\mathrm{dead}}$ at any time $t_0>0$ then follow the flow outside $\mathcal{S}_{\mathrm{dead}}$, which is also a valid Caratheodory solution.}
      \label{fig_cara_non_unique}
    \end{minipage}
    \end{figure}
\subsection{Non-existence of Caratheodory solution under other fixed Subgradient}\label{app_sec_non_exist_cara}
Consider the following simple example: The training data consists of a single data point $x=[1,0]^\top$, $y=-1$, and the network consists of a single neuron $\{w,v\}$ initialized at $\{w(0)=[0,1]^\top,v(0)=1\}$. See Figure \ref{fig_cara_non_exist} for an illustration.

When the ReLU subgradient is chosen to be $\sigma'(x)=\one_{x>0}$, the Caratheodory solution $\{w(t)\equiv [0,1], v(t)\equiv 1\}$ exists, i.e. the neuron stays at the boundary of $\mathcal{S}_{\mathrm{dead}}:=\{w: \lan x,w\ran\leq 0\}$.

If the ReLU subgradient is chosen to be $\sigma'(0)=a>0$, then the Caratheodory solution ceases to exist: the neuron cannot stay at the boundary $\lan x,w\ran=0$ of $\mathcal{S}_{\mathrm{dead}}$, because the non-zero $\sigma'(0)$ pushes it towards the interior of $\mathcal{S}_{\mathrm{dead}}$. However, the neuron cannot enter the interior of $\mathcal{S}_{\mathrm{dead}}$ because the flow is all zero within the interior of $\mathcal{S}_{\mathrm{dead}}$. 

To see this formally, suppose $\{w(t)=w(0),v(t)=v(0)\}$ for $t\in[0,t_0]$ (neuron stay at $\{w(0),v(0)\}$), then by definition of Caratheodory solution, we have
\ben
    \int_0^{t_0}\nabla_{w,v}\mathcal{L}(w(0),v(0)) dt=t_0\nabla_{w,v}\mathcal{L}(w(0),v(0))=0\,,
\een
suggesting $\nabla_{w,v}\mathcal{L}(w(0),v(0))=0$, which is not true when $\sigma'(0)>0$, thus a contradiction. Now suppose $w(t_0)\in \mathrm{Int}(\mathcal{S}_{\mathrm{dead}})$ for some $t_0$, then it must be that $w(t)\in \mathrm{Int}(\mathcal{S}_{\mathrm{dead}}), \forall 0<t\leq t_0$, otherwise it leads to the same contradiction as in previous paragraph. By definition of Caratheodory solution, we have
\ben
    \int_0^{t_0}\nabla_{w,v}\mathcal{L}(w(t),v(t)) dt=w(t_0)-w(0)\,.
\een
The left-hand side is zero because $w(t)\in \mathrm{Int}(\mathcal{S}_{\mathrm{dead}})\Rightarrow \nabla_{w,v}\mathcal{L}(w(t),v(t))=0, \forall 0<t\leq t_0$. The right-hand side is non-zero because $w(t_0)\in \mathrm{Int}(\mathcal{S}_{\mathrm{dead}})$, thus a contradiction. Similarly, $w(t)$ cannot enter $\mathcal{S}_{\mathrm{dead}}^c$. Therefore, the Caratheodory solution $\{w(t),v(t)\}$ does not exist for any $t>0$.

\subsection{Non-uniqueness of Caratheodory solutions}\label{app_sec_non_unique_cara}
Consider the following simple example: The training data consists of a single data point $x=[1,0]^\top$, $y=1$, and the network consists of a single neuron $(w,v)$ initialized at $w(0)=[0,1]^\top,v(0)=1$. See Figure \ref{fig_cara_non_unique} for an illustration.

We consider the case when the ReLU subgradient is chosen to be $\sigma'(x)=\one_{x>0}$. There exists one Caratheodory solution $w(t)\equiv [0,1]^\top$, $v(t)\equiv 1$, i.e. the neuron stays at the boundary of $\mathcal{S}_{\mathrm{dead}}:=\{w: \lan x,w\ran\leq 0\}$. However, consider $\tilde{w}(t),\tilde{v}(t)$ being the solution to the following ode (the one that neuron follows once enters the positive orthant):
\be
    \dot{\tilde{w}}=y\exp(-y\tilde{v}\lan x,\tilde{w}\ran)\tilde{v}x,\  \dot{\tilde{v}}=y\exp(-y\tilde{v}\lan x,\tilde{w}\ran)\lan x,\tilde{w}\ran, \tilde{w}(0)=w(0),\tilde{v}(0)=v(0)\,.
\ee
Then for any $t_0\geq 0$, 
\ben 
w(t)=\one_{t< t_0}w(0)+\one_{t\geq t_0}\tilde{w}(t-t_0)\,,
v(t)=\one_{t< t_0}v(0)+\one_{t\geq t_0}\tilde{v}(t-t_0)
\een
is a Caratheodory solution. This example shows that the Caratheodory solution could be non-unique. 

This is somewhat troublesome for our analysis, one would like that all neurons in $\mathcal{S}_{\mathrm{dead}}$ stay within $\mathcal{S}_{\mathrm{dead}}$, but Caratheodory solutions do not have this property, and in fact, as long as the neuron is on the boundary of $\mathcal{S}_{\mathrm{dead}}$, and the flow outside $\mathcal{S}_{\mathrm{dead}}$ is pointing away from the boundary, the neuron can leave $\mathcal{S}_{\mathrm{dead}}$ at any time and it does not violate the definition of a Caratheodory solution. Therefore, for our main theorem, we added an additional regularity condition (Definition \ref{def_reg}) on the solution, forcing neurons to stay within $\mathcal{S}_{\mathrm{dead}}$.
\begin{remark}
    This issue of having irregular solutions is not specific to our choice of the notion of solutions. Even if one considers more generally the Filippov solution~\citet{filippov1971existence} of the differential inclusion in \eqref{eq_gf}, the same issue of non-uniqueness persists and needs attention when analyzing neuron dynamics.
\end{remark}
\begin{remark}
    Although irregular solutions are not desired for analyzing neuron behaviors, as we see in this example, they are rare cases under very specific initialization of the neurons and thus can be avoided by randomly initializing the weights.
\end{remark}

\subsection{Constructing Global Caratheodory Solution}\label{app_sec_cara_pf}
In this section, we formally show that if there is no Zeno behavior, then a solution can be extended until reaching discontinuity in $\mathcal{A}$, and gets extended by leaving $\mathcal{A}$ immediately, leading to a construction of global Caratheodory solution. The only ingredient that is needed is the existence theorem in~\citet[Theorem 2.3]{persson1975}, showing that if $F(\theta)$ is continuous and $\forall \theta$
\be
    \|F(\theta)\|_F\leq M(1+\|\theta\|_F)\,,
\ee
for some $M>0$, then global solution of $\dot{\theta}=F(\theta)$ exists. Obviously, this result cannot be applied directly for two reasons: a) it requires continuity of the flow; b) it requires linear growth of $\|F(\theta)\|_F$ w.r.t. $\|\theta\|_F$. The key idea is constructing a local solution by restricting the flow to a neighborhood of initial conditions where a) and b) are satisfied, and then extending this solution to a global one. 

As we discussed in Appendix \ref{app_sec_exist_cara}, we can assume $\mathcal{S}_{\mathrm{dead}}=\emptyset$ without loss of generality. Moreover, it suffices to show that starting from an initialization $\theta(0)=\{W(0),v(0)\}$ outside $\mathcal{A}$\footnote{initial condition within $\mathcal{A}$ is taken care of by 2).}, we can construct either: 1) a global solution without encountering any point in $\mathcal{A}$; or 2) a local solution that lands on $\mathcal{A}$ at some $t_0$ then leave $\mathcal{A}$ immediately. Because if 2) happens, we take the end of this local solution as a new initial condition and repeat this argument. Importantly, 2) cannot happen infinitely many times because we have shown in Appendix \ref{app_sec_exist_cara} that $\mathcal{A}$ can only be crossed finitely many times, thus 1) must happen, resulting in a global solution.

\textbf{Construct local solution from initial condition}: Now given an initial condition $\theta(0)=\{W(0),v(0)\}$, define the following two sets (Notation-wise, we drop the dependency on $\{W(0),v(0)\}$ for simplicity):
\begin{align*}
    &\Theta_0:=\{\theta=(W,v): \mathcal{L}(W,v)\leq \mathcal{L}(W(0),v(0)), \sign(v_j)=\sign(v_j(0)),\forall j\in[h] \}\,,\\
    &\Theta_1:=\{\theta=(W,v): \forall i\in[n], j\in[h], \lan x_i,w_j\ran \lan x_i,w_j(0)\ran>0\}\,,
\end{align*}
$\Theta_1$ is the positive invariant set of $\{W(0),v(0)\}$: all solutions from $\{W(0),v(0)\}$ never leaves $\Theta_1$, so it suffices to study the flow within $\Theta_1$ for the existence of solutions. Moreover, $\Theta_0$ is the intersection of a closed set $\{v:\sign(v_j)=\sign(v_j(0))\}$ and the pre-image of a continuous function $\mathcal{L}$ on the range $[0,\mathcal{L}(W(0),v(0))]$ thus closed. $\Theta_2$ is the largest connected set that contains $\{W(0),v(0)\}$ without point of discontinuity.

Consider the following set
\be
    \tilde{\Theta}_{1}:=\Theta_0\cap \mathrm{cl}(\Theta_1)\,.
\ee
Then $\tilde{\Theta}_{1}$ is closed. Consider a new flow $F^{\mathrm{cl}}_{1}$ on $\tilde{\Theta}_{1}$ such that  $F^{\mathrm{cl}}_{1}=F=\nabla_{W,v}\mathcal{L}$ for all $\theta\in \mathrm{Int}(\tilde{\Theta}_{1})$, and $F^{\mathrm{cl}}_{1}(\theta)=\lim_{k\ra \infty} F(\theta_k)$ for all $\theta\in \tilde{\Theta}\setminus \mathrm{Int}(\tilde{\Theta}_{1})$, where $\theta_k\in\mathrm{Int}(\tilde{\Theta}_{1}),k=1,2,\cdots$ is a convergent sequence to $\theta$.

$F^{\mathrm{cl}}_{1}|_{\tilde{\Theta}_{1}}$ is continuous by construction, and we can show that (at the end of this section) 
\be
    \|F^{\mathrm{cl}}_{1}(\theta)\|_F\leq C\|\theta\|_F\,,\forall \theta\in\tilde{\Theta}_{1}\label{app_eq_f_cl_1_bd}\,.
\ee
By a generalized version of the Tietze extension theorem~\citep{ercan1997}, there exists continuous $\tilde{F}_1$ on the entire parameter space, such that \be F^{\mathrm{cl}}_{1}(\theta)=\tilde{F}_1(\theta),\forall\theta\in \tilde{\Theta}_{1}\,,\ee
and
\be
    \|\tilde{F}_1(\theta)\|_F\leq C\|\theta\|_F, \forall \theta\,.
\ee 
Because now $F=F^{\mathrm{cl}}_{1}=\tilde{F}_1$ on $\mathrm{Int}(\tilde{\Theta}_{1})$, any solution $\tilde{\theta}_1(t)$ of $\dot{\tilde{\theta}}_1=\tilde{F}_1(\tilde{\theta}_1), \tilde{\theta}_1(0)=\theta(0)$ (existence guaranteed by~\citet{persson1975}) gives a local solution of $\dot{\theta}=F(\theta),\theta(0)=\theta(0)$, for $t\leq t_0$, where $t_0:=\inf\{t: \tilde{\theta}_1(t)\notin \tilde{\Theta}_{1}\}$.

If $t_0=\infty$, one has a global solution and the construction is finished. If $t_0<\infty$, it must be that $\tilde{\theta}_1(t_0)\in \mathcal{A}$ (since $\tilde{\theta}_1$ must leave $\tilde{\Theta}$ via the boundary of $\Theta_2$). Now we need to construct a solution that leaves $\mathcal{A}$ immediately.

\textbf{Construct local solution that leaves $\mathcal{A}$}: As we discussed, $\mathcal{A}$ is a union of hyperplanes. For simplicity, let us assume $\tilde{\theta}(t_0)$ is not at the intersection of two hyperplanes (the treatment is similar but tedious, we will make remarks in the end). 

Now $\tilde{\theta}(t_0)$ lands on a single hyperplane, let it be $\{\theta: \lan x_{i^*},w_{j^*}\ran=0\}$, we define
\begin{align*}
    \Theta_2:=\{\theta=(W,v):&\;\forall i\neq i^*, j\neq j^*, \lan x_i,w_j\ran \lan x_i,w_j(0)\ran>0\,,\\
    &\; \qquad\qquad \text{and }\lan x_{i^*},w_{j^*}\ran \lan x_{i^*},w_{j^*}(0)\ran<0\}\,,    
\end{align*}
and we let
\be
    \tilde{\Theta}_2:=\Theta_0\cap \mathrm{cl}(\Theta_2)\,,
\ee
It is clear that, from the definition of $\Theta_2$, any solution we construct that leaves $\mathcal{A}$ immediately after $t_0$ must enter $\mathrm{Int}(\tilde{\Theta}_2)$. To construct the solution, we just need to repeat the first part, but now for 
$\tilde{\Theta}_2$: We construct $F^{\mathrm{cl}}_{2}$ that is continuous on $\tilde{\Theta}_2$ and agrees with $F$ on the interior, then extends $F^{\mathrm{cl}}_{2}$ to $\tilde{F}$ on the entire parameter space. Consider the solution $\tilde{\theta}_2(t)$ of
\be
    \dot{\tilde{\theta}}_2=F(\tilde{\theta}_2),\tilde{\theta}_2(0)=\tilde{\theta}_1(t_0)\,,
\ee
gives a local solution of
\be
    \dot{\theta}=F(\theta),\theta(0)=\tilde{\theta}_1(t_0)\,.
\ee
Because we have shown that Zeno behavior does not happen, $\tilde{\theta}_2(t)$ leaves $\mathcal{A}$ immediately and enters $\mathrm{Int}(\tilde{\Theta}_2)$. We just pick any $\tau_0>0$ such that $\tilde{\theta}_2(\tau_0)\in \mathrm{Int}(\tilde{\Theta}_2)$ then
\be
    \theta(t)=\one_{t\leq t_0}\tilde{\theta}_1(t)+\one_{t_0<t\leq t_0+\tau_0}\tilde{\theta}_2(t-t_0)\,,
\ee
is a Caratheodory solution to $\dot{\theta}=F(\theta),\theta(0)=\theta(0)$ for $t\leq t_0+\tau_0$. This is exactly what we intended to show.
\begin{remark}
    When $\tilde{\theta}(t_0)$ lands at the intersection of two (or more) hyperplanes, the only difference is that now there could be more regions to escape to. But under Assumption \ref{assump_data}, \eqref{app_eq_act_mono} suggests that the solution must cross all hyperplanes after $t_0$, leaving one unique region similar to $\Theta_2$. Then one constructs the local solution following previous procedures.
\end{remark}

\textbf{Complete the missing pieces}
To complete the proof, there are two statements (\eqref{app_eq_act_mono} and \eqref{app_eq_f_cl_1_bd}) left to be shown. 

To show \eqref{app_eq_act_mono}, we start from the derivative
\begin{align*}
    \nabla_{w_j}\mathcal{L}&=\;-\sum_{k=1}^n\one_{\lan x_k,w_j\ran>0}\nabla_{\hat{y}}\ell(y_k,f(x_k;W,v))x_k\sign(v_j(0))\|w_j\|\,,\\
    &=\;\sum_{k=1}^n\one_{\lan x_k,w_j\ran>0}y_k\exp(-y_kf(x_k;W,v))x_k\sign(v_j(0))\|w_j\|\,,
\end{align*}
and we have
\begin{align*}
    y_i\sign(v_j)\lan x_i,\nabla_{w_j}\mathcal{L}\ran&=\;\sum_{k=1}^n\one_{\lan x_k,w_j\ran>0}\exp(-y_kf(x_k;W,v))\lan y_ix_i, y_kx_k\ran \|w_j\|\\
    &\geq \;\sum_{k=1}^n\one_{\lan x_k,w_j\ran>0}\exp(-y_kf(x_k;W,v))\mu\|x_k\|\|x_i|\|w_j\|>0\,,
\end{align*}
since there is at least one summand ($\mathcal{S}_{\mathrm{dead}}=\emptyset$), the summation is always positive.

To show \eqref{app_eq_f_cl_1_bd}\footnote{We show it for exponential loss, the case of logistic loss is similar}, we first consider $\theta\in \mathrm{Int}(\tilde{\Theta}_{1})$, and we have
\begin{align*}
    \sum_{j=1}^h\|\nabla_{w_j}\mathcal{L}\|^2&=\;\sum_{j=1}^h\lV\sum_{k=1}^n\one_{\lan x_k,w_j\ran>0}y_k\exp(-y_kf(x_k;W,v))x_kv_j\rV^2\\
    &\leq\; \sum_{j=1}^h\lp\sum_{k=1}^n\exp(-y_kf(x_k;W,v))\|x_k\||v_j|\rp^2\\
    &\leq\; \sum_{j=1}^h |v_j|^2\cdot \lp X_{\mathrm{max}} \sum_{k=1}^n\exp(-y_kf(x_k;W,v))\rp^2\\
    &=\; \sum_{j=1}^h |v_j|^2\cdot \lp X_{\mathrm{max}}\mathcal{L}(W,v)\rp^2\\
    &\leq\; \sum_{j=1}^h |v_j|^2\cdot \lp X_{\mathrm{max}} \mathcal{L}(W(0),v(0))\rp^2=X^2_{\mathrm{max}}\mathcal{L}^2(W(0),v(0))\|v\|^2\,,
\end{align*}
similarly, we also have
\begin{align*}
    \sum_{j=1}^h\|\nabla_{v_j}\mathcal{L}\|^2&=\;\sum_{j=1}^h\lV\sum_{k=1}^n\one_{\lan x_k,w_j\ran>0}y_k\exp(-y_kf(x_k;W,v))\lan x_k, w_j\ran\rV^2\\
    &\leq\; \sum_{j=1}^h\lp\sum_{k=1}^n\exp(-y_kf(x_k;W,v))\|x_k\|\|w_j\|\rp^2\\
    &\leq\; \sum_{j=1}^h \|w_j\|^2\cdot \lp X_{\mathrm{max}} \sum_{k=1}^n\exp(-y_kf(x_k;W,v))\rp^2\\
    &=\; \sum_{j=1}^h \|w_j\|^2\cdot \lp X_{\mathrm{max}}\mathcal{L}(W,v)\rp^2\\
    &\leq\; \sum_{j=1}^h \|w_j\|\cdot \lp X_{\mathrm{max}} \mathcal{L}(W(0),v(0))\rp^2=X^2_{\mathrm{max}}\mathcal{L}^2(W(0),v(0))\|W\|_F^2\,.
\end{align*}
Therefore, we have $\forall \theta\in \mathrm{Int}(\tilde{\Theta}_{1})$
\begin{align*}
    \|F^{\mathrm{cl}}_{1}(\theta)\|_F^2&=\;\|F(\theta)\|_F^2=\sum_{j=1}^h(\|\nabla_{w_j}\mathcal{L}\|^2+\|\nabla_{v_j}\mathcal{L}\|^2)\\
    &\leq\; X_{\max}^2\mathcal{L}^2(W(0),v(0))(\|W\|_F^2+\|v\|^2)=X_{\max}^2\mathcal{L}^2(W(0),v(0))\|\theta\|^2_F\,,
\end{align*}
which gives \eqref{app_eq_f_cl_1_bd} with $C=X_{\max}\mathcal{L}(W(0),v(0))$. 

Then for $\theta\in \tilde{\Theta}\setminus \mathrm{Int}(\tilde{\Theta}_{1})$, $\|F^{\mathrm{cl}}_{1}(\theta)\|=\lim_{k\ra \infty} \|F(\theta_k)\|\leq C\lim_{k\ra \infty}\|\theta_k\|=C\|\theta\|$, given some Cauchy sequence $\theta_k\in\mathrm{Int}(\tilde{\Theta}_{1}),k=1,2,\cdots$ convergent to $\theta$. This finishes proving \eqref{app_eq_f_cl_1_bd}.

\newpage
\section{Extend main results to solutions to differential inclusion}\label{app_diff}

For~\cite{filippov1971existence} solutions (regular according to Definition \ref{def_reg}) to the differential inclusion~\eqref{eq_gf},  our Theorem \ref{thm_conv_main} remains the same. The only difference is that the notion of $x_a(w)$ in \eqref{eq_dir_flow} is no longer a singleton, but rather an element from a set:
\be
    x_a(w)\in\lb\sum\nolimits_{i}\sigma'(\lan x_i,w\ran)y_ix_i\rb\,,
\ee
where $\sigma'(\lan x_i,w\ran)$ is a subgradient of ReLU activation $\sigma(z)$ at $z=\lan x_i,w\ran$. Therefore, the proof of Theorem \ref{thm_conv_main} shall be modified (which can be done) to consider all possible choices of $x_a(w)$.

In the case of $\left.\sigma'(z)\rvt_{z=0}=0$, $x_a(w)$ become a singleton $\sum\nolimits_{i: \lan x_i,w\ran> 0}y_ix_i$, which simplifies our discussions. This is the main reason we opt to fix this subgradient $\sigma'(z)$ in the main paper.

\end{document}